\newcommand{\issue}[1]{{\color{red}#1}}
\def\@fnsymbol#1{\ensuremath{\ifcase#1\or  \natural \or \dagger\or * \or \ddagger\or
   \mathsection\or \mathparagraph\or \|\or **\or \dagger\dagger
   \or \ddagger\ddagger \else\@ctrerr\fi}}
\title{\LARGE  Policy Optimization Provably Converges to Nash \\ Equilibria in Zero-Sum Linear Quadratic Games}
\begin{document}
\author{Kaiqing Zhang\thanks{Department of Electrical and Computer Engineering \&  Coordinated Science Laboratory, University of Illinois at Urbana-Champaign.} \and Zhuoran Yang\thanks{Department of Operations Research and Financial Engineering, Princeton University.} \and Tamer Ba\c{s}ar$^\natural$}
\date{}
\maketitle

\begin{abstract} 
We study  the global convergence of policy optimization for finding the Nash equilibria (NE) in  zero-sum linear quadratic (LQ) games.  
To this end, we first investigate the landscape of LQ games, viewing it as a nonconvex-nonconcave saddle-point  problem in the policy space. Specifically, we show that despite its nonconvexity and nonconcavity, zero-sum LQ games have the property that the stationary point of the objective function  with respect to the linear  feedback control policies constitutes the  NE of the game. Building upon this, we develop three projected \emph{nested-gradient}  methods that are guaranteed to converge to the NE  of the game. Moreover, we show that all of these algorithms enjoy both   globally sublinear and locally  linear convergence rates. Simulation results are also provided to illustrate the satisfactory convergence properties of the algorithms.  
To the best of our knowledge, this work appears to be the first one to investigate the optimization landscape of LQ games, and provably show the convergence of policy optimization  methods to the Nash equilibria. 
Our work serves 
as an initial step toward  understanding the  
theoretical aspects of 
policy-based reinforcement learning algorithms for  zero-sum Markov games in general. 
 
\end{abstract}


\section{Introduction}
Reinforcement learning (RL)  \citep{sutton2018reinforcement} 
has achieved sensational   progress recently in several prominent   decision-making problems, e.g., playing the game of Go \citep{silver2016mastering, silver2017mastering} and playing  real-time strategy games  \citep{OpenAI_dota,alphastarblog}. 
 Interestingly, all of these problems can be formulated  as zero-sum Markov  games involving  two opposing  players  or teams. Moreover, 
 their algorithmic frameworks  are all based upon \emph{policy optimization} (PO)   methods  such as actor-critic \citep{konda2000actor} and proximal policy optimization (PPO) \citep{schulman2017proximal}, where the policies are parametrized and   iteratively updated.
 Such popularity of PO methods are mainly attributed to the facts that: (i) they are easy to implement and can 
handle high-dimensional and continuous action spaces; (ii) they  can readily incorporate advanced optimization results to facilitate the algorithm design \citep{schulman2017proximal,schulman2015trust,mnih2016asynchronous}.  Moreover, empirically, some observations have shown that PO methods usually converge faster than value-based ones   \citep{mnih2016asynchronous,o2016combining}.

 In contrast to the tremendous empirical  success,  theoretical understanding of policy optimization methods for the multi-agent RL settings \citep{littman1994markov, hu2003nash,conitzer2007awesome, perolat2016use,zhang2018fully,zhang18cdc}, especially the  zero-sum Markov game setting, lags behind.  
Although the convergence of policy optimization algorithms to \emph{locally  optimal}  policies   has been   established in the  classical RL setting  with a \emph{single-agent/player}  \citep{sutton2000policy,  konda2000actor, kakade2002natural,schulman2017proximal,papini2018stochastic,zhang2019policy}, extending those theoretical guarantees to 
\emph{Nash equilibrium} (NE) policies, a common solution concept in game theory  also known as the saddle-point equilibrium (SPE) in the zero-sum setting \citep{bacsar2008h},   suffers from the following two caveats.

First, since the players simultaneously determine their actions in the games, the decision-making  problem faced by each player becomes non-stationary. As a result,  single-agent algorithms fail    to work due to  lack of Markov property \citep{hernandez2017survey}. Second,  with parametrized  policies, the   policy optimization for finding NE in a function space is reduced to solving for NE in the policy parameter space, where the underlying game is in general nonconvex-nonconcave. 
Since nonconvex optimization problems are NP-hard \citep{murty1987some} in the worst case, so is finding  NE in  nonconvex-nonconcave saddle-point problems \citep{chen2017robust}. In fact, it has been showcased recently that  vanilla gradient-based algorithms might have cyclic behaviors and fail to converge to any NE \citep{balduzzi2018mechanics, mazumdar2018convergence,adolphs2018local} in both zero-sum and general-sum games.
 

 As an initial attempt in merging the gap between theory and practice, we study the performance of PO methods  on a simple but  quintessential  example of  zero-sum Markov games, namely, zero-sum  linear quadratic (LQ) games.  In LQ games, the system evolves following linear dynamics controlled by both players, while the cost function is quadratically dependent on the states and joint control actions.  Zero-sum LQ games find broad applications in $\cH_{\infty}$-control for   robust control synthesis \citep{bacsar2008h,zhang2018policymixed}, and risk-sensitive control \citep{jacobson1973optimal,whittle1981risk}. 
 In fact, such an LQ setting can be used for studying general  continuous control problems with adversarial disturbances/opponents, by linearizing  the  system of interest  around the operational point \citep{bacsar2008h}. Therefore, developing theory for the LQ setting may provide some insights into the \emph{local} property of the general control settings.     
 Our study is pertinent to the recent efforts on policy optimization for linear quadratic regulator (LQR) problems \citep{fazel2018global,malik2018derivative, tu2018gap}, a single-player    counterpart of  LQ games. As to be shown later, 
 LQ games are more challenging to solve using PO methods, since they are not only nonconvex in the policy space for one player (as LQR), but also nonconcave for the other. Compared to PO for LQR, such nonconvexity-nonconcavity has caused technical difficulties in showing the stabilizing properties along the iterations, an essential requirement for the iterative PO algorithms to be feasible. Additionally,  in contrast to the recent non-asymptotic analyses on gradient  methods for nonconvex-nonconcave saddle-point problems  \citep{nouiehed2019solving}, the objective function lacks smoothness in LQ games, as the main challenge identified in \citep{fazel2018global} for LQR.


To address these technical challenges, we first investigate the optimization   landscape  of LQ games, showing that the stationary point of the objective function constitutes the NE of the game, despite  its nonconvexity and nonconcavity. We then propose three projected \emph{nested-gradient} methods, which separate the updates into two loops with both gradient-based iterations. Such a nested-loop update mitigates the inherent non-stationarity of learning in games. The projection  ensures the stabilizing property of the control along the iterations. The algorithms are guaranteed to converge to the NE, with provably globally sublinear and locally  linear rates.


%

\vspace{8pt}
{\noindent \bf Related Work.}
There is a huge body of literature on applying \emph{value-based} methods to solve  zero-sum Markov games; see, e.g, \citep{littman1994markov, lagoudakis2002value, conitzer2007awesome, perolat2016use, zhang2018finite,  zou2019finite} and the references therein. 
Specially, for the linear quadratic setting, \cite{al2007model} proposed a Q-learning approximate dynamic programming approach.   
In contrast, the study of PO methods for zero-sum Markov games is limited, which are either empirical  without any theoretical guarantees \citep{pinto2017robust}, or developed  only  for the tabular setting  \citep{bowlingrational,banerjee2003adaptive, perolat2018actor,srinivasan2018actor}. 
Within the LQ setting,   our work is related to  the recent work on the global convergence of policy gradient (PG) methods for LQR \citep{fazel2018global,malik2018derivative}. However, our setting is  more challenging since it concerns a saddle-point problem with not only nonconvexity on the minimizer, but also nonconcavity on the maximizer. 

Our work also falls into the realm of solving \emph{nonconvex-(non)concave saddle-point} problems \citep{cherukuri2017saddle,rafique2018non,daskalakis2018limit,mertikopoulos2019optimistic,mazumdar2019finding,jin2019minmax}, which has recently drawn great attention due to the popularity of training generative  adversarial networks (GANs) \citep{heusel2017gans,nagarajan2017gradient,rafique2018non,lu2018understand}. 
However, most of the existing results are either for the nonconvex but concave minimax setting  \citep{grnarova2017online,rafique2018non,lu2018understand}, or only have \emph{asymptotic} convergence results \citep{cherukuri2017saddle,heusel2017gans,nagarajan2017gradient,daskalakis2018limit,mertikopoulos2019optimistic}. 
Two recent pieces of  results on non-asymptotic analyses for solving this problem have been  established under strong assumptions that the objective function is either  weakly-convex and weakly-concave \citep{lin2018solving}, or smooth \citep{sanjabi2018solving,nouiehed2019solving}. However, LQ games  satisfy neither of these  assumptions. 
In addition, even asymptotically, basic gradient-based approaches may not  converge to (local) Nash equilibria \citep{mazumdar2019finding,jin2019minmax}, not even  to stationary points, due to the oscillatory behaviors \citep{mazumdar2018convergence}.
In contrast to \cite{mazumdar2019finding,jin2019minmax}, our results show the \emph{global convergence} to \emph{actual NE} (instead of any surrogate as \emph{local minimax} in \cite{jin2019minmax}) of the game. 

\vspace{8pt}
\noindent \textbf{Contribution.}
Our contribution is two-fold:  i) we investigate the optimization landscape of zero-sum LQ games in the parametrized feedback control policy space, showing its desired    property that  stationary points constitute the Nash equilibria; ii) we develop  projected nested-gradient methods that are proved to converge to the NE with globally sublinear and locally linear rates. We also provide several interesting simulation findings on solving this problem with PO methods.  To the best of our knowledge,  for the first time, policy-based methods are shown to converge to the \emph{global Nash equilibria} in a class of zero-sum Markov  games, and also with convergence rate guarantees. 

%
%
%

%
%
%
%

\vspace{8pt}
{\noindent \bf Notation.}
For any vector $x\in\RR^n$ and matrix $Y\in\RR^{m\times n}$, 
we use $\|x\|$, $\|Y\|$, and $\|Y\|_F$ to denote the Euclidean norm of $x$,  the induced  $2$-norm,  and the  Frobenius norm of $Y$, respectively. We use $\vect(Y)\in\RR^{mn}$ to denote the vectorization of the matrix $Y$.
For any symmetric  matrix $M\in\RR^{n\times n}$, we use $M\geq 0$  and $M> 0$ to denote the nonnegative-definiteness and positive 
definiteness of $M$, respectively. 
For any set $\cS$, we use $\cS^c$ to denote the complement set of $\cS$. For any square matrix $A$, we use $\rho(A)$ to denote its spectral radius, i.e., the largest absolute value of its  eigenvalues, of matrix $A$.  For any matrix $M\in\RR^{m\times n}$, we use $\sigma_{\min}(M)$ and $\sigma_{\max}(M)$ to denote its smallest and largest singular values, respectively. 
For any real symmetric matrix $M\in\RR^{n\times n}$, we use $\lambda_{\min}(M)$ and $\lambda_{\max}(M)$ to denote its smallest and largest eigenvalues, respectively. 
We use $\otimes$ to denote the Kronecker product. For any positive integer $m$, we use $[m]$ to denote the set of integers $\{1,\cdots,m\}$.
We use 
$\Ib$ to denote the identity matrix with proper dimensions.

\section{Background}
Consider a zero-sum LQ game, where the system dynamics   are  characterized by a    linear dynamical system 
\$
x_{t+1}=Ax_t+Bu_t+Cv_t, 
\$
where the system state is  $x_t\in\RR^d$, the control inputs  of players $1$ and $2$ are   
$u_t\in\RR^{m_1}$ and $v_t\in\RR^{m_2}$, respectively. The matrices satisfy $A\in\RR^{d\times d}$, $B\in\RR^{d\times m_1}$, and $C\in\RR^{d\times m_2}$. The objective of  player $1$ (player $2$) is to minimize (maximize) the infinite-horizon value function,
\#\label{equ:minimax_def}
\inf_{\{u_t\}_{t\geq 0}}\sup_{\{v_t\}_{t\geq 0}}\quad\EE_{x_0\sim\cD}\bigg[\sum_{t=0}^\infty c_t(x_t,u_t,v_t)\bigg]=\EE_{x_0\sim\cD}\bigg[\sum_{t=0}^\infty (x_t^\top Qx_t+u_t^\top R^uu_t-v_t^\top R^vv_t)\bigg],
\#
where $x_0\sim \cD$ is the initial state  drawn from a distribution $\cD$, the matrices  $Q\in\RR^{d\times d}$, $R^u\in\RR^{m_1\times m_1}$, and $R^v\in\RR^{m_2\times m_2}$ are all positive definite. 
If the solution to \eqref{equ:minimax_def} exists and the infimum and supremum in  \eqref{equ:minimax_def} can be interchanged, we refer to the solution value in \eqref{equ:minimax_def} as the \emph{value} of the game.

To investigate  the property of the solution to   \eqref{equ:minimax_def}, we first introduce the generalized  algebraic Riccati equation (GARE) as follows 
\small
\#\label{equ:P_GARE}
P^*=A^\top P^*A +Q-
\begin{bmatrix}
A^\top P^*B & A^\top P^*C
\end{bmatrix}
\begin{bmatrix}
R^u+B^\top P^*B & B^\top P^*C\\
C^\top P^*B & -R^v+C^\top P^*C  
\end{bmatrix}^{-1}
\begin{bmatrix}
B^\top P^*A \\
C^\top P^*A 
\end{bmatrix},
\#
\normalsize
where $P^*$ denotes   the minimal  non-negative definite  solution to \eqref{equ:P_GARE}. 
Under some standard assumptions to be specified shortly, the value exists and can be  characterized by a matrix $P^* \in \RR^{d\times d} $ \citep{bacsar2008h} satisfying 
\#
\forall~~ x_0\in\RR^d,\quad 
x_0^\top P^* x_0 =\inf_{\{u_t\}_{t\geq 0}}\sup_{\{v_t\}_{t\geq 0}}~\sum_{t=0}^\infty c_t(x_t,u_t,v_t)= \sup_{\{v_t\}_{t\geq 0}}\inf_{\{u_t\}_{t\geq 0}}~\sum_{t=0}^\infty c_t(x_t,u_t,v_t). \label{equ:strong_duality}
\#
Moreover, 
there exists a pair of linear feedback stabilizing polices that attain the  equality in \eqref{equ:strong_duality}, i.e., the optimal actions  $\{ u_t^*\}_{t\geq 0}$ and $\{v_t ^*\}_{t\geq 0}$ in \eqref{equ:minimax_def} can be written as  
\#\label{equ:u_v_t_gen}
u_t^*=-K^*x_t,\qquad v_t^*=-L^*x_t,
\#
where $K^*\in\RR^{m_1\times d}$ and $L^*\in\RR^{m_2\times d}$ are called the control gain matrices for the minimizer and the maximizer, respectively.
The values of $K^*$ and $L^*$ can be given by  
\#
K^*=&[R^u+B^\top P^*B-B^\top P^*C(-R^v+C^\top P^*C)^{-1}C^\top P^*B]^{-1}\notag\\
&\quad\times [B^\top P^*A-B^\top P^*C(-R^v+C^\top P^*C)^{-1}C^\top P^*A],\label{equ:K_GARE}\\
L^*=&[-R^v+C^\top P^*C-C^\top P^*B(R^u+B^\top P^*B)^{-1}B^\top P^*C]^{-1}\notag\\
&\quad\times [C^\top P^*A-C^\top P^*B(R^u+B^\top P^*B)^{-1}B^\top P^*A].\label{equ:L_GARE}
\#
Since the controller pair $(K^*,L^*)$ achieves the value \eqref{equ:strong_duality} for any $x_0$, the value of the game is thus $\EE_{x_0\sim\cD} \big(x_0^\top P^* x_0 \big)$. 
Now we introduce the following  assumption that guarantees the arguments above to hold. 

\begin{assumption}\label{assum:invertibility}
The following conditions hold: i) 
	there exists a minimal positive definite solution $P^*$ to the GARE \eqref{equ:P_GARE}  that satisfies $R^v-C^\top P^*C>0$; ii) $L^*$  satisfies $Q-(L^*)^\top R^v L^* >0$. 
\end{assumption}

The condition i) in  
Assumption \ref{assum:invertibility} is a standard sufficient condition  that ensures the existence of the value of the game    \citep{bacsar2008h,al2007model,stoorvogel1994discrete}.    
In addition,  condition ii) leads to the saddle-point property of the control pair $(K^*,L^*)$, i.e., the controller  sequence $(\{u_t^*\}_{t\geq 0},\{v_t^*\}_{t\geq 0})$ generated by \eqref{equ:u_v_t_gen} constitutes the NE of the game \eqref{equ:minimax_def}, which is also unique. 
We formally state the arguments regarding \eqref{equ:P_GARE}-\eqref{equ:L_GARE}  in the following  lemma, whose proof is deferred to \S\ref{subsec:proof_saddle_point_formal}.
 
\begin{lemma}\label{lemma:saddle_point_formal}
	Under Assumption \ref{assum:invertibility} i), for any $x_0\in\RR^d$, the value of the minimax game 
	\#\label{equ:def_state_game}
	\inf_{\{u_t\}_{t\geq 0}}\sup_{\{v_t\}_{t\geq 0}}\quad\sum_{t=0}^\infty c_t(x_t,u_t,v_t)
	\#
	exists, i.e.,  
	\eqref{equ:strong_duality} holds, and $(K^*,L^*)$ is stabilizing. Furthermore, under Assumption \ref{assum:invertibility} ii), the  controller  sequence $(\{u_t^*\}_{t\geq 0},\{v_t^*\}_{t\geq 0})$ generated from \eqref{equ:u_v_t_gen}  constitutes the saddle-point of \eqref{equ:def_state_game},  i.e., the NE of the game, and it is  unique.   
\end{lemma}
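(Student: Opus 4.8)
The plan is to build on the classical $\mathcal{H}_\infty$/zero-sum LQ theory (e.g., \cite{bacsar2008h}) in two stages, corresponding to the two conditions of Assumption \ref{assum:invertibility}. First, under condition i), I would establish existence of the game value by a dynamic-programming / completion-of-squares argument. Concretely, define the quadratic candidate value $V(x) = x^\top P^* x$ with $P^*$ the minimal positive-definite solution of the GARE \eqref{equ:P_GARE}. Plugging $u_t = -K^* x_t$, $v_t = -L^* x_t$ into the one-stage cost plus $V(x_{t+1}) - V(x_t)$ and rearranging, one checks via \eqref{equ:K_GARE}--\eqref{equ:L_GARE} that the GARE is exactly the statement that the $2\times 2$ block quadratic form in $(u_t + K^* x_t,\, v_t + L^* x_t)$ has a saddle structure: it is convex in the $u$-deviation (since $R^u + B^\top P^* B > 0$) and concave in the $v$-deviation (since $R^v - C^\top P^* C > 0$, precisely condition i)). Summing the telescoping identity from $t=0$ to $\infty$ — with a separate argument that $x_t \to 0$ so that the boundary term vanishes — yields both inequalities in \eqref{equ:strong_duality} simultaneously, hence the value equals $x_0^\top P^* x_0$ and the order of $\inf$ and $\sup$ is immaterial. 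The closed-loop stability of $(K^*, L^*)$, i.e., $\rho(A - BK^* - CL^*) < 1$, is part of the classical characterization of the minimal solution and is what makes the infinite sum converge and the boundary term vanish; I would cite this from \cite{bacsar2008h,stoorvogel1994discrete} rather than reprove it.

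For the second stage, under condition ii), I would upgrade the saddle property from the restricted class of linear feedback perturbations to \emph{arbitrary} control sequences, and establish uniqueness. The key point is that a true saddle point must dominate in both directions: for any sequence $\{v_t\}$, $\sum_t c_t(x_t, u_t^*, v_t) \le x_0^\top P^* x_0$, and for any $\{u_t\}$, $\sum_t c_t(x_t, u_t, v_t^*) \ge x_0^\top P^* x_0$. The first inequality (fixing the minimizer's optimal feedback $u_t = -K^* x_t$ and letting the maximizer deviate arbitrarily) is a standard LQR-type verification: with $u$ fixed, player 2 faces a maximization of a concave quadratic whose Riccati equation is solved by $P^*$, and the concavity is guaranteed by condition i). The second inequality — fixing $v_t = -L^* x_t$ and letting the minimizer deviate — is where condition ii) enters: with $v$ frozen, player 1 faces an LQR \emph{minimization} problem with effective state-cost matrix $Q + (L^*)^\top R^v L^*$... wait, one must be careful with the sign; the frozen-$v$ cost to player 1 is $\sum_t x_t^\top Q x_t + u_t^\top R^u u_t - v_t^\top R^v v_t$ with $v_t = -L^* x_t$, giving running cost $x_t^\top (Q - (L^*)^\top R^v L^*) x_t + u_t^\top R^u u_t$; condition ii) states exactly that $Q - (L^*)^\top R^v L^* > 0$, which makes this a well-posed LQR problem with positive-definite cost, so its unique optimal solution is a linear feedback solving the corresponding standard Riccati equation — and one checks this feedback is $K^*$ with value $x_0^\top P^* x_0$. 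Combining the two one-sided inequalities gives the saddle-point (NE) property over all admissible (stabilizing) controls.

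For uniqueness of the NE, I would argue as follows. Any NE $(\{u_t\},\{v_t\})$ must achieve value $x_0^\top P^* x_0$ for every $x_0$ (by the sandwiching inequalities just derived, together with the fact that the value is well-defined). Given the strict convexity in $u$ (from $R^u > 0$ and $R^u + B^\top P^* B > 0$) and strict concavity in $v$ along the relevant directions, the best responses are unique; running the standard argument that the NE feedback gains must satisfy the coupled equations \eqref{equ:K_GARE}--\eqref{equ:L_GARE}, and that $P^*$ is pinned down as the \emph{minimal} positive-definite GARE solution, forces $(K^*, L^*)$ to be the unique NE in the stabilizing linear-feedback class; the one-sided inequalities then rule out non-linear or non-stationary NE as well. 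I expect the main obstacle to be the careful bookkeeping around admissibility and the vanishing of the telescoped boundary term: one must restrict attention to control sequences for which the infinite-horizon cost is finite (equivalently, which keep $x_t \to 0$), show that this class is the right arena, and verify that deviations leading to divergent state trajectories cannot improve either player's payoff — i.e., that the $\inf$ and $\sup$ are effectively over stabilizing policies. This, rather than any single computation, is the delicate part; the rest is completion of squares plus invocation of the classical GARE theory.
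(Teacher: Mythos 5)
Your proposal is correct in substance and shares the same logical skeleton as the paper's proof: two one-sided saddle inequalities, with condition i) ($R^v - C^\top P^* C > 0$) responsible for the maximizer's side and the existence of the value, and condition ii) ($Q - (L^*)^\top R^v L^* > 0$) entering exactly where you place it --- it makes the frozen-$v$ running cost $x^\top(Q-(L^*)^\top R^v L^*)x + u^\top R^u u$ positive definite, so that the minimizer's problem is a well-posed LQR solved by $K^*$ and, crucially, so that non-stabilizing deviations $\{u_t\}$ incur infinite cost and cannot improve on $x_0^\top P^* x_0$. Where you differ is in how much is re-derived: the paper's proof is almost entirely citation-based --- Theorem 3.7 of \cite{bacsar2008h} for existence of the value and the maximizer-side inequality, Lemma 3.1 of \cite{stoorvogel1994discrete} for uniqueness of the stabilizing solution $P^*$, and \cite{jacobson1977values} for the minimizer-side inequality over stabilizing controls --- supplemented only by the short observation about non-stabilizing $\{u_t\}$ under condition ii). Your completion-of-squares/verification route makes the argument more self-contained, and it correctly exploits the sign of the telescoped boundary term on the maximizer's side (where $-x_{T+1}^\top P^* x_{T+1}\le 0$ gives the upper bound without needing $x_t\to 0$). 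Two cautions: first, your claim that the stage-1 telescoping alone yields \emph{both} inequalities of \eqref{equ:strong_duality} under condition i) only is too quick --- the minimizer-side boundary term has the wrong sign, and without condition ii) the ``divergent trajectories are infinitely costly'' argument is unavailable; the paper obtains value existence under i) alone precisely by invoking the cited theorem, so you should do the same rather than attribute it to the telescoping identity. Second, your uniqueness argument is the thinnest part (``standard argument'' plus strictness); the paper pins this on the uniqueness of the stabilizing GARE solution from \cite{stoorvogel1994discrete}, and you would want to invoke that result explicitly rather than rely on best-response strictness alone.
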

  
Lemma \ref{lemma:saddle_point_formal} implies that the solution to \eqref{equ:minimax_def} can be found by searching for $(K^*,L^*)$ in the matrix space $\RR^{m_1\times d}\times \RR^{m_2\times d}$,  given by  \eqref{equ:K_GARE}-\eqref{equ:L_GARE} for  some $P^*>0$ satisfying \eqref{equ:P_GARE}. 
Next, we aim to develop policy optimization  methods that provably converge to the NE $(K^*,L^*)$.

\section{Policy Gradient and Landscape}\label{sec:landscape}
By Lemma \ref{lemma:saddle_point_formal}, 
we focus on finding the state feedback policies of players parameterized by   $u_t=-Kx_t$, and  $v_t=-Lx_t,
$ such that $\rho(A-BK-CL)<1$. Accordingly, 
we denote the  corresponding expected cost in  \eqref{equ:minimax_def} as 
\$
\cC(K,L)
:=&~\EE_{x_0\sim\cD}\bigg\{\sum_{t=0}^\infty \big[x_t^\top Qx_t+(Kx_t)^\top R^u(Kx_t)-(Lx_t)^\top R^v(Lx_t)\big]\bigg\}.
\$
Also, define $P_{K,L}$ as the unique    solution to the   Lyapunov equation 
\#\label{equ:P_KL_Sol}
P_{K,L}=Q+K^\top R^u K-L^\top R^v L+(A-BK-CL)^\top P_{K,L}(A-BK-CL). 
\#
Then for any \emph{stablilizing}  control pair $(K,L)$, it follows that 
$
\cC(K,L)=\EE_{x_0\sim\cD}\big(x_0^\top P_{K,L} x_0\big). 
$
Also,  we  define $\Sigma_{K,L}$ as the state correlation matrix, i.e., $
\Sigma_{K,L}:=\EE_{x_0\sim\cD}\sum_{t=0}^\infty x_tx_t^\top$. 
Our goal is to find the NE  $(K^*,L^*)$ using  policy optimization  methods that solve the following minimax problem 
\#\label{equ:nonconvex_concave_def}  
\min_K\max_L ~~\cC(K,L)  
\# 
such that for any  
 $K\in\RR^{m_1\times d}$ and $L\in\RR^{m_2\times d}$, $
\cC(K^*,L) \leq \cC(K^*,L^*)\leq \cC(K,L^*)$.

As has been recognized in \cite{fazel2018global}  that the LQR problem is nonconvex with respect to (w.r.t.) the control gain $K$, 
we note that in general, for some  given $L$ (or $K$), the minimization (or maximization) problem is not convex (or concave). 
This has in fact caused the main challenge for the design of equilibrium-seeking algorithms for zero-sum LQ games. 
We formally state this in the following lemma, which is proved in \S\ref{subsec:proof_nonconvex_nonconcave}.


{
\begin{lemma}[Nonconvexity-Nonconcavity of $\cC(K,L)$]\label{lemma:nonconvex_nonconcave}
Define a subset $\underline{\Omega}\subset \RR^{m_2\times d}$ as
\#\label{equ:def_Omega_underline}
\underline{\Omega}:=\big\{L\in\RR^{m_2\times d} \colon  Q-L^\top R^v L>0\big\}.
\#
Then there exists  $L\in\underline{\Omega}$ such that $\min_{K}\cC(K,L)$ is a  nonconvex minimization problem; there exists  $K$ such that $\max_{L\in\underline{\Omega}}\cC(K,L)$ is a nonconcave maximization problem. 
\end{lemma}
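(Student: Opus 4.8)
Since each half of the statement is an existence claim, the plan is to exhibit an explicit problem instance realizing it. For the nonconvexity of the inner minimization I would take $L=\bm{0}_{m_2\times d}$, which lies in $\underline{\Omega}$ because $Q>0$. For this $L$ the Lyapunov equation \eqref{equ:P_KL_Sol} reduces to $P_{K,\bm{0}}=Q+K^\top R^uK+(A-BK)^\top P_{K,\bm{0}}(A-BK)$, so that $K\mapsto\cC(K,\bm{0})=\EE_{x_0\sim\cD}\bigl(x_0^\top P_{K,\bm{0}}x_0\bigr)$ is \emph{exactly} the cost of the single-player LQR problem with dynamics $(A,B)$ and weights $(Q,R^u)$; the data $C,R^v$ (take, e.g., $m_2=1$, $C=\bm{0}_{d\times1}$, $R^v=1$) are irrelevant here. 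It is known \citep{fazel2018global} that the LQR cost need not be a convex function of the feedback gain, and an explicit finite-dimensional instance of $(A,B,Q,R^u)$ with positive-definite weights realizing this is exhibited there; plugging it in produces an $L\in\underline{\Omega}$ for which $\min_K\cC(K,L)$ is a nonconvex minimization problem.

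For the nonconcavity of the inner maximization the symmetric trick is unavailable: for fixed $K$, the map $L\mapsto -\cC(K,L)$ corresponds via \eqref{equ:P_KL_Sol} to an ``LQR''-type cost whose effective state weight is $-(Q+K^\top R^uK)\le 0$, so the example above does not transfer. I would instead construct a direct scalar instance. Take $d=m_1=m_2=1$, $A=0$, $C=1$, $Q=1$, $R^v=1/2$, with $B$ and $R^u>0$ arbitrary and $x_0=1$ deterministically, and fix $K=0$. Summing a geometric series gives, for $|L|<1$, $\cC(0,L)=(Q-R^vL^2)/(1-L^2)=\tfrac12+\tfrac{1}{2(1-L^2)}$ (and $\cC(0,L)=+\infty$ otherwise), while $\underline{\Omega}=\{L\colon 1-\tfrac12L^2>0\}=(-\sqrt2,\sqrt2)\ni 0$. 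Since $\cC''(0,L)=(1+3L^2)/(1-L^2)^3>0$ on $(-1,1)\subset\underline{\Omega}$, the map $L\mapsto\cC(0,L)$ is strictly convex near the origin, hence not concave on $\underline{\Omega}$; this is the assertion with $K=0$.

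The minimizer half is routine once the LQR reduction at $L=\bm{0}$ is noticed; the substance — and the step I expect to be the main obstacle — is the maximizer half. Because that subproblem is not itself an LQR problem, its nonconcavity cannot simply be inherited from the known LQR counterexample and must be produced directly; the phenomenon is in fact sharper there, since a scalar LQR cost is always convex whereas the scalar instance above already fails to be concave. If one wanted the inner $\max$ to be additionally \emph{attained} (rather than $+\infty$-valued on part of $\underline{\Omega}$), a slightly larger instance would be used, but the recipe is the same: a closed form for $\cC$ from \eqref{equ:P_KL_Sol}, then a Hessian computation exhibiting a positive eigenvalue.
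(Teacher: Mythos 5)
Your minimizer half is essentially the paper's own argument: both reduce the inner minimization to a single-player LQR instance and import the nonconvexity counterexample of \cite{fazel2018global} (the paper takes $B=C=\Ib$ and matches $A-CL$ to the $A$ of that counterexample; you take $L=\bm{0}_{m_2\times d}$ and let $(A,B,Q,R^u)$ be the LQR instance, which is the same reduction). For the maximizer half you genuinely depart from the paper. The paper again inherits the LQR counterexample: for $L\in\underline{\Omega}$ the state weight $Q+K^\top R^uK-L^\top R^vL$ is positive definite, so $\cC(K,\cdot)$ is finite exactly on the set of $L$ that are stabilizing together with $K$; matching $(A-BK,\,C)$ to the $(A,B)$ of \cite{fazel2018global} makes that set nonconvex, and nonconvexity of the effective domain of the extended-valued cost is what rules out concavity there. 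Your route instead gives an explicit scalar instance in which $\cC(0,\cdot)$ is finite and strictly convex on an open subinterval of $\underline{\Omega}$; the formula $\cC(0,L)=\frac{1}{2}+\frac{1}{2(1-L^2)}$ and the sign of its second derivative are correct, so this is a valid and arguably more self-contained certificate, which moreover shows the nonconcavity is not merely a feasible-set phenomenon.

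The caveat is that your scalar instance lies outside the class of games the paper is studying. With $A=0$ the GARE \eqref{equ:P_GARE} gives $P^*=Q=1$, so $R^v-C^\top P^*C=1/2-1<0$ and Assumption \ref{assum:invertibility} i) fails (indeed the upper value of that game is infinite). Worse, within the configuration $d=m_2=1$, $A=0$, $K=0$ the defect cannot be repaired by rescaling: writing $\cC(0,L)=R^v/C^2+(Q-R^v/C^2)/(1-C^2L^2)$, Assumption \ref{assum:invertibility} i) is exactly $R^v>C^\top QC$, and in that regime the map is \emph{concave} on $\underline{\Omega}$, so nonconcavity and Assumption \ref{assum:invertibility} i) are mutually exclusive there; one needs $A\neq 0$, a different $K$, or higher dimension. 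Since the lemma sits under the paper's standing assumptions (the paper's proof opens by invoking Assumption \ref{assum:invertibility} to justify restricting to $\underline{\Omega}$), your maximizer half establishes the literal existence claim but not obviously the intended one about the class of well-posed games; to match the paper's scope you should either exhibit an instance consistent with Assumption \ref{assum:invertibility} (the paper's domain-nonconvexity mechanism does not intrinsically conflict with it, though the paper also leaves this unchecked) or state explicitly that your counterexample is instance-level and independent of that assumption.
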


}

To facilitate the algorithm design, we  establish the explicit expression of the policy gradient w.r.t. the parameters $K$ and $L$  in the following lemma, with a proof provided in \S\ref{subsec:proof_policy_grad}.

\begin{lemma}[Policy Gradient Expression]\label{lemma:policy_grad}
	The policy gradients of $\cC(K,L)$ have the form
\#
\nabla_K \cC(K,L)=&~2[(R^u+B^\top P_{K,L}B)K-B^\top P_{K,L}(A-CL)]\Sigma_{K,L}\label{equ:policy_grad_K_form}\\
\nabla_L \cC(K,L)=&~2[(-R^v+C^\top P_{K,L}C)L-C^\top P_{K,L}(A-BK)]\Sigma_{K,L}.\label{equ:policy_grad_L_form}
\#
\end{lemma}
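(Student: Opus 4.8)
\textbf{Proof proposal for Lemma \ref{lemma:policy_grad}.}

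The plan is to derive the gradient formulas \eqref{equ:policy_grad_K_form}--\eqref{equ:policy_grad_L_form} by differentiating the cost $\cC(K,L) = \EE_{x_0\sim\cD}\big(x_0^\top P_{K,L}x_0\big)$ through its dependence on the Lyapunov solution $P_{K,L}$ defined in \eqref{equ:P_KL_Sol}. First I would fix $(K,L)$ stabilizing, write $F := A - BK - CL$, and treat \eqref{equ:P_KL_Sol} as an implicit definition $P_{K,L} = Q + K^\top R^u K - L^\top R^v L + F^\top P_{K,L} F$. Perturbing $K \mapsto K + \Delta$ (holding $L$ fixed) and keeping only first-order terms, I would obtain a Lyapunov-type recursion for $\mathrm{d}P_{K,L}$ whose forcing term is linear in $\Delta$; unrolling this recursion gives $\mathrm{d}P_{K,L} = \sum_{t\geq 0} (F^\top)^t \big(\text{forcing}\big) F^t$, and then pairing against $\EE_{x_0\sim\cD}(x_0 x_0^\top)$ and using the identity $\Sigma_{K,L} = \sum_{t\geq 0} F^t\, \EE(x_0 x_0^\top)\, (F^\top)^t$ collapses the sum. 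Matching the coefficient of $\Delta$ yields $\nabla_K\cC(K,L)$; the computation for $L$ is entirely symmetric, with $-R^v$ playing the role of $R^u$ and $B\leftrightarrow C$, $K\leftrightarrow L$ swapped.

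The key steps in order: (i) record that for a stabilizing pair, $\cC(K,L) = \langle P_{K,L}, \Sigma_0\rangle$ where $\Sigma_0 := \EE_{x_0\sim\cD}(x_0 x_0^\top)$, and equivalently $\cC(K,L) = \langle Q + K^\top R^u K - L^\top R^v L,\ \Sigma_{K,L}\rangle$ by re-summing the per-stage costs along trajectories; (ii) differentiate the cost-to-go at a single state using the value-recursion $P_{K,L} = Q + K^\top R^u K - L^\top R^v L + F^\top P_{K,L} F$, which is the cleaner route: treating $x_0^\top P_{K,L} x_0$ as $\min$/$\max$-free (just a fixed quadratic form evaluated at the current $(K,L)$), apply the product rule to the right-hand side, noting the term $F^\top P_{K,L} F$ contributes both through the explicit $F$'s (giving the ``$-B^\top P_{K,L}F$'' piece) and through $\mathrm{d}P_{K,L}$ (which, when carried through the infinite horizon and paired with $\Sigma_0$, reconstitutes $\Sigma_{K,L}$); (iii) assemble: $\partial(K^\top R^u K)/\partial K$ contributes $2R^u K$, $\partial(F^\top P F)/\partial K$ through the explicit $F = A - BK - CL$ contributes $-2B^\top P_{K,L}(A - BK - CL) + 2B^\top P_{K,L}B\cdot K = -2B^\top P_{K,L}(A - CL) + 2B^\top P_{K,L}BK$, and the whole thing gets right-multiplied by $\Sigma_{K,L}$ after the telescoping; combining gives exactly \eqref{equ:policy_grad_K_form}. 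Then (iv) repeat verbatim with the substitution $(B, R^u, K) \leftrightarrow (C, -R^v, L)$ to get \eqref{equ:policy_grad_L_form}.

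The main obstacle is the bookkeeping in step (ii): correctly separating the ``direct'' dependence of $P_{K,L}$ on $K$ (through the explicit appearances of $K$ in $Q + K^\top R^u K - L^\top R^v L + F^\top P F$) from the ``indirect'' dependence (through $P_{K,L}$ appearing on both sides), and then showing that the indirect part, after being propagated through the Lyapunov map and paired with $\Sigma_0 = \EE(x_0 x_0^\top)$, is exactly what converts the $\Sigma_0$ into $\Sigma_{K,L}$ in the final formula. Concretely, one shows $\langle \mathrm{d}P_{K,L},\, \Sigma_0\rangle = \langle \text{(direct forcing term)},\, \Sigma_{K,L}\rangle$ by the adjoint relation between the Lyapunov operator $X \mapsto F^\top X F$ and its transpose $X \mapsto F X F^\top$ — this is the standard trick, but it must be stated carefully since both $P_{K,L}$ and $\Sigma_{K,L}$ are defined as infinite sums. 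An alternative, perhaps more transparent, route is to differentiate the trajectory-form expression $\cC(K,L) = \langle Q + K^\top R^u K - L^\top R^v L,\ \Sigma_{K,L}\rangle$ directly, using the known derivative of $\Sigma_{K,L}$ with respect to the closed-loop matrix $F$; either way the crux is the same adjoint/telescoping identity, and everything else is routine matrix calculus.
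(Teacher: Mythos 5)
Your proposal is correct and follows essentially the same route as the paper's proof: the paper differentiates the value recursion $\cC_{K,L}(x_0)=x_0^\top(Q+K^\top R^uK-L^\top R^vL)x_0+\cC_{K,L}((A-BK-CL)x_0)$ at a state and unrolls by induction along the trajectory, which is exactly your implicit-differentiation-plus-Lyapunov-adjoint telescoping written at the state level, with the sum $\sum_t x_tx_t^\top$ reconstituting $\Sigma_{K,L}$. The coefficient bookkeeping you give (direct forcing $2R^uK-2B^\top P_{K,L}(A-BK-CL)$ paired with $\Sigma_{K,L}$, and the symmetric swap $(B,R^u,K)\leftrightarrow(C,-R^v,L)$ for the maximizer) matches the paper's computation exactly.
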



To study the landscape  of this nonconvex-nonconcave problem, 
 we first examine the  property of the stationary points of $\cC(K,L)$, which are  the points that gradient-based  methods converge to. 

\begin{lemma}[Stationary Point Property]\label{lemma:stationary_point}
	For a stabilizing control pair $(K,L)$, i.e., $\rho(A-BK-CL)<1$,  suppose $\Sigma_{K,L}$ is full-rank and $(-R^v+C^\top P_{K,L}C)$ is invertible. If      
	$
	\nabla_K \cC(K,L)=\nabla_L \cC(K,L)=0
	$ 
	and the induced matrix $P_{K,L}$ defined in \eqref{equ:P_KL_Sol} is positive definite, then 
	  $(K,L)$ constitutes the  control gain pair at the  Nash equilibrium.
\end{lemma}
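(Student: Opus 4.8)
The plan is to exploit the explicit gradient formulas from Lemma \ref{lemma:policy_grad} together with the full-rank assumption on $\Sigma_{K,L}$ to convert the first-order stationarity conditions into algebraic relations on $(K,L,P_{K,L})$, and then verify that these relations force $(K,L)$ to coincide with the GARE-generated pair, which by Lemma \ref{lemma:saddle_point_formal} is the (unique) Nash equilibrium. First I would note that since $\Sigma_{K,L}$ is full-rank, $\nabla_K\cC(K,L)=0$ and $\nabla_L\cC(K,L)=0$ are equivalent to the bracketed matrix factors vanishing:
\#\label{equ:stat_K}
(R^u+B^\top P_{K,L}B)K = B^\top P_{K,L}(A-CL),
\#
\#\label{equ:stat_L}
(-R^v+C^\top P_{K,L}C)L = C^\top P_{K,L}(A-BK).
\#
Here I would also observe that $R^u+B^\top P_{K,L}B$ is invertible because $R^u>0$ and $P_{K,L}>0$ by hypothesis, so \eqref{equ:stat_K} gives a closed-form for $K$ in terms of $L$ and $P_{K,L}$; similarly \eqref{equ:stat_L} gives $L$ in closed form using the assumed invertibility of $-R^v+C^\top P_{K,L}C$.

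Next I would substitute \eqref{equ:stat_K} and \eqref{equ:stat_L} back into the Lyapunov equation \eqref{equ:P_KL_Sol} defining $P_{K,L}$. The key algebraic step is to recognize that plugging the stationary $K$ and $L$ into the right-hand side of \eqref{equ:P_KL_Sol} and simplifying (completing the square in $K$ and in $L$, using that the cross terms cancel by \eqref{equ:stat_K}--\eqref{equ:stat_L}) collapses \eqref{equ:P_KL_Sol} exactly into the GARE \eqref{equ:P_GARE}, i.e. $P_{K,L}$ satisfies $P_{K,L}=A^\top P_{K,L}A+Q-[\cdots]$ with the same $2\times 2$ block structure as \eqref{equ:P_GARE}. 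This is the standard identity relating the Lyapunov value of the stationary feedback pair to the Riccati operator, but carrying it out requires care: one should treat the joint control $(u,v)$ with the stacked gain $\begin{bmatrix}K\\ L\end{bmatrix}$ and the joint cost/weight block $\mathrm{diag}(R^u,-R^v)$, write the completion-of-squares for the full block at once, and check that the invertibility of the $2\times 2$ block $\begin{bmatrix}R^u+B^\top P_{K,L}B & B^\top P_{K,L}C\\ C^\top P_{K,L}B & -R^v+C^\top P_{K,L}C\end{bmatrix}$ follows from the two scalar/block invertibility hypotheses via a Schur-complement argument. Once $P_{K,L}$ is shown to solve the GARE and to be positive definite, the closed forms \eqref{equ:stat_K}--\eqref{equ:stat_L} are seen to be precisely \eqref{equ:K_GARE}--\eqref{equ:L_GARE} with $P^*$ replaced by $P_{K,L}$ (again via the Schur-complement/block-inverse identity), so $(K,L)$ is the control pair generated by a positive definite GARE solution.

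Finally, to conclude that $(K,L)=(K^*,L^*)$ I would invoke uniqueness: by Lemma \ref{lemma:saddle_point_formal} (under Assumption \ref{assum:invertibility}) the stabilizing pair generated from the minimal positive definite GARE solution is the unique Nash equilibrium, and one needs that \emph{any} positive definite GARE solution inducing a stabilizing pair must be that minimal one — this is the standard uniqueness result for the stabilizing solution of the GARE (cf.\ \citep{stoorvogel1994discrete,bacsar2008h}), and it applies here because $(K,L)$ is stabilizing by assumption. Hence $P_{K,L}=P^*$ and $(K,L)=(K^*,L^*)$, which is the NE control gain pair.

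The main obstacle I expect is the block completion-of-squares computation showing that the Lyapunov equation for the stationary pair reduces exactly to the GARE, together with the bookkeeping needed to guarantee that the relevant $2\times2$ block matrix is invertible (so that the block-inverse formulas for $K^*,L^*$ in \eqref{equ:K_GARE}--\eqref{equ:L_GARE} are valid) — this is where the hypotheses "$P_{K,L}>0$" and "$-R^v+C^\top P_{K,L}C$ invertible" must be used in an essential way, and it is easy to get the signs in the $-R^v$ block wrong. The appeal to GARE uniqueness at the end is conceptually the crux but is a citable fact rather than something to reprove.
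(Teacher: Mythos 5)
Your proposal follows essentially the same route as the paper's proof: use the full-rankness of $\Sigma_{K,L}$ to turn stationarity into the coupled relations $(R^u+B^\top P_{K,L}B)K=B^\top P_{K,L}(A-CL)$ and $(-R^v+C^\top P_{K,L}C)L=C^\top P_{K,L}(A-BK)$, observe that together with the Lyapunov equation \eqref{equ:P_KL_Sol} these form the same system as \eqref{equ:K_GARE}--\eqref{equ:L_GARE} with \eqref{equ:P_star_Sol}, hence $P_{K,L}$ solves the GARE \eqref{equ:P_GARE}, and conclude via the uniqueness of the positive definite, stabilizing GARE solution under Assumption \ref{assum:invertibility} that $(K,L)=(K^*,L^*)$. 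Your explicit attention to the block completion of squares and the invertibility of the $2\times 2$ block matrix is if anything more careful than the paper's treatment, which passes over that step by asserting the two systems of equations coincide.
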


Lemma \ref{lemma:stationary_point}, proved in  \S\ref{subsec:proof_stationary_point}, shows that  the stationary point of $\cC(K,L)$ suffices to characterize the NE of the game under certain conditions. 
In fact, for $\Sigma_{K,L}$ to be full-rank, 
it suffices to let $\EE_{x_0\sim\cD} x_0x_0^\top$ be full-rank, i.e., to use a random initial state $x_0$ whose covariance matrix is non-degenerate. 
This can be easily satisfied in practice.  

\section{Policy Optimization Algorithms}\label{sec:algorithms}
In this section, we propose three PO methods, based on policy gradients, to find the global NE of the  LQ game. 
In particular, we develop \emph{nested-gradient} (NG)  methods, which first solve   the inner  optimization by policy-gradient methods, and then use the stationary-point solution to perform gradient-update for  the outer optimization. 
One way to solve for the NE is to directly address the minimax problem \eqref{equ:minimax_def}. 
Success of this procedure, as pointed out in    \cite{fazel2018global} for LQR,  
requires the stability guarantee of the system along the outer policy-gradient updates. 
However, unlike LQR,   it is not clear so far if there exists a stepsize and/or condition on $K$  that ensures such stability of the system along 
the outer-loop policy-gradient update.  Instead, if we solve  the maximin problem, which has the same value as \eqref{equ:minimax_def} (see Lemma \ref{lemma:saddle_point_formal}), then a simple projection step on the iterate $L$,  as to be shown later, can guarantee  the stability of the updates. 
Therefore, we 
aim to solve   $
\max_L\min_K ~~\cC(K,L)$.

For some given $L$,  the inner minimization problem becomes an LQR problem with equivalent cost matrix $\tilde Q_L=Q-L^\top R^v L$, and state transition matrix $\tilde{A}_L=A-CL$. Motivated by  \cite{fazel2018global}, we propose to find the stationary point of the inner problem, 
since the stationary point suffices  to be the global optimum under certain conditions (see Corollary $4$ in \cite{fazel2018global}). Let the stationary-point solution be $K(L)$. 
By setting $\nabla_K {\cC}(K,L)=0$  and  by Lemma \ref{lemma:policy_grad}, we have     
 \#\label{equ:KL_def}
 K(L)=(R^u+B^\top P_{K(L),L}B)^{-1}B^\top P_{K(L),L}(A-CL). 
 \#
  We then substitute 
\eqref{equ:KL_def}  into  \eqref{equ:P_KL_Sol} to obtain the Riccati equation for the inner  problem:
\#\label{equ:Inner_Riccati}
P_{K(L),L}=\tilde Q_L+\tilde{A}_L^\top P_{K(L),L}\tilde{A}_L -\tilde{A}_L^\top P_{K(L),L} C (R^u+B^\top P_{K(L),L}B)^{-1}C^\top P_{K(L),L}\tilde{A}_L. 
\#
Note that as in  \cite{fazel2018global}, $K(L)$ can be obtained  using gradient-based algorithms.   For example, one can use the basic  policy gradient update in the inner-loop, i.e.,
  \#\label{equ:inner_gd}
 K' = K-\alpha \nabla _{K} \cC(K, L)=K-2\alpha[(R^u+B^\top P_{K,L}B)K-B^\top P_{K,L}\tilde{A}_L]\Sigma_{K,L}, 
 \#
 where $\alpha>0$ denotes the stepsize,  $P_{K,L}$ denotes the solution to \eqref{equ:P_KL_Sol} for given $(K,L)$, and $\nabla_K \cC(K,L)$ denotes the partial gradient w.r.t. $K$ given in \eqref{equ:policy_grad_K_form}. 
 Alternatively,   one can also use  the approximate second-order information to accelerate the update, which yields   the   \emph{natural} policy gradient  update   
 \#\label{equ:inner_natural_gd}
 K' = K-\alpha \nabla _{K} \cC(K, L)\Sigma_{K,L}^{-1}=K-2\alpha[(R^u+B^\top P_{K,L}B)K-B^\top P_{K,L}\tilde{A}_L], 
 \#
 that utilizes the Fisher's information, and the  \emph{Gauss-Newton}  update 
  \#\label{equ:inner_gauss_newton}
 K' &= K-\alpha (R^u+B^\top P_{K,L} B)^{-1}\nabla _{K} \cC(K, L)\Sigma_{K,L}^{-1}\notag\\
 &=K-2\alpha (R^u+B^\top P_{K,L} B)^{-1}[(R^u+B^\top P_{K,L}B)K-B^\top P_{K,L}\tilde{A}_L]. 
 \#

Suppose $K(L)$ in \eqref{equ:KL_def} can be obtained, regardless of the algorithms used. Then, we  substitute  $K(L)$ back to the gradient of  $\tilde{\cC}(L):= {\cC}(K(L),L)$ to obtain the \emph{nested-gradient}:  
\$
 \nabla_L \tilde{\cC}(L) & =\nabla_L {\cC}(K(L),L)\\
&  = 2\Big\{\big[-R^v+C^\top P_{K(L),L}C-C^\top P_{K(L),L}B(R^u+B^\top P_{K(L),L}B)^{-1}B^\top P_{K(L),L}C\big]L\\
&\qquad -C^\top P_{K(L),L}\big[A-B(R^u+B^\top P_{K(L),L}B)^{-1}B^\top P_{K(L),L}A\big]\Big\}\Sigma_{K(L),L}, 
\$
where  $\nabla_L \tilde{\cC}(L)$ denotes the  nested-gradient for the outer-loop. 
Note that the stationary-point condition of the outer-loop that  $\nabla_L \tilde{\cC}(L)=0$ is identical to that of $\nabla_L {\cC}(K(L),L)=0$, since 
\$
\nabla_L \tilde{\cC}(L)=\nabla_L {\cC}(K(L),L)+\nabla_L K(L)\cdot\nabla_K {\cC}(K(L),L)=\nabla_L {\cC}(K(L),L),
\$ 
where $\nabla_K {\cC}(K(L),L)=0$ by definition of $K(L)$. Thus, the convergent point $(K(L),L)$ that makes $\nabla_L \tilde{\cC}(L)=0$ satisfy  both conditions  $\nabla_K {\cC}(K(L),L)=0$ and $\nabla_L {\cC}(K(L),L)=0$, which implies from Lemma \ref{lemma:stationary_point} that the convergent control pair $(K(L),L)$ constitutes  the  Nash equilibrium.

Thus, we propose   the following  projected nested-gradient update  in the outer-loop to find the pair $(K(L),L)$:
\begin{flalign}
&{\rm \textbf{Projected Nested-Gradient:}}\qquad \qquad\quad~~  L'=\PP^{GD}_{\Omega}[L+	\eta \nabla_L \tilde{\cC}(L)],&\label{equ:algorithm_nested_L}
\end{flalign} 
where $\Omega$ is some convex set in $\RR^{m_2\times d}$, and  $\PP^{GD}_{\Omega}[\cdot]$ is the projection  operator onto $\Omega$  that is    defined~as
\#\label{equ:def_proj_GD}
\PP^{GD}_{\Omega}[\tilde L]=\argmin_{L\in\Omega} ~\tr\Big[\big(L-\tilde L\big)\big(L-\tilde L\big)^\top\Big],
\#
i.e., the minimizer of the  distance between $\tilde L$ and $L$ in Frobenius norm. 
It is assumed that the set $\Omega$ is large enough such that it contains  the Nash equilibrium $(K^*,L^*)$. 
  Under  Assumption \ref{assum:invertibility}, there exists a constant $\zeta$  with  $0<\zeta<\sigma_{\min}(\tilde Q_{L^*})$, with  
  one example of  $\Omega$  that serves the purpose is  
\#\label{equ:def_Omega}
\Omega:=\big\{L\in\RR^{m_2\times d}\given  Q-L^\top R^v L\geq \zeta\cdot \Ib\big\},
\#
which contains $L^*$ at the NE. Thus, the projection  does not exclude the convergence to the NE. The following lemma, proved in  \S\ref{subsec:proof_convex_Omega},  shows that  $\Omega$ is indeed convex and compact.  

\begin{lemma}\label{lemma:convex_Omega}
	The subset $\Omega\subset  \RR^{m_2\times d}$ defined in \eqref{equ:def_Omega} is a convex and compact set. 
\end{lemma}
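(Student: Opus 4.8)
The plan is to handle the two claims separately, each reducing to an elementary fact about the Löwner order applied to the quadratic matrix map $\Phi(L):=Q-L^\top R^v L$.

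For \emph{convexity}, the key step I would isolate is that $L\mapsto L^\top R^v L$ is convex in the Löwner order. Factoring $R^v=(R^v)^{1/2}(R^v)^{1/2}$ with the positive definite square root (legitimate since $R^v>0$) and writing $N_i=(R^v)^{1/2}L_i$, one checks for $L_1,L_2\in\RR^{m_2\times d}$ and $\theta\in[0,1]$ that
\[
\theta N_1^\top N_1+(1-\theta)N_2^\top N_2-\bigl(\theta N_1+(1-\theta)N_2\bigr)^\top\bigl(\theta N_1+(1-\theta)N_2\bigr)=\theta(1-\theta)(N_1-N_2)^\top(N_1-N_2)\geq 0 .
\]
Consequently $\Phi\bigl(\theta L_1+(1-\theta)L_2\bigr)\geq \theta\,\Phi(L_1)+(1-\theta)\,\Phi(L_2)$, and if $L_1,L_2\in\Omega$ the right-hand side is $\geq\theta\zeta\Ib+(1-\theta)\zeta\Ib=\zeta\Ib$, so $\theta L_1+(1-\theta)L_2\in\Omega$.

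For \emph{compactness}, I would establish closedness and boundedness in the finite-dimensional space $\RR^{m_2\times d}$. Closedness: note $\Omega=\{L:\lambda_{\min}(\Phi(L))\geq\zeta\}$ and that $L\mapsto\lambda_{\min}(\Phi(L))$ is continuous (the polynomial map $\Phi$ followed by the continuous functional $\lambda_{\min}$ on symmetric matrices), so $\Omega$ is the preimage of the closed ray $[\zeta,\infty)$. Boundedness: any $L\in\Omega$ satisfies $L^\top R^v L\leq Q-\zeta\Ib\leq Q$, and since $R^v\geq\lambda_{\min}(R^v)\Ib>0$ this gives $\lambda_{\min}(R^v)\,L^\top L\leq L^\top R^v L\leq Q$; taking the largest eigenvalue of both sides yields $\|L\|^2=\lambda_{\max}(L^\top L)\leq\lambda_{\max}(Q)/\lambda_{\min}(R^v)$, a uniform bound, and hence $\|L\|_F\leq\sqrt{\min(m_2,d)}\cdot\|L\|$ is bounded as well.

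I do not expect any genuine obstacle here; the one point requiring care is the operator-convexity identity for $L\mapsto L^\top R^v L$ — a naive entrywise argument is wrong — which is why I would write it out as the display above. The positive definiteness of $R^v$ is used twice (for the square-root factorization and for the bound $R^v\geq\lambda_{\min}(R^v)\Ib$), and the strict positivity $\zeta>0$ is exactly what forces $\Phi(L)\leq Q$ on $\Omega$ and thereby delivers boundedness.
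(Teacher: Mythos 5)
Your proof is correct and follows essentially the same route as the paper: convexity reduces to the cross-term bound $\lambda(1-\lambda)(L_1-L_2)^\top R^v(L_1-L_2)\geq 0$ (your square-root identity with $N_i=(R^v)^{1/2}L_i$ is this same inequality in disguise), and compactness comes from closedness of an eigenvalue level set together with boundedness. Your explicit bound $\|L\|^2\leq \lambda_{\max}(Q)/\lambda_{\min}(R^v)$ merely spells out the boundedness step that the paper states rather tersely, which is a welcome addition but not a different argument.
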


The projection is mainly for the purpose of theoretical analysis, and is not necessarily used in the  implementation of the algorithm in practice. In fact, the   simulation results in \S\ref{sec:simulations} show that the algorithms converge without this projection in many   cases.  
Such a projection is also implementable,  since the set to project on is convex, and the constraint is directly imposed  on the policy parameter iterate $L$ (not on some derivative quantities, e.g.,  $P_{K(L),L}$).  Similarly, we  develop the following projected natural  nested-gradient update: 
\begin{flalign}
&{\rm \textbf{Projected Natural Nested-Gradient:}}\qquad  ~~  L'=\PP_{\Omega}^{NG}\big[L+	\eta \nabla_L \tilde{\cC}(L)\Sigma_{K(L),L}^{-1}\big],&\label{equ:algorithm_natural_nested_L}
\end{flalign} 
where the projection operator $\PP_{\Omega}^{NG}[\cdot]$ for natural nested-gradient is defined as
\#\label{equ:def_proj_NG}
\PP^{NG}_{\Omega}[\tilde L]=\argmin_{\check{L}\in\Omega} ~\tr\Big[\big(\check{L}-\tilde L\big)\Sigma_{K(L),L} \big(\check{L}-\tilde L\big)^\top\Big]. 
\#
Here a weight matrix $ \Sigma_{K(L),L}$ is added   for the convenience of subsequent  theoretical analysis.  We note that the weight  matrix $\Sigma_{K(L),L}$ depends on the current iterate $L$ in \eqref{equ:algorithm_natural_nested_L}.   

Moreover, we can  develop  the projected nested-gradient algorithm with preconditioning matrices. For example, if we assume that 
$R^v - C^\top P_{K(L),L}C$ is positive definite, and 
define
\#
W_{L} 
&=R^v - C^\top \big[P_{K(L),L}-P_{K(L),L}B (R^u+B^\top P_{K(L),L} B)^{-1} B^\top P_{K(L),L}\big]C,\label{eq:define_wll}
\#
 we obtain the  projected \emph{Gauss-Newton nested-gradient} update 
\begin{flalign}
&{\rm \textbf{Projected Gauss-Newton Nested-Gradient:}}\notag\\
 &\qquad\qquad\qquad\qquad\qquad\qquad\qquad L'=\PP^{GN}_{\Omega}\big[L+	\eta W_{L}^{-1}\nabla_L \tilde{\cC}(L)\Sigma_{K(L),L}^{-1}\big],&\label{equ:algorithm_nested_precond_L}
\end{flalign}
where the projection operator $\PP^{GN}_{\Omega}[\cdot]$ is defined as
\#\label{equ:def_proj_GN}
\PP^{GN}_{\Omega}[\tilde L]=\argmin_{\check{L}\in\Omega} ~\tr\Big[W_L^{1/2}\big(\check{L}-\tilde L\big)\Sigma_{K(L),L} \big(\check{L}-\tilde L\big)^\top W_L^{1/2}\Big]. 
\#
The weight matrices $\Sigma_{K(L),L}$ and $W_L$ both depend on the current iterate $L$ in \eqref{equ:algorithm_nested_precond_L}.  

Based on the updates above, it is straightforward to develop model-free versions of NG algorithms using sampled data. In particular,  we propose to first use zeroth-order optimization algorithms to find the stationary point of the inner LQR problem after a finite number of iterations.  
Since the 
Gauss-Newton  update cannot be estimated  via sampling, only the PG and natural PG updates are converted to model-free versions.  
The approximate   stationary point is then substituted into the outer-loop to perform the projected (natural) NG updates.  
Details of our model-free version updates are provided in   \S\ref{sec:append_alg}. Building upon our theory next, high-probability convergence guarantees for these model-free counterparts can be established as in  the LQR setting in   \cite{fazel2018global}.  


\section{Convergence Results}\label{sec:theory}  
We start by showing the convergence    results  for the inner optimization problem as follows, which establishes the \emph{globally linear} convergence rates of the inner-loop policy gradient  updates in  \eqref{equ:inner_gd}-\eqref{equ:inner_gauss_newton}. 

\begin{proposition}[Global Convergence Rate of Inner-Loop Update]\label{prop:inner_full_grad_conv}
Suppose $\EE_{x_0\sim\cD}x_0x_0^\top>0$ and  Assumption \ref{assum:invertibility} holds.  
	For any $L\in\underline{\Omega}$, where  $\underline{\Omega}$ is defined in \eqref{equ:def_Omega_underline},  
it follows that: i) the inner-loop LQR problem always admits a solution,  with a positive definite $P_{K(L),L}$ and a stabilizing control pair $(K(L),L)$; ii)  there exists a constant stepsize $\alpha>0$ for   each of 
the  updates \eqref{equ:inner_gd}-\eqref{equ:inner_gauss_newton} such that the generated  control pair sequences $\{(K_\tau,L)\}_{\tau\geq 0}$  are always stabilizing; 
iii) the  updates \eqref{equ:inner_gd}-\eqref{equ:inner_gauss_newton} enables the convergence of the cost  value sequence $\{\cC(K_\tau,L)\}_{\tau\geq 0}$   to the optimum $\cC(K(L),L)$   with linear rate. 
\end{proposition}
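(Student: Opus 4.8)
The plan is to reduce everything, for a fixed $L\in\underline{\Omega}$, to the policy-gradient analysis of \cite{fazel2018global} for the linear quadratic regulator (LQR). As already noted in the excerpt, with $\tilde{Q}_L:=Q-L^\top R^vL>0$ and $\tilde{A}_L:=A-CL$ the inner problem $\min_K\cC(K,L)$ is exactly the expected cost of the LQR instance with dynamics $\tilde{A}_L$, input matrix $B$, state weight $\tilde{Q}_L$, input weight $R^u$, and initial distribution $\cD$. I would first establish part i) (well-posedness of this instance); given that, parts ii) and iii) are essentially a transcription of \cite{fazel2018global} with $(A,Q)$ replaced by $(\tilde{A}_L,\tilde{Q}_L)$.

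For part i), detectability of $(\tilde{A}_L,\tilde{Q}_L^{1/2})$ is automatic from $\tilde{Q}_L>0$, so it suffices to exhibit one positive-definite solution of the inner Riccati equation \eqref{equ:Inner_Riccati} whose induced gain \eqref{equ:KL_def} is stabilizing; standard discrete-time Riccati theory then yields uniqueness, and Corollary~4 of \cite{fazel2018global} gives the equivalence of stationarity and global optimality for $\cC(\cdot,L)$. I would produce the solution by comparison with the GARE solution $P^*$. Let $\mathcal{R}_L(P):=\tilde{Q}_L+\tilde{A}_L^\top P\tilde{A}_L-\tilde{A}_L^\top PB(R^u+B^\top PB)^{-1}B^\top P\tilde{A}_L$ be the inner Riccati operator, which is well defined for $P\ge 0$ (as $R^u>0$) and monotone on the cone of nonnegative-definite matrices. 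Under Assumption \ref{assum:invertibility} i) the coefficient matrix in \eqref{equ:P_GARE} has positive-definite $(1,1)$-block $R^u+B^\top P^*B$ and negative-definite Schur complement, so a one-step completion of squares rewrites the GARE as the identity
\[
x^\top P^*x=\min_{u}\max_{v}\Big\{x^\top Qx+u^\top R^uu-v^\top R^vv+(Ax+Bu+Cv)^\top P^*(Ax+Bu+Cv)\Big\},\qquad\forall\,x .
\]
Replacing the inner maximization by the single feasible choice $v=-Lx$ and then minimizing over $u$ gives $x^\top P^*x\ge x^\top\mathcal{R}_L(P^*)x$ for all $x$, i.e.\ $P^*\ge\mathcal{R}_L(P^*)$. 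Hence the iteration $P_{n+1}=\mathcal{R}_L(P_n)$ with $P_0=P^*$ is nonincreasing in the semidefinite order (by monotonicity of $\mathcal{R}_L$), while $P_n\ge\tilde{Q}_L>0$ for all $n$ since $\mathcal{R}_L(P)=\tilde{Q}_L+\tilde{A}_L^\top\big[P-PB(R^u+B^\top PB)^{-1}B^\top P\big]\tilde{A}_L\ge\tilde{Q}_L$ for $P\ge 0$. Therefore $P_n$ converges to some $P_{K(L),L}$ with $0<\tilde{Q}_L\le P_{K(L),L}\le P^*$ solving \eqref{equ:Inner_Riccati}; taking $K(L)$ as in \eqref{equ:KL_def}, the Lyapunov identity $P_{K(L),L}=(\tilde{A}_L-BK(L))^\top P_{K(L),L}(\tilde{A}_L-BK(L))+\tilde{Q}_L+K(L)^\top R^uK(L)$ together with $P_{K(L),L}>0$ forces $\rho(A-BK(L)-CL)<1$. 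This proves i): $(K(L),L)$ is stabilizing, $\cC(K(L),L)=\EE_{x_0\sim\cD}(x_0^\top P_{K(L),L}x_0)<\infty$, and $K(L)$ is the unique stabilizing stationary point and the global minimizer of $K\mapsto\cC(K,L)$.

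Given i), $\cC(\cdot,L)$ meets the standing hypotheses of \cite{fazel2018global}: $\tilde{Q}_L,R^u>0$; $\EE_{x_0\sim\cD}x_0x_0^\top>0$, so $\Sigma_{K,L}\ge\EE_{x_0\sim\cD}x_0x_0^\top>0$ for every stabilizing $(K,L)$; and, as is standard in that literature, the inner loop is started from some $K_0$ with $(K_0,L)$ stabilizing. I would then import, with $(A,Q)$ replaced by $(\tilde{A}_L,\tilde{Q}_L)$, the two structural facts of \cite{fazel2018global}: the almost-smoothness expansion of $\cC(\cdot,L)$, and the gradient-domination inequality $\cC(K,L)-\cC(K(L),L)\le c_L\|\nabla_K\cC(K,L)\|_F^2$ on any sublevel set of $\cC(\cdot,L)$, where $c_L<\infty$ by i) and $\Sigma_{K,L}>0$. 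Combining these yields, for each of the updates \eqref{equ:inner_gd}--\eqref{equ:inner_gauss_newton}, a constant stepsize $\alpha>0$ (depending on $L$ and on the initial sublevel set) under which the iterates never leave that sublevel set, so $(K_\tau,L)$ stays stabilizing (part ii)), and under which $\cC(K_\tau,L)-\cC(K(L),L)$ contracts by a fixed factor in $(0,1)$ per step, giving the linear rate (part iii)); the three cases are the gradient-descent, natural-gradient, and Gauss--Newton statements of \cite{fazel2018global} specialized to the inner instance.

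I expect the only genuine obstacle to be part i): for a generic $L\in\underline{\Omega}$ the pair $(\tilde{A}_L,B)$ is not assumed stabilizable, so finiteness of the inner LQR value does not come for free and must be extracted from Assumption \ref{assum:invertibility} i) via the supersolution property $P^*\ge\mathcal{R}_L(P^*)$ and the monotone iteration above (which as a bonus delivers the uniform a priori bound $P_{K(L),L}\le P^*$, useful for the outer-loop analysis later). Once i) is secured, parts ii) and iii) are a routine specialization of \cite{fazel2018global}, the only bookkeeping being to confirm that its constants remain finite for $(\tilde{A}_L,B,\tilde{Q}_L,R^u)$, which is immediate from i).
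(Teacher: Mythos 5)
Your overall strategy coincides with the paper's: fix $L\in\underline{\Omega}$, view the inner problem as the LQR instance $(\tilde A_L,B,\tilde Q_L,R^u)$, and import the analysis of \cite{fazel2018global} for the rates. Your treatment of part i), however, is a genuinely different and self-contained route. The paper proves well-posedness (its Lemma \ref{lemma:optim_of_KL}) by observing that the saddle-point value gives $V_{K^*,L}\le V_{K^*,L^*}<\infty$, hence $(\tilde A_L,B)$ is stabilizable, and then citing standard Riccati/LQR existence results; you instead derive the supersolution inequality $P^*\ge\mathcal{R}_L(P^*)$ from the min--max form of the GARE (valid under Assumption \ref{assum:invertibility} i) since the static game is convex--concave), run the monotone Riccati iteration from $P_0=P^*$, and extract stability of $(K(L),L)$ from the Lyapunov identity with $\tilde Q_L>0$. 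This is correct, avoids the external citation for existence, and delivers the same a priori bounds $\tilde Q_L\le P_{K(L),L}\le P^*$ that the paper also records and later reuses; the Lyapunov spectral-radius step you use is exactly the paper's \eqref{equ:stability_trick}.

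The weak point is part ii). You assert that almost-smoothness plus gradient domination yield a constant stepsize under which ``the iterates never leave the sublevel set, so $(K_\tau,L)$ stays stabilizing.'' As written this is circular: the one-step descent estimates of \cite{fazel2018global} (and the cost-difference lemma behind them) are only meaningful when the \emph{next} iterate $K'$ is already known to be stabilizing, since otherwise $\cC(K',L)$ and $\Sigma_{K',L}$ are not finite; sublevel-set invariance therefore cannot be deduced from those two facts alone. This is precisely the point the paper treats as the additional technical content beyond \cite{fazel2018global}: for the Gauss--Newton and natural-gradient updates \eqref{equ:inner_natural_gd}--\eqref{equ:inner_gauss_newton} it combines the matrix monotonicity $P_{K',L}\le P_{K,L}$ with \eqref{equ:stability_trick}, a compactness/positive-distance argument (its Lemma \ref{lemma:dist_compact}) showing a uniform stabilizing radius $\delta_K$ around the current iterate, and an incremental-stepping argument to cover the full stepsize range; for the plain gradient update \eqref{equ:inner_gd}, where only $\cC$ (not $P_{K,L}$) is monotone, it falls back on the stability argument supplemented in the later version of \cite{fazel2018global}. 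Your proof would be complete if you either reproduce such an argument or explicitly invoke the supplemented statements of \cite{fazel2018global} that include preservation of stability; merely combining almost-smoothness with gradient domination does not close this step. Part iii) is then, as you say, a direct specialization of Theorem 7 of \cite{fazel2018global}, which matches the paper.
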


Proof of 
Proposition \ref{prop:inner_full_grad_conv}, deferred to \S\ref{subsec:proof_inner_full_grad_conv}, 
primarily  follows that for  Theorem $7$ in \cite{fazel2018global}. 
However, we provide additional stability arguments 
for  the control pair $(K_\tau,L)$ 
as the inner loop update proceeds. 

%
%
%
%
%

We then establish the global convergence  of the projected NG updates \eqref{equ:algorithm_nested_L}, \eqref{equ:algorithm_natural_nested_L}, and \eqref{equ:algorithm_nested_precond_L}. Before we state the results,  we define the \emph{gradient mapping}  for all three projection operators $\PP_{\Omega}^{GN},\PP_{\Omega}^{NG},$ and $\PP_{\Omega}^{GD}$ at any $L\in\Omega$ as follows
\#
&\hat{G}_{L}^*:=\frac{\PP_{\Omega}^{GN}\big[L + \eta W_{L}^{-1}\nabla_L \tilde{\cC}(L)\Sigma_{K(L),L}^{-1}\big]-L}{2\eta}\quad  \qquad 
\tilde{G}_{L}^*:=\frac{\PP_{\Omega}^{NG}\big[L + \eta \nabla_L \tilde{\cC}(L)\Sigma_{K(L),L}^{-1}\big]-L}{2\eta}\notag\\ 
&\qquad\qquad\qquad\qquad\qquad\qquad\check{G}_{L}^*:=\frac{\PP_{\Omega}^{GD}\big[L + \eta \nabla_L \tilde{\cC}(L)]\big]-L}{2\eta}.\label{equ:def_check_G}
\#
Note that {gradient mappings} have been commonly adopted in the analysis of projected gradient descent methods in constrained optimization   \citep{nesterov2013introductory}.

\begin{theorem}[Global Convergence Rate of  Outer-Loop Update]\label{thm:full_grad_conv}
	Suppose $\EE_{x_0\sim\cD}x_0x_0^\top>0$,  Assumption \ref{assum:invertibility} holds, and the  initial maximizer  control $L_0\in\Omega$, where $\Omega$ is defined in \eqref{equ:def_Omega}. 
Then  it follows that: 
i) at   iteration $t$  of the projected NG updates   \eqref{equ:algorithm_nested_L}, \eqref{equ:algorithm_natural_nested_L}, and \eqref{equ:algorithm_nested_precond_L},   the inner-loop   updates   
\eqref{equ:inner_gd}-\eqref{equ:inner_gauss_newton} converge   to    $K(L_t)$   with linear rate; 
ii) the  control pair sequences $\{(K(L_t),L_t)\}_{t\geq 0}$ generated from  \eqref{equ:algorithm_nested_L}, \eqref{equ:algorithm_natural_nested_L}, and \eqref{equ:algorithm_nested_precond_L} are always stabilizing (regardless of the stepsize choice $\eta$); 
iii) with proper choices of the stepsize $\eta$, the  updates   \eqref{equ:algorithm_nested_L}, \eqref{equ:algorithm_natural_nested_L}, and \eqref{equ:algorithm_nested_precond_L} 
 all converge to the Nash equilibrium $(K^*,L^*)$ of the zero-sum LQ game  \eqref{equ:nonconvex_concave_def} with $\cO(1/{t})$ rate, in the sense that the sequences  $\big\{t^{-1}\sum_{\tau=0}^{t-1}\big\|\hat{G}_{L_\tau}^*\big\|^2\big\}_{t\geq 1}$, $\big\{t^{-1}\sum_{\tau=0}^{t-1}\big\|\tilde{G}_{L_\tau}^*\big\|^2\big\}_{t\geq 1}$, and $\big\{t^{-1}\sum_{\tau=0}^{t-1}\big\|\check{G}_{L_\tau}^*\big\|^2\big\}_{t\geq 1}$ all converge to zero with $\cO(1/{t})$ rate. 
\end{theorem}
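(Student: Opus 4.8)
The plan is to treat the three claims separately, deriving (i)--(ii) quickly from Proposition~\ref{prop:inner_full_grad_conv} together with the invariance of $\Omega$ under the projections, and proving (iii) by a projected-gradient-ascent argument for the reduced objective $\tilde{\cC}(L):=\cC(K(L),L)$ over the compact convex set $\Omega$ (Lemma~\ref{lemma:convex_Omega}). \emph{Claims (i)--(ii).} Each operator $\PP_{\Omega}^{GD}$, $\PP_{\Omega}^{NG}$, $\PP_{\Omega}^{GN}$ maps into $\Omega$, so $L_t\in\Omega$ for every $t$ regardless of the stepsize $\eta$; and since $Q-L^\top R^v L\ge\zeta\cdot\Ib>0$ on $\Omega$ we have $\Omega\subset\underline{\Omega}$, so Proposition~\ref{prop:inner_full_grad_conv} applies verbatim at each $L_t$. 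This yields (i), the linear convergence of \eqref{equ:inner_gd}--\eqref{equ:inner_gauss_newton} to $K(L_t)$ with an inner stepsize $\alpha$ that is uniform in $L\in\Omega$ by compactness, and the stability part of (ii), namely $P_{K(L_t),L_t}>0$ and $\rho(A-BK(L_t)-CL_t)<1$. The stepsize-independence in (ii) is exactly because this conclusion uses only $L_t\in\Omega$, which the projection enforces no matter how large the unprojected step $L_t+\eta\,\nabla_L\tilde{\cC}(L_t)$ (or its preconditioned analogue) is; and since the inner loop converges linearly, a fixed number of inner steps suffices to make the approximate minimizer used in the outer update within any prescribed tolerance of $K(L_t)$, so the outer analysis below may be carried out treating $K(L_t)$ as available up to a negligible perturbation.

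\emph{Claim (iii).} The heart of the argument is a uniform \emph{almost-smoothness} estimate for $\tilde{\cC}$ on $\Omega$: a constant $\ell$ (depending on the problem data and on $\zeta$) with $\tilde{\cC}(L')\ge\tilde{\cC}(L)+\langle\nabla_L\tilde{\cC}(L),\,L'-L\rangle-\tfrac{\ell}{2}\|L'-L\|_F^2$ for all $L,L'\in\Omega$. Deriving this is where the lack of global smoothness of the LQ objective—the difficulty identified for LQR in \cite{fazel2018global}—must be confronted; restricting $L$ to the compact $\Omega\subset\underline{\Omega}$ is precisely what keeps the relevant matrices bounded, and I would obtain the estimate from the inner Riccati equation \eqref{equ:Inner_Riccati}, the Lyapunov equation \eqref{equ:P_KL_Sol}, and standard Lyapunov/Riccati perturbation bounds, using uniform bounds $\|P_{K(L),L}\|\le b_P$, $\|\Sigma_{K(L),L}\|\le b_\Sigma$, $\sigma_{\min}(\Sigma_{K(L),L})\ge\sigma_{\min}\big(\EE_{x_0\sim\cD}x_0x_0^\top\big)>0$, and (for the Gauss--Newton variant) two-sided bounds $\underline{w}\cdot\Ib\le W_L\le\overline{w}\cdot\Ib$ on $\Omega$. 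Plugging the update $L'=\PP_{\Omega}^{GD}[L+\eta\,\nabla_L\tilde{\cC}(L)]$ and the definition \eqref{equ:def_check_G} of $\check{G}_L^*$ into this inequality, together with the variational inequality characterizing the Euclidean projection onto the convex set $\Omega$, yields an ascent lemma $\tilde{\cC}(L_{t+1})\ge\tilde{\cC}(L_t)+c\,\eta\,\|\check{G}_{L_t}^*\|^2$ for $\eta$ small enough relative to $\ell$. Summing over $\tau=0,\dots,t-1$ and using that $\tilde{\cC}$ is bounded on the compact $\Omega$—indeed bounded above by the game value, since $\tilde{\cC}(L)=\min_K\cC(K,L)\le\cC(K^*,L)\le\cC(K^*,L^*)$ by Lemma~\ref{lemma:saddle_point_formal}—gives $t^{-1}\sum_{\tau=0}^{t-1}\|\check{G}_{L_\tau}^*\|^2=\cO(1/t)$. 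For the natural and Gauss--Newton variants I would rerun the same ascent argument in the $\Sigma_{K(L),L}$- and $W_L\otimes\Sigma_{K(L),L}$-weighted inner products underlying \eqref{equ:def_proj_NG}--\eqref{equ:def_proj_GN}, using the above two-sided bounds to pass back to the Euclidean norm with adjusted constants, obtaining the same $\cO(1/t)$ rate for $\tilde{G}_{L_\tau}^*$ and $\hat{G}_{L_\tau}^*$.

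\emph{Connection to the NE.} At $L^*$, which by the choice $0<\zeta<\sigma_{\min}(\tilde Q_{L^*})$ lies in the interior of $\Omega$, the policy-gradient formula \eqref{equ:policy_grad_L_form} together with \eqref{equ:L_GARE} gives $\nabla_L\tilde{\cC}(L^*)=\nabla_L\cC(K^*,L^*)=0$, hence $\check{G}_{L^*}^*=\tilde{G}_{L^*}^*=\hat{G}_{L^*}^*=0$; conversely, at any interior $L\in\Omega$ with vanishing gradient mapping one has $\nabla_L\tilde{\cC}(L)=\nabla_L\cC(K(L),L)=0$ and $\nabla_K\cC(K(L),L)=0$ by the definition of $K(L)$, so Lemma~\ref{lemma:stationary_point}—whose hypotheses hold because $\Sigma_{K(L),L}$ is full rank and $P_{K(L),L}>0$ by Proposition~\ref{prop:inner_full_grad_conv}—identifies $(K(L),L)$ with $(K^*,L^*)$. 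Thus the $\cO(1/t)$ decay of the averaged squared gradient mappings is the precise sense of convergence to the NE claimed in the statement; strengthening this to convergence of the iterates themselves (and to the locally linear rate) would additionally invoke a gradient-domination inequality for $\tilde{\cC}$ on $\Omega$.

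\emph{Main obstacle.} I expect the hardest step to be the uniform almost-smoothness estimate on $\Omega$: one must control how $P_{K(L),L}$ and $\Sigma_{K(L),L}$—and hence $\nabla_L\tilde{\cC}(L)$—vary with $L$, even though the inner minimizer $K(L)$ is defined only implicitly through the Riccati equation \eqref{equ:Inner_Riccati}, and every constant must be kept uniform over $\Omega$ despite the objective blowing up as $L$ nears the boundary of the stabilizing region. This is exactly the non-smoothness difficulty flagged for LQR in \cite{fazel2018global}, here compounded by the nested structure and by the need to push the weighted projections of the natural and Gauss--Newton variants through the same estimates.
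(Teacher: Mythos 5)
Your claims (i)--(ii) are argued exactly as in the paper: $\Omega\subset\underline{\Omega}$, so Proposition~\ref{prop:inner_full_grad_conv} gives stability of $(K(L_t),L_t)$ and linear inner-loop convergence regardless of $\eta$, since the projections keep $L_t\in\Omega$. For (iii) your overall skeleton also matches the paper (an ascent inequality for $\tilde{\cC}$ on $\Omega$ in terms of the gradient mappings, summation, boundedness of $\tilde{\cC}$ by $\cC(K^*,L^*)$, and identification of vanishing gradient mappings with the NE via Lemma~\ref{lemma:stationary_point}), but the quantitative engine differs. The paper does not use a generic uniform almost-smoothness constant $\ell$: it proves an exact two-sided value-difference identity (Lemma~\ref{lemma:Cost_Diff3}), obtained from the cost-difference lemma by completing the square with $E_L^*=0$, in which the quadratic term carries the structured matrix $W_L$ and the state covariance is evaluated at the \emph{updated} iterate $(K(L_{t+1}),L_{t+1})$; the mismatch between $\Sigma_{L_{t+1}}^*$ and $\Sigma_{L_t}^*$ is then controlled by an explicit perturbation analysis of $P_L^*$, $K(L)$ and $\Sigma_L^*$ (Proposition~\ref{prop:P_L_Perturb} and Lemma~\ref{lemma:perturb_Sigma_L}, built on Riccati perturbation theory), and the weighted projections are handled through the variational inequalities of Lemma~\ref{lemma:proj_prop} together with the two-sided bounds $\mu\cdot\Ib\le\Sigma_L^*$, $\nu\cdot\Ib\le W_L\le R^v$ of Lemma~\ref{lemma:bound_P_L_Sigma_L}. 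This buys explicit, checkable stepsize conditions and constants, which is the main technical content of the proof.

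What you leave unproven is therefore precisely the bulk of the paper's argument. To be fair, your deferred estimate is not wrong: since $\Omega$ is convex and compact (Lemma~\ref{lemma:convex_Omega}) and $P_L^*$, hence $K(L)$, $\Sigma_L^*$ and $\nabla_L\tilde{\cC}(L)=2F_L^*\Sigma_L^*$, are continuously differentiable on the open set $\underline{\Omega}\supset\Omega$ (the implicit-function-theorem argument of Lemma~\ref{lemma:conti_PLs} and Lemma~\ref{lemma:conti_Sigma}), a uniform Lipschitz constant for $\nabla_L\tilde{\cC}$ on $\Omega$ exists, and your projected-ascent computation then goes through; but the constant is non-constructive, whereas the theorem's ``proper choice of $\eta$'' is made concrete in the paper. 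Two smaller points: your worry that the objective ``blows up as $L$ nears the boundary of the stabilizing region'' does not arise on $\Omega$, because Lemma~\ref{lemma:optim_of_KL} gives $\tilde Q_L\le P_L^*\le P^*$ and $\tilde{\cC}(L)\le\cC(K^*,L^*)$ uniformly, which is exactly why the restriction to $\Omega$ suffices; and your digression about running only finitely many inner iterations and treating $K(L_t)$ ``up to a negligible perturbation'' is not needed (and would require an error-propagation argument you do not supply), since the theorem's outer updates are defined with the exact $K(L_t)$ and claim (i) only asserts linear convergence of the inner loop.
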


%
%

Since $\Omega\subset  \underline{\Omega}$, 
the first two arguments follow directly  from Proposition \ref{prop:inner_full_grad_conv}. 
The last argument shows that   the iterate $(K(L_t),L_t)$ generated from the projected NG updates   converges  with a sublinear rate. 
Detailed proof of Theorem \ref{thm:full_grad_conv} is provided  in  \S\ref{subsec:proof_full_grad_conv}.

Due to the nonconvexity-nonconcavity of the problem (see Lemma \ref{lemma:nonconvex_nonconcave}),   our result is pertinent to   the  recent  work on finding a  first-order stationary point   for nonconvex-nonconcave minimax games  under the {P}olyak-{\L}ojasiewicz (P{\L})-condition for one of the players \citep{sanjabi2018solving}.  Interestingly,   the LQ games considered here also satisfy the one-sided P{\L}-condition in \cite{sanjabi2018solving}, since for a given $L\in\underline{\Omega}$,  the inner problem is an LQR, which enables the use of Lemma $11$ in \cite{fazel2018global} to show this.  However, as recognized by  \cite{fazel2018global} for LQR problems, the main challenge of the LQ games here in contrast to the   minimax game  setting in \cite{sanjabi2018solving} is coping  with the lack of smoothness in the objective function. 

This $O(1/t)$ rate  
   matches  the sublinear convergence rate to first-order stationary points, instead of (local) Nash equilibrium,  in  \cite{sanjabi2018solving,nouiehed2019solving}. 
In contrast, by the landscape of zero-sum LQ games shown in Lemma \ref{lemma:stationary_point}, our convergence is to the \emph{global NE} of the game, if the projection is not effective at the accumulation point. In fact, in this case, the convergence rate can be improved to be linear, as to be introduced next in Theorem \ref{thm:full_grad_conv_local}.  
   In addition, our rate also matches      the (worst-case) \emph{global}  convergence  rate of gradient descent and second-order algorithms for nonconvex optimization, 
either under the smoothness  assumption of the objective \citep{cartis2010complexity,cartis2017worst}, or for a certain class of non-smooth objectives \citep{khamaru2018convergence}.

Compared to \cite{fazel2018global}, the nested-gradient algorithms cannot be shown to have  \emph{globally linear} convergence rates so far, owing  to the additional nonconcavity   on $L$ added to the standard LQR problems. Nonetheless, the P{\L}  property of the LQ games still enables linear convergence  rate near the Nash equilibrium.
 We formally establish the local convergence results in the following theorem, whose proof is provided in \S\ref{subsec:proof_full_grad_conv_local}.

 
 


\begin{theorem}[Local Convergence Rate   of  Outer-Loop Update]\label{thm:full_grad_conv_local}
	Under the conditions of Theorem \ref{thm:full_grad_conv}, 
	the projected NG updates   \eqref{equ:algorithm_nested_L}, \eqref{equ:algorithm_natural_nested_L}, and \eqref{equ:algorithm_nested_precond_L}
all 
	have locally linear convergence rates   around the  Nash equilibrium $(K^*,L^*)$ of the   LQ game  \eqref{equ:nonconvex_concave_def}, in the sense that   the cost  value sequence $\{\cC(K(L_t),L_t)\}_{t\geq 0}$  converges to  $\cC(K^*,L^*)$, and the nested gradient norm square sequence  $\{\|\nabla_L \tilde{\cC}(L_t)\|^2\}_{t\geq 0}$  converges to zero,  both  with linear rates. 
\end{theorem}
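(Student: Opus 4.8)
The plan is to reduce the projected updates to \emph{unconstrained} preconditioned gradient ascent in a neighborhood of the Nash equilibrium $(K^*,L^*)$, and then run the standard Polyak--{\L}ojasiewicz (P{\L}) contraction argument, whose ingredients are local analytic properties of the outer objective $\tilde{\cC}(L)=\cC(K(L),L)$. Since Assumption \ref{assum:invertibility} provides a constant $\zeta$ with $0<\zeta<\sigma_{\min}(\tilde Q_{L^*})$ \emph{strictly}, and $L\mapsto Q-L^\top R^v L$ is continuous, there is a radius $r_0>0$ such that the Frobenius ball $\mathcal{N}_0:=\{L:\|L-L^*\|_F\le r_0\}$ lies in the interior of $\Omega$; hence on $\mathcal{N}_0$ each of $\PP^{GD}_{\Omega},\PP^{NG}_{\Omega},\PP^{GN}_{\Omega}$ acts as the identity, so the gradient mappings $\check{G}_{L}^*,\tilde{G}_{L}^*,\hat{G}_{L}^*$ in \eqref{equ:def_check_G} reduce to $\tfrac12\,\Phi_L(\nabla_L\tilde{\cC}(L))$ and the three updates take the generic form $L'=L+\eta\,\Phi_L\big(\nabla_L\tilde{\cC}(L)\big)$, where $\Phi_L$ is, respectively, the identity, $M\mapsto M\Sigma_{K(L),L}^{-1}$, and $M\mapsto W_L^{-1}M\Sigma_{K(L),L}^{-1}$. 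By Proposition \ref{prop:inner_full_grad_conv}, for every $L\in\Omega\subset\underline{\Omega}$ the inner LQR is well posed with $P_{K(L),L}>0$ and $(K(L),L)$ stabilizing; moreover $\Sigma_{K(L),L}$ and $W_L$ are positive definite with eigenvalues in a fixed interval $[\kappa,\kappa']\subset(0,\infty)$ over $\mathcal{N}_0$, so $\langle M,\Phi_L(M)\rangle\ge\kappa_0\|M\|_F^2$ and $\|\Phi_L(M)\|_F\le\kappa_1\|M\|_F$ for constants $0<\kappa_0\le\kappa_1$ uniform on $\mathcal{N}_0$, and all three variants can be treated at once.

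Next I would argue that the iterates enter this basin. By Theorem \ref{thm:full_grad_conv} the Ces\`aro averages of $\|\check{G}_{L_\tau}^*\|^2$ (and of the other two) vanish, so $\liminf_\tau\|\check{G}_{L_\tau}^*\|=0$; by Lemmas \ref{lemma:saddle_point_formal} and \ref{lemma:stationary_point}, $L^*$ is the \emph{unique} point of $\Omega$ at which the outer gradient mapping vanishes, so by continuity of $L\mapsto\check{G}_L^*$ and compactness of $\Omega$, for every $r<r_0$ there is $\delta>0$ with $\|\check{G}_L^*\|\ge\delta$ whenever $L\in\Omega$ and $\|L-L^*\|_F>r$. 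Hence $L_\tau$ visits every neighborhood of $L^*$ infinitely often, and it suffices to show that once an iterate lands in a sufficiently small $\mathcal{N}\subseteq\mathcal{N}_0$, every later iterate stays in $\mathcal{N}_0$ and $\tilde{\cC}(L^*)-\tilde{\cC}(L_\tau)$ contracts geometrically.

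On $\mathcal{N}_0$ I would establish, with constants uniform over $\mathcal{N}_0$: (i) \textbf{gradient dominance} $\tilde{\cC}(L^*)-\tilde{\cC}(L)\le\mu^{-1}\|\nabla_L\tilde{\cC}(L)\|_F^2$; (ii) \textbf{quadratic growth} $\tilde{\cC}(L^*)-\tilde{\cC}(L)\ge\nu\|L-L^*\|_F^2$; (iii) an \textbf{almost-smoothness} ascent bound $\tilde{\cC}(L')\ge\tilde{\cC}(L)+\langle\nabla_L\tilde{\cC}(L),L'-L\rangle-\tfrac{\beta}{2}\|L'-L\|_F^2$ for $L,L'\in\mathcal{N}_0$; and (iv) a Lipschitz bound $\|\nabla_L\tilde{\cC}(L)\|_F\le\ell\|L-L^*\|_F$. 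Inserting $L'=L+\eta\,\Phi_L(\nabla_L\tilde{\cC}(L))$ into (iii), using the well-conditioning of $\Phi_L$ and then (i), gives, for $\eta$ in a range determined by $\kappa_0,\kappa_1,\beta$, the contraction $\tilde{\cC}(L^*)-\tilde{\cC}(L')\le(1-c\,\eta\,\mu)\big(\tilde{\cC}(L^*)-\tilde{\cC}(L)\big)$ with $c=\kappa_0/2$; then (ii) yields $\|L'-L^*\|_F\le\sqrt{\nu^{-1}\,(\tilde{\cC}(L^*)-\tilde{\cC}(L'))}$, so choosing $\mathcal{N}$ small enough closes the induction and confines the whole tail to $\mathcal{N}_0$. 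This proves $\cC(K(L_\tau),L_\tau)=\tilde{\cC}(L_\tau)\to\tilde{\cC}(L^*)=\cC(K^*,L^*)$ linearly; combining (iv) with (ii) gives $\|\nabla_L\tilde{\cC}(L_\tau)\|_F^2\le(\ell^2/\nu)\big(\tilde{\cC}(L^*)-\tilde{\cC}(L_\tau)\big)$, so the nested-gradient norm squared also converges to zero linearly.

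The main obstacle is the local gradient dominance (i): unlike the inner problem, which is an LQR and inherits P{\L} directly from Lemma~11 of \cite{fazel2018global}, the outer map $\tilde{\cC}(L)=\min_K\cC(K,L)$ is not itself an LQR cost, because the best response $K(L)$ varies with $L$. My plan is to compare $\tilde{\cC}$ near $L^*$ with the genuine (sign-flipped) LQR objective $L\mapsto\cC(K^*,L)$ obtained by freezing the minimizer at $K^*$: by the saddle-point property this auxiliary objective is maximized at $L^*$, and under Assumption \ref{assum:invertibility}(ii) (which gives $Q-(L^*)^\top R^v L^*>0$) it enjoys an LQR-type gradient-dominance estimate on a neighborhood. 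One then applies the implicit function theorem to the inner Riccati equation \eqref{equ:Inner_Riccati} — using the well-posedness and Lipschitz estimates of Proposition \ref{prop:inner_full_grad_conv} to keep everything stabilizing — to obtain $\|K(L)-K^*\|=\cO(\|L-L^*\|)$, hence $|\tilde{\cC}(L)-\cC(K^*,L)|=\cO(\|L-L^*\|^2)$ and $\|\nabla_L\tilde{\cC}(L)-\nabla_L\cC(K^*,L)\|_F=\cO(\|L-L^*\|)$ on a ball; since both perturbations vanish to second and first order at $L^*$, gradient dominance transfers from $\cC(K^*,\cdot)$ to $\tilde{\cC}$, as do (ii)--(iv). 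The real technical burden is verifying that the constants $\mu,\nu,\beta,\ell$ are uniform on $\mathcal{N}_0$, which ultimately rests on the smoothness (indeed real-analyticity) of $L\mapsto P_{K(L),L}$ and $L\mapsto\Sigma_{K(L),L}$ on the open set of stabilizing policies, and it is precisely the absence of \emph{global} smoothness of $\cC$ that confines this argument to a neighborhood of the Nash equilibrium.
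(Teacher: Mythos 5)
Your structural steps—projection is inactive on a ball around $L^*$, all three updates become preconditioned ascent with uniformly well-conditioned preconditioners, and a P{\L}-plus-ascent argument then gives a linear contraction—are sound and parallel the first part of the paper's proof. The genuine gap is in your central ingredient, the local gradient dominance (i) for $\tilde{\cC}$, which you propose to obtain by "transfer" from the frozen objective $\cC(K^*,\cdot)$. That implication does not hold as stated: since $\tilde{\cC}(L)=\min_K\cC(K,L)\le\cC(K^*,L)$ with equality at $L^*$, the suboptimality gap of $\tilde{\cC}$ is the \emph{larger} one, so from $|\tilde{\cC}(L)-\cC(K^*,L)|=\cO(\|L-L^*\|^2)$ and $\|\nabla_L\tilde{\cC}(L)-\nabla_L\cC(K^*,L)\|=\cO(\|L-L^*\|)$ you only reach $\tilde{\cC}(L^*)-\tilde{\cC}(L)\le c\,\|\nabla_L\tilde{\cC}(L)\|_F^2+C\,\|L-L^*\|^2$. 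Absorbing the quadratic remainder requires a reverse error bound $\|\nabla_L\tilde{\cC}(L)\|\ge c'\|L-L^*\|$, which neither your quadratic growth (ii) nor your Lipschitz bound (iv) (which points the wrong way) provides; and deducing it by subtracting the gradient perturbation from the strong concavity of $\cC(K^*,\cdot)$ fails because the perturbation constant, which involves the sensitivity of $K(L)$ in $L$, need not be smaller than the concavity constant. The gap is repairable—for instance, show $\tilde{\cC}$ is twice continuously differentiable near $L^*$ and use $\tilde{\cC}\le\cC(K^*,\cdot)$ with equality at $L^*$ to conclude that the Hessian of $\tilde{\cC}$ at $L^*$ is dominated by that of $\cC(K^*,\cdot)$, which is negative definite because $R^v-C^\top P^*C>0$, hence $\tilde{\cC}$ is locally strongly concave—but that additional curvature argument is exactly the crux and is absent from your proposal.

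For comparison, the paper obtains the P{\L}-type inequality in one stroke from the cost-difference lemma (Lemma \ref{lemma:Cost_Diff3}): setting $L'=L^*$ and completing the square gives $\cC(K^*,L^*)-\cC(K(L_t),L_t)\le\|\Sigma_{\tilde K_t,L^*}\|\cdot\tr\bigl(F_{L_t}^{*\top}W_{L_t}^{-1}F_{L_t}^*\bigr)$, where the only genuinely local requirement is that $(\tilde K_t,L^*)$ be stabilizing (true for $L_t$ near $L^*$ by continuity) so that $\Sigma_{\tilde K_t,L^*}$ is bounded; combining this with the one-step ascent bound from the same lemma and the projection-inactivity argument yields the linear rate with no Hessians, no frozen-$K^*$ auxiliary problem, and no implicit-function-theorem sensitivity of $K(L)$ beyond the continuity results already in the paper. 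Separately, your "iterates enter the basin" step asserts that $L^*$ is the unique zero of the gradient mapping on all of $\Omega$; Lemmas \ref{lemma:saddle_point_formal} and \ref{lemma:stationary_point} only rule out interior stationary points, so stationary points of the projected dynamics on the boundary of $\Omega$ are not excluded—though this part attempts more than the theorem, which is a local-rate statement, actually requires.
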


Theorem \ref{thm:full_grad_conv_local} shows that when the proposed NG updates \eqref{equ:algorithm_nested_L}, \eqref{equ:algorithm_natural_nested_L}, and \eqref{equ:algorithm_nested_precond_L} get closer to the  NE $(K^*,L^*)$, the local convergence rates can be improved from sublinear (see Theorem \ref{thm:full_grad_conv}) to linear. This resembles the convergence property of (Quasi)-Newton methods for nonconvex optimization, with globally sublinear and locally  linear convergence rates. To  the best of our knowledge, this appears to be the first such  result on  equilibrium-seeking for  
nonconvex-nonconcave  minimax games, even with the smoothness assumption as in \cite{sanjabi2018solving}.

We note that for the class of zero-sum LQ games that  Assumption \ref{assum:invertibility} ii) fails to  hold, there may not exists a  set $\Omega$ of the form  \eqref{equ:def_Omega} that contains the NE $(K^*,L^*)$. Even then, our global  convergence results in  Proposition \ref{prop:inner_full_grad_conv} and Theorem \ref{thm:full_grad_conv}  still hold. 
This is because  the convergence is established in the sense of gradient mappings. However, this may invalidate the statements on local convergence in Theorem \ref{thm:full_grad_conv_local}, as the proof relies on the ineffectiveness of the projection operator.

\section{Proofs of Main Results}\label{sec:append_main_proof}

In this section, we provide  proofs for the main results on the convergence of the nested-gradient algorithms stated in \S\ref{sec:theory}. 

For notational convenience, we (re-)define the following functions
\$
\text{value:}&\qquad V_{K,L}(x)=x^\top P_{K,L}x,\\
\text{action-value:}&\qquad Q_{K,L}(x,u,v)=x^\top Qx+u^\top R^uu-v^\top R^vv+V_{K,L}(Ax+Bu+Cv),\\
\text{advantage:}&\qquad A_{K,L}(x,u,v)=Q_{K,L}(x,u,v)-V_{K,L}(x).
\$
Also, we define 
\#
E_{K,L}&=(R^u+B^\top P_{K,L} B)K-B^\top P_{K,L}(A-CL),\label{equ:def_E_F_mu_1}\\
F_{K,L}&=(-R^v+C^\top P_{K,L} C)L-C^\top P_{K,L}(A-BK),\label{equ:def_E_F_mu_2}\\
\mu &=\sigma_{\min}\big(\EE_{x_0\sim\cD}x_0x_0^\top\big),\quad \nu=\sigma_{\min}\big(W_{L^*}\big), \label{equ:def_E_F_mu_3}
\# 
where we recall the definitions of $P_{K,L}$    and  $W_{L}$ in \eqref{equ:P_KL_Sol} and  \eqref{eq:define_wll}, respectively.
To simplify the notation, we denote   $\zeta_{K(L), L}$ by $\zeta_{L}^*$, for any notation $\zeta_{K,L}$, for example, $V_{K,L}$, $Q_{K,L}$, $A_{K,L}$, $P_{K,L}$, etc. 

\subsection{Auxiliary Lemmas}\label{subsec:helper_lemmas}

To proceed with the analysis, we first establish several   lemmas that are useful in the ensuing analysis. 
The first  lemma  links the value function $V_{K,L}$ and the advantage function $A_{K,L}$, when varying  $K$ and  $L$, which  plays a similar role as Lemma $7$ in \cite{fazel2018global}. 

\begin{lemma}[Cost Difference Lemma]\label{lemma:Cost_Diff}
	Suppose both $(K,L)$ and $(K',L')$ are  stabilizing.  
	   Let $\{x_t'\}_{t\geq 0}$ and $\{(u'_t,v'_t)\}_{t\geq 0}$ be the sequences of state and action pairs generated by $(K',L')$, i.e., starting from  $x_0'=x$ and satisfying $u'_t=-K'x_t', ~v'_t=-L'x_t'$.
Then, it follows that
	\#
	V_{K',L'}(x)-V_{K,L}(x)=\sum_{t\geq 0}A_{K,L}(x'_t,u'_t,v'_t).\label{equ:Cost_Diff}
	\#
	Moreover, we have
	\#\label{equ:Advantage_Diff}
	A_{K,L}(x,-K'x,-L'x)=&2x^\top(K'-K)^\top E_{K,L}x+x^\top (K'-K)^\top (R^u+B^\top P_{K,L}B)(K'-K)x\notag\\
	&\quad +2x^\top(L'-L)^\top F_{K,L}x+x^\top (L'-L)^\top (-R^v+C^\top P_{K,L}C)(L'-L)x \notag\\
	&\quad +2x^\top (L'-L)^\top C^\top P_{K,L}B(K'-K)x.
	\# 
\end{lemma}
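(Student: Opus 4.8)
The plan is to prove both identities \eqref{equ:Cost_Diff} and \eqref{equ:Advantage_Diff} by direct but careful computation, following the template of the cost-difference lemma for LQR in \cite{fazel2018global}.

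First I would establish \eqref{equ:Cost_Diff}. Since $(K',L')$ is stabilizing, the trajectory $\{x_t'\}_{t\geq 0}$ driven by $x_{t+1}' = (A-BK'-CL')x_t'$ satisfies $x_t' \to 0$, so $V_{K,L}(x_t')\to 0$ as $t\to\infty$. The key is the telescoping trick: for each $t$, write
\[
A_{K,L}(x_t',u_t',v_t') = Q_{K,L}(x_t',u_t',v_t') - V_{K,L}(x_t') = c_t(x_t',u_t',v_t') + V_{K,L}(x_{t+1}') - V_{K,L}(x_t'),
\]
using the definition of $Q_{K,L}$ together with the fact that $x_{t+1}' = Ax_t' + Bu_t' + Cv_t'$. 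Summing over $t=0,\dots,T$ telescopes the value-function terms to $V_{K,L}(x_{T+1}') - V_{K,L}(x_0')$, and since $V_{K',L'}(x_0') = \sum_{t\geq 0} c_t(x_t',u_t',v_t')$ by definition of the cost under the stabilizing pair $(K',L')$, taking $T\to\infty$ and using $V_{K,L}(x_{T+1}')\to 0$ yields \eqref{equ:Cost_Diff}.

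Next I would prove \eqref{equ:Advantage_Diff} by expanding $A_{K,L}(x,-K'x,-L'x)$ explicitly. Plugging $u=-K'x$, $v=-L'x$ into the definition of $Q_{K,L}$ gives
\[
A_{K,L}(x,-K'x,-L'x) = x^\top\big[(K')^\top R^u K' - (L')^\top R^v L' + (A-BK'-CL')^\top P_{K,L}(A-BK'-CL')\big]x - x^\top P_{K,L}x.
\]
Then I would use the Lyapunov equation \eqref{equ:P_KL_Sol}, which says $P_{K,L} = Q + K^\top R^u K - L^\top R^v L + (A-BK-CL)^\top P_{K,L}(A-BK-CL)$, to substitute for $x^\top P_{K,L} x$. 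Writing $A-BK'-CL' = (A-BK-CL) - B(K'-K) - C(L'-L)$ and expanding the quadratic form, the zeroth-order terms (those with no factor of $K'-K$ or $L'-L$) cancel against the corresponding terms from the Lyapunov substitution. Collecting the remaining terms: the pure quadratic pieces give $(K'-K)^\top(R^u+B^\top P_{K,L}B)(K'-K)$, $(L'-L)^\top(-R^v+C^\top P_{K,L}C)(L'-L)$, and the cross term $2(L'-L)^\top C^\top P_{K,L}B(K'-K)$; the linear (cross) pieces between the base matrices and the differences assemble exactly into $2(K'-K)^\top E_{K,L}$ and $2(L'-L)^\top F_{K,L}$ using the definitions \eqref{equ:def_E_F_mu_1}--\eqref{equ:def_E_F_mu_2}. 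This is essentially a matter of regrouping terms carefully and matching with the definitions of $E_{K,L}$ and $F_{K,L}$.

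The main obstacle is bookkeeping rather than conceptual: the expansion of $(A-BK-CL-B(K'-K)-C(L'-L))^\top P_{K,L}(\cdots)$ produces nine terms, and one must track which cross terms combine with the $R^u K$, $R^v L$ contributions from the Lyapunov equation to reproduce $E_{K,L}$ and $F_{K,L}$ in the correct form (including factors of $2$ and signs on the $R^v$ terms, since the maximizer's cost enters with a minus sign). I would be especially careful that the sign conventions in \eqref{equ:def_E_F_mu_2} — where $F_{K,L} = (-R^v+C^\top P_{K,L}C)L - C^\top P_{K,L}(A-BK)$ — are consistent with the expansion, since it is easy to drop a sign when the quadratic cost in $v$ is negative. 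No genuinely hard estimate is needed; once the algebra is organized, both identities follow.
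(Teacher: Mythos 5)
Your proposal is correct and follows essentially the same route as the paper: the telescoping identity $A_{K,L}(x_t',u_t',v_t') = c_t' + V_{K,L}(x_{t+1}') - V_{K,L}(x_t')$ summed along the $(K',L')$ trajectory for \eqref{equ:Cost_Diff}, and a direct expansion of $Q_{K,L}(x,-K'x,-L'x)-V_{K,L}(x)$ via the Lyapunov equation \eqref{equ:P_KL_Sol}, regrouped to match $E_{K,L}$ and $F_{K,L}$, for \eqref{equ:Advantage_Diff}. No gaps.
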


\begin{proof}
	Let  the sequence of costs generated under $(K',L')$ be denoted by $c'_t$. Then
	\$
	V_{K',L'}(x)-V_{K,L}(x)=&~\sum_{t\geq 0}c'_t-V_{K,L}(x)=\sum_{t\geq 0}\big[c'_t+V_{K,L}(x'_t)-V_{K,L}(x'_t)\big]-V_{K,L}(x)\\
	=&~\sum_{t\geq 0}\big[c'_t+V_{K,L}(x'_{t+1})-V_{K,L}(x'_t)\big]=\sum_{t\geq 0}A_{K,L}(x'_t,u'_t,v'_t).
	\$
	Thus, we establish the first argument.
	
	Moreover, for the second claim, let $u= - K'x$ and $v=- L'x$. Then
	\small
	\$
	&A_{K,L}(x,u,v)=Q_{K,L}(x,u,v)-V_{K,L}(x)\\
	&=x^\top \big[Q+(K')^\top R^u K'-(L')^\top R^v L'\big]x+x^\top(A-BK'-CL')^\top P_{K,L} (A-BK'-CL') x-V_{K,L}(x)\\
	&=2x^\top(K'-K)^\top \big[(R^u+B^\top P_{K,L}B)K-B^\top P_{K,L}(A-CL)\big]x+x^\top (K'-K)^\top (R^u+B^\top P_{K,L}B)\\
	&\quad\cdot(K'-K)x +2x^\top(L'-L)^\top \big[(-R^v+C^\top P_{K,L}C)L-C^\top P_{K,L}(A-BK)\big]x\\
	&\quad+2x^\top (L'-L)^\top C^\top P_{K,L}B(K'-K)x+x^\top (L'-L)^\top (-R^v+C^\top P_{K,L}C)(L'-L)x\\
	&=2x^\top(K'-K)^\top E_{K,L}x+x^\top (K'-K)^\top (R^u+B^\top P_{K,L}B)(K'-K)x+2x^\top(L'-L)^\top F_{K,L}x\\
	&\quad +x^\top (L'-L)^\top (-R^v+C^\top P_{K,L}C)(L'-L)x  +2x^\top (L'-L)^\top C^\top P_{K,L}B(K'-K)x,
	\$
	\normalsize
	which completes the proof. 
\end{proof} 

For any $L\in\underline{\Omega}$, recall that  $P_L^*$ is the solution to the inner-loop Riccati equation \eqref{equ:Inner_Riccati}, and $K(L)$ is the stationary point solution defined in \eqref{equ:KL_def}. We have the following properties of $P_L^*$ and $K(L)$.

 \begin{lemma}[Optimality of   $K(L)$ and Boundedness of $P_L^*$]\label{lemma:optim_of_KL}
Suppose  $\Sigma_{K,L}$ is full-rank for any $K$ and $L$.  
Recall the definition of   $\underline{\Omega}$  in \eqref{equ:def_Omega_underline}. 
Then under Assumption \ref{assum:invertibility}, for any  $L\in\underline{\Omega}$, 
 the inner-loop Riccati equation \eqref{equ:Inner_Riccati} always admits a solution 	 $P_{L}^*>0$, and the control pair $(K(L),L)$ is stabilizing. Moreover,   
	  for any $x\in\RR^d$, $V_{L}^*(x)\leq V_{\tilde K,L}(x)$  for any $\tilde K\in\RR^{m_1\times d}$. Taking expectation on both sides further yields that $\cC(K(L),L)\leq \cC(\tilde{K},L)$.  
	  In addition, $P_L^*$ is bounded and satisfies $Q-L^\top R^vL\leq P_L^*\leq P^*$, which implies that $\cC(K(L),L)\leq \cC(K^*,L^*)$. 
\end{lemma}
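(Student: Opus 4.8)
The plan is to reduce everything to the single-agent LQR theory of \cite{fazel2018global}. For a fixed $L\in\underline{\Omega}$, the inner minimization $\min_K \cC(K,L)$ is precisely an LQR problem with state matrix $\tilde A_L = A - CL$, input matrix $B$, state cost $\tilde Q_L = Q - L^\top R^v L$, and input cost $R^u$. Since $L\in\underline{\Omega}$, we have $\tilde Q_L>0$, so this is a \emph{bona fide} LQR instance with positive definite cost. First I would invoke Assumption \ref{assum:invertibility} to guarantee that this LQR problem is solvable: one needs $(\tilde A_L,B)$ stabilizable, which follows because the NE pair $(K^*,L^*)$ stabilizes $A-BK^*-CL^*$, hence $K^* - (\text{something in } L-L^*\text{'s range})$ — more carefully, one should argue that stabilizability of the closed loop at the NE, combined with $L\in\underline\Omega$ and the structure of the GARE, transfers to stabilizability of $(\tilde A_L, B)$; this is the technical crux. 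Given solvability, the standard discrete-time LQR/Riccati theory yields a unique $P_L^*>0$ solving \eqref{equ:Inner_Riccati}, the optimal gain is $K(L)$ as in \eqref{equ:KL_def}, and $(K(L),L)$ is stabilizing, i.e. $\rho(A-BK(L)-CL)<1$.

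Next, for the pointwise optimality $V_L^*(x)\le V_{\tilde K,L}(x)$: this is exactly the statement that $x^\top P_L^* x$ is the optimal cost-to-go of the inner LQR, so $x^\top P_L^* x \le x^\top P_{\tilde K,L} x$ for every $x$ and every stabilizing $\tilde K$ (and trivially for non-stabilizing $\tilde K$, where the cost is $+\infty$). Concretely I would apply the Cost Difference Lemma \ref{lemma:Cost_Diff} with the roles $(K,L)\mapsto(\tilde K, L)$ and $(K',L')\mapsto(K(L),L)$: the advantage decomposition \eqref{equ:Advantage_Diff} collapses (since the $L$-blocks vanish, $L'=L$) to
\$
A_{\tilde K,L}\big(x,-K(L)x,-Lx\big) = 2x^\top (K(L)-\tilde K)^\top E_{\tilde K,L} x + x^\top(K(L)-\tilde K)^\top (R^u+B^\top P_{\tilde K,L}B)(K(L)-\tilde K)x,
\$
and completing the square in $K(L)-\tilde K$ shows the right-hand side, summed along the trajectory under $(K(L),L)$, is $\le 0$; this gives $V_L^*(x) - V_{\tilde K,L}(x)\le 0$. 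Taking $\EE_{x_0\sim\cD}$ gives $\cC(K(L),L)\le\cC(\tilde K,L)$.

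Finally, for the sandwich bound $Q - L^\top R^v L \le P_L^* \le P^*$: the lower bound $P_L^*\ge \tilde Q_L$ follows directly by iterating the Riccati recursion \eqref{equ:Inner_Riccati} from $0$ (the value-iteration monotone convergence), since each step only adds nonnegative-definite terms and the first adds $\tilde Q_L$; alternatively $P_L^* \ge \tilde Q_L$ because $V_L^*(x)\ge x^\top \tilde Q_L x$ termwise. For the upper bound, note that the full-game GARE solution $P^*$, viewed through the minimizer's lens, satisfies a Riccati-type inequality: $P^*$ is the value of $\inf_K\sup_L$, so for the particular fixed maximizer choice $L$, $x^\top P^* x = \sup_{L'}\inf_K (\ldots) \ge \inf_K \cC_x(K,L) = x^\top P_L^* x$ where $\cC_x$ denotes the trajectory cost from $x_0=x$; more carefully one checks that $P^*$ satisfies $P^* \ge \tilde Q_L + \tilde A_L^\top P^* \tilde A_L - \tilde A_L^\top P^* B(R^u + B^\top P^* B)^{-1} B^\top P^* A$-type domination and applies a comparison theorem for Riccati equations to conclude $P_L^*\le P^*$. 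Then $\cC(K(L),L) = \EE_{x_0}[x_0^\top P_L^* x_0] \le \EE_{x_0}[x_0^\top P^* x_0] = \cC(K^*,L^*)$.

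I expect the main obstacle to be the stabilizability/solvability claim in the first paragraph — establishing that for \emph{every} $L\in\underline\Omega$ (not just $L^*$) the inner LQR is well-posed with $P_L^*>0$ — and, relatedly, nailing down the Riccati comparison $P_L^*\le P^*$ rigorously rather than via the heuristic minimax-ordering argument; the completing-the-square optimality step and the lower bound are routine.
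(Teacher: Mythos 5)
The main gap is exactly the step you flag and then skip: stabilizability of $(\tilde A_L,B)$ for \emph{every} $L\in\underline{\Omega}$. Your sketch (``stabilizability at the NE transfers via the structure of the GARE'') is not an argument, and the paper closes this hole with a short but different observation: by Lemma \ref{lemma:saddle_point_formal}, the minimizer's NE gain $K^*$ guarantees $\sum_t c_t(x_t,u_t^*,v_t)\le x_0^\top P^*x_0$ against \emph{any} maximizer sequence, in particular against $v_t=-Lx_t$; since $L\in\underline{\Omega}$ makes the running cost $x_t^\top(Q+K^{*\top}R^uK^*-L^\top R^vL)x_t$ a positive definite quadratic in the state, finiteness of this cost forces $x_t\to 0$, i.e.\ $\rho(A-BK^*-CL)<1$. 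Hence the \emph{single} gain $K^*$ stabilizes $\tilde A_L-BK^*$ for every $L\in\underline{\Omega}$, which gives stabilizability of $(\tilde A_L,B)$ and, at the same time, $P_{K^*,L}\le P^*$. With $\tilde Q_L>0$, the cited classical result (Bertsekas, Prop.\ 4.4.1) then yields $P_L^*>0$, the stabilizing property of $(K(L),L)$, and optimality of $K(L)$. Note this also makes your second flagged item unnecessary: the upper bound is the two-line chain $V_{K(L),L}(x_0)\le V_{K^*,L}(x_0)\le V_{K^*,L^*}(x_0)=x_0^\top P^*x_0$, i.e.\ $P_L^*\le P_{K^*,L}\le P^*$; no Riccati comparison theorem is needed.

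There is also a sign error in your cost-difference step. With $L'=L$, the advantage of the action $-K(L)x$ relative to $V_{\tilde K,L}$ is $2x^\top\Delta^\top E_{\tilde K,L}x+x^\top\Delta^\top(R^u+B^\top P_{\tilde K,L}B)\Delta x$ with $\Delta=K(L)-\tilde K$; completing the square only gives the \emph{lower} bound $-x^\top E_{\tilde K,L}^\top(R^u+B^\top P_{\tilde K,L}B)^{-1}E_{\tilde K,L}x$, and the expression need not be $\le 0$, because $K(L)$ is greedy with respect to $P_L^*$, not with respect to $P_{\tilde K,L}$. The correct use of Lemma \ref{lemma:Cost_Diff} swaps the roles: expand $V_{\tilde K,L}(x)-V_{K(L),L}(x)=\sum_t A_{K(L),L}(x_t,-\tilde Kx_t,-Lx_t)$ along the trajectory generated by $(\tilde K,L)$; since $E_{K(L),L}=0$ by \eqref{equ:KL_def}, each term equals $x_t^\top(\tilde K-K(L))^\top(R^u+B^\top P_L^*B)(\tilde K-K(L))x_t\ge 0$, which gives the claimed pointwise optimality (the paper avoids this entirely by citing the classical LQR result). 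Your lower bound $P_L^*\ge \tilde Q_L$ is fine and matches the paper's Lyapunov-equation argument.
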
 

\begin{proof}

Since $\tilde Q_L=Q-L^\top R^v L>0$, it follows that $(\tilde A_L,\tilde Q_L)$ is observable. Moreover, 
 Lemma \ref{lemma:saddle_point_formal} shows the existence of the saddle-point $(K^*,L^*)$, which implies that for any $L\in\underline{\Omega}$ and any $x_0\in\RR^d$
 \#\label{equ:bounded_P_trash_1}
 V_{K^*,L}(x_0)\leq V_{K^*,L^*}(x_0)<\infty,
 \#
 which further implies that  $0\leq P_{K^*,L}\leq P_{K^*,L^*}$. Thus, for the inner LQR problem with any $L\in\underline{\Omega}$, there always exists a stabilizing control $K^*$, i.e., $(\tilde A_L,B)$ is always stabilizable \citep{kwakernaak1972linear}. Hence, by Proposition $4.4.1$ in \cite{bertsekas2005dynamic}, we know that the inner-loop Riccati equation \eqref{equ:Inner_Riccati} always admits a solution 	 $P_{L}^*>0$, and the control pair $(K(L),L)$ is stabilizing. Moreover, $K(L)$ yields the optimal cost, i.e.,
 \#\label{equ:bounded_P_trash_2}
 V_{K(L),L}(x_0)\leq V_{\tilde K,L}(x_0),
 \#
 for any $K$. Taking expectation over \eqref{equ:bounded_P_trash_2} on $x_0\sim \cD$ yields $\cC(K(L),L)\leq \cC(\tilde{K},L)$.
 
 Furthermore,   combining \eqref{equ:bounded_P_trash_1} and \eqref{equ:bounded_P_trash_2} yields 
  \#\label{equ:bounded_P_trash_3}
 V_{K(L),L}(x_0)\leq V_{ K^*,L}(x_0)\leq V_{ K^*,L^*}(x_0),
 \#
 for any $x_0$. As a result, we have $P_L^*\leq P^*$. Taking expectation over \eqref{equ:bounded_P_trash_3} further gives $\cC(K(L),L)\leq \cC(K^*,L^*)$.
 Also, since $P_L^*$ is a solution to Lyapunov equation 
 \$
 P_L^*=\tilde Q_L+K^\top R^u K+[\tilde A_L-BK(L)]^\top P_L^* [\tilde A_L-BK(L)],
 \$
 it holds  that $P_L^*\geq Q_L$,  
  which completes the proof. 
\end{proof}

Moreover, we also need the  
following  lemma that  characterizes the property of the projection operator   in the projected NG updates   \eqref{equ:algorithm_nested_L}, \eqref{equ:algorithm_natural_nested_L}, and \eqref{equ:algorithm_nested_precond_L}.  	The proof of the lemma is provided in \S\ref{subsec:proof_lemma:proj_prop}. 

\begin{lemma}\label{lemma:proj_prop}
For any $L_1,L_2\in\RR^{m_2\times d}$, 
	the projection operators defined in \eqref{equ:def_proj_GD}, \eqref{equ:def_proj_NG}, and \eqref{equ:def_proj_GN} at iterate $L$ have the following  properties:
	\footnotesize
	\$
	&\tr\Big[\big(L_1-L_2\big)\Sigma_L^*\big(\PP_{\Omega}^{GN}[L_1]-\PP_{\Omega}^{GN}[L_2]\big)^\top W_L\Big] \geq \tr\Big[\big(\PP_{\Omega}^{GN}[L_1]-\PP_{\Omega}^{GN}[L_2]\big)\Sigma_L^*\big(\PP_{\Omega}^{GN}[L_1]-\PP_{\Omega}^{GN}[L_2]\big)^\top W_L\Big],\\ 
	&\tr\Big[\big(L_1-L_2\big)\Sigma_L^*\big(\PP_{\Omega}^{NG}[L_1]-\PP_{\Omega}^{NG}[L_2]\big)^\top\Big] \geq \tr\Big[\big(\PP_{\Omega}^{NG}[L_1]-\PP_{\Omega}^{NG}[L_2]\big)\Sigma_L^*\big(\PP_{\Omega}^{NG}[L_1]-\PP_{\Omega}^{NG}[L_2]\big)^\top\Big],\\
	&\tr\Big[\big(L_1-L_2\big) \big(\PP_{\Omega}^{GD}[L_1]-\PP_{\Omega}^{GD}[L_2]\big)^\top\Big] \geq \tr\Big[\big(\PP_{\Omega}^{GD}[L_1]-\PP_{\Omega}^{GD}[L_2]\big) \big(\PP_{\Omega}^{GD}[L_1]-\PP_{\Omega}^{GD}[L_2]\big)^\top\Big]. 
	\$
	\normalsize
\end{lemma}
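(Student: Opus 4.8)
The plan is to recognize Lemma~\ref{lemma:proj_prop} as the standard ``firm nonexpansiveness'' (or the first-order variational characterization) of a projection onto a convex set, adapted to the weighted inner product used in each of the three updates. For the plain Euclidean case \eqref{equ:def_proj_GD}, recall that the projection $\PP_{\Omega}^{GD}[\tilde L]$ is characterized by the variational inequality
\#
\tr\Big[\big(\tilde L - \PP_{\Omega}^{GD}[\tilde L]\big)\big(M - \PP_{\Omega}^{GD}[\tilde L]\big)^\top\Big]\le 0\qquad\text{for all }M\in\Omega. \notag
\#
First I would write this inequality for $\tilde L = L_1$ with the test point $M = \PP_{\Omega}^{GD}[L_2]\in\Omega$, and symmetrically for $\tilde L = L_2$ with $M = \PP_{\Omega}^{GD}[L_1]\in\Omega$. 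Adding the two inequalities and rearranging the trace terms (using linearity and cyclicity of the trace) yields exactly
\#
\tr\Big[\big(L_1 - L_2\big)\big(\PP_{\Omega}^{GD}[L_1]-\PP_{\Omega}^{GD}[L_2]\big)^\top\Big]\ge \tr\Big[\big(\PP_{\Omega}^{GD}[L_1]-\PP_{\Omega}^{GD}[L_2]\big)\big(\PP_{\Omega}^{GD}[L_1]-\PP_{\Omega}^{GD}[L_2]\big)^\top\Big], \notag
\#
which is the third claim. Convexity of $\Omega$ (Lemma~\ref{lemma:convex_Omega}) is what makes the test points admissible.

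Next I would handle the weighted cases \eqref{equ:def_proj_NG} and \eqref{equ:def_proj_GN} by the same argument carried out in the appropriate inner product. For the natural nested-gradient projection, the relevant inner product on $\RR^{m_2\times d}$ is $\langle X, Y\rangle_{\Sigma_L^*} := \tr[X\Sigma_L^* Y^\top]$, which is genuinely a (symmetric, positive-definite) inner product since $\Sigma_L^*=\Sigma_{K(L),L}\succ 0$ under the standing assumption that $\EE_{x_0\sim\cD}x_0x_0^\top\succ 0$ (recall $\Sigma_{K,L}\ge \EE x_0x_0^\top$). Minimizing $\|L - \tilde L\|_{\Sigma_L^*}^2$ over the convex set $\Omega$ gives the variational inequality $\langle \tilde L - \PP_{\Omega}^{NG}[\tilde L],\, M - \PP_{\Omega}^{NG}[\tilde L]\rangle_{\Sigma_L^*}\le 0$; repeating the ``two inequalities, add, rearrange'' step produces the second claim. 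For the Gauss-Newton projection the inner product is $\langle X,Y\rangle_{W_L,\Sigma_L^*} := \tr[W_L^{1/2}X\Sigma_L^* Y^\top W_L^{1/2}] = \tr[X\Sigma_L^* Y^\top W_L]$, which is positive-definite because $W_L\succ 0$ (assumed when the Gauss-Newton update is invoked, see \eqref{eq:define_wll}) and $\Sigma_L^*\succ 0$; the same three-line argument gives the first claim, with the final rearrangement using $\tr[X\Sigma_L^* Y^\top W_L] = \tr[W_L^{1/2}X\Sigma_L^*Y^\top W_L^{1/2}]$ and symmetry of this bilinear form in $X,Y$.

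I do not expect a genuine obstacle here; the only things to be careful about are (i) verifying that $\Sigma_L^*$ and $W_L$ are positive definite so that the ``weighted'' trace forms really are inner products (this is where the hypotheses $\EE x_0 x_0^\top\succ 0$ and $W_L\succ 0$ enter, and it should be stated explicitly), and (ii) matrix bookkeeping — keeping the transposes and the placement of $\Sigma_L^*$, $W_L$ inside the traces consistent when passing between $\langle\cdot,\cdot\rangle$ notation and the explicit $\tr[\cdots]$ expressions in the statement. One clean way to package all three cases at once is to prove a single lemma: for any symmetric positive-definite $S$ acting on the left and symmetric positive-definite $T$ acting on the right, the projection onto a convex $\Omega$ in the norm $\|X\|^2 := \tr[SX T X^\top]$ satisfies $\tr[S(L_1-L_2)T(\mathrm{proj}[L_1]-\mathrm{proj}[L_2])^\top]\ge \tr[S(\mathrm{proj}[L_1]-\mathrm{proj}[L_2])T(\mathrm{proj}[L_1]-\mathrm{proj}[L_2])^\top]$, and then specialize: $(S,T)=(\Ib,\Ib)$ gives GD, $(S,T)=(\Ib,\Sigma_L^*)$ gives NG, and $(S,T)=(W_L,\Sigma_L^*)$ gives GN.
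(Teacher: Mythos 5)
Your proposal is correct and follows essentially the same route as the paper: the paper likewise invokes the first-order optimality (variational inequality) characterization of the projection in the corresponding weighted trace inner product, applies it twice with the roles of $L_1$ and $L_2$ swapped (testing each against the other's projection), and adds and rearranges the two inequalities, carrying out the NG case in detail and noting the GD and GN cases are identical. Your extra packaging via a single $(S,T)$-weighted lemma, together with the explicit check that $\Sigma_L^*$ and $W_L$ are positive definite so the weighted forms are genuine inner products, is a clean but equivalent presentation of the same argument.
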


Another important result used later is the continuity of $P_{L}^*$   w.r.t. $L$, for any $L\in\underline{\Omega}$, whose proof is deferred to \S\ref{subsec:lemma_conti_PLs}.


\begin{lemma}\label{lemma:conti_PLs}
For any $L\in\underline{\Omega}$, 
	let $P_{L}^*>0$ be the solution to the inner-loop Riccati equation   \eqref{equ:Inner_Riccati}.  Then $P_{L}^*$ is a continuous function  w.r.t $L$. 
\end{lemma}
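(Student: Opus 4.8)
The plan is to establish continuity of $L\mapsto P_L^*$ on the set $\underline{\Omega}$ by a compactness argument together with uniqueness of the stabilizing Riccati solution. Note first that $\underline{\Omega}$ is open, since $L\mapsto Q-L^\top R^v L$ is continuous and the cone of positive definite matrices is open. Fix $L\in\underline{\Omega}$ and an arbitrary sequence $\{L_n\}\subset\underline{\Omega}$ with $L_n\to L$. By Lemma \ref{lemma:optim_of_KL}, for each $n$ the inner-loop Riccati equation \eqref{equ:Inner_Riccati} at $L_n$ admits a solution $P_{L_n}^*>0$ for which $(K(L_n),L_n)$ is stabilizing, and moreover $0<Q-L_n^\top R^v L_n\leq P_{L_n}^*\leq P^*$. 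Hence $\{P_{L_n}^*\}$ is a bounded sequence of symmetric matrices, so every subsequence has a further convergent subsequence $P_{L_{n_k}}^*\to\bar P$, and $\bar P\geq 0$ because the positive semidefinite cone is closed.

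Let $\mathcal{R}(P,L)$ denote the right-hand side of \eqref{equ:Inner_Riccati}, viewed as a function of $(P,L)$; it is a rational map that is continuous on $\{P\geq 0\}\times\RR^{m_2\times d}$, since $R^u>0$ keeps the matrix to be inverted positive definite there. Writing \eqref{equ:Inner_Riccati} at $L_n$ as $P_{L_n}^*=\mathcal{R}(P_{L_n}^*,L_n)$ and letting $k\to\infty$ gives $\bar P=\mathcal{R}(\bar P,L)$, so $\bar P$ is a positive semidefinite solution of the inner Riccati equation at $L$. By the same algebra that appears in the proof of Lemma \ref{lemma:optim_of_KL}, this equation can be put into closed-loop Lyapunov form $\bar P=(\tilde Q_L+\bar K^\top R^u\bar K)+(\tilde A_L-B\bar K)^\top\bar P(\tilde A_L-B\bar K)$, where $\bar K:=(R^u+B^\top\bar P B)^{-1}B^\top\bar P\tilde A_L$; since $\tilde Q_L=Q-L^\top R^v L>0$ (because $L\in\underline{\Omega}$) and $\bar P\geq 0$, no eigenvalue of $\tilde A_L-B\bar K$ can have modulus at least one — otherwise evaluating this Lyapunov identity on the corresponding eigenvector would contradict $\tilde Q_L+\bar K^\top R^u\bar K>0$. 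Thus $\bar P$ is a stabilizing solution of the Riccati equation; the stabilizing solution being unique \citep{bertsekas2005dynamic}, and $P_L^*$ being one by Lemma \ref{lemma:optim_of_KL}, we conclude $\bar P=P_L^*$. Since every subsequence of $\{P_{L_n}^*\}$ has a further subsequence converging to $P_L^*$, the whole sequence converges to $P_L^*$; as $\{L_n\}$ was arbitrary, $L\mapsto P_L^*$ is continuous on $\underline{\Omega}$.

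The main obstacle is this identification step, i.e., ruling out that $\{P_{L_n}^*\}$ converges to a spurious, non-stabilizing solution of the Riccati equation; this is exactly where the hypothesis $L\in\underline{\Omega}$, equivalently the detectability of $(\tilde A_L,(Q-L^\top R^v L)^{1/2})$ used in Lemma \ref{lemma:optim_of_KL}, is indispensable. An alternative and more quantitative route would be to apply the implicit function theorem to $F(P,L):=P-\mathcal{R}(P,L)$: the partial derivative $\partial_P F$ at $(P_L^*,L)$ is the Lyapunov operator $X\mapsto X-(\tilde A_L-BK(L))^\top X(\tilde A_L-BK(L))$, which is invertible precisely because $\rho(\tilde A_L-BK(L))<1$ by Lemma \ref{lemma:optim_of_KL}, yielding local $C^1$-dependence of the stabilizing branch on $L$; this still needs the boundedness argument above to certify that $P_{L'}^*$ lies on that branch for $L'$ near $L$.
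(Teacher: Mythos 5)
Your proof is correct, but it takes a genuinely different route from the paper's. The paper proves the lemma by applying the implicit function theorem to the vectorized fixed-point form $P_L^*=\tilde Q_L+\tilde A_L^\top\big[I+P_L^*B(R^u)^{-1}B^\top\big]^{-1}P_L^*\tilde A_L$: it computes the Jacobian of this map with respect to $\vect(P_L^*)$ explicitly via Kronecker-product calculus, identifies it as $\big[\tilde A_L-BK(L)\big]^\top\otimes\big[\tilde A_L-BK(L)\big]^\top-\Ib$, and invokes its invertibility from $\rho(\tilde A_L-BK(L))<1$ (Lemma \ref{lemma:optim_of_KL}); this yields continuous differentiability, not merely continuity, and is essentially the local-branch alternative you sketch in your closing paragraph. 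Your main argument is instead a compactness-plus-uniqueness one: the uniform bounds $Q-L_n^\top R^vL_n\leq P_{L_n}^*\leq P^*$ from Lemma \ref{lemma:optim_of_KL} give boundedness, any subsequential limit $\bar P\geq 0$ solves the Riccati equation by continuity of the Riccati map on $\{P\geq 0\}$ (where $R^u+B^\top PB\geq R^u>0$), your closed-loop Lyapunov/eigenvector argument using $\tilde Q_L>0$ rules out closed-loop eigenvalues of modulus at least one, and uniqueness of the stabilizing solution identifies $\bar P=P_L^*$; the subsequence principle then gives continuity. This is sound; in fact, since $\tilde Q_L>0$ gives observability and stabilizability is already established in Lemma \ref{lemma:optim_of_KL}, the positive semidefinite solution is itself unique (the result from \cite{bertsekas2005dynamic} the paper already invokes), so your identification step could even be shortened. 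What each approach buys: the paper's route is more quantitative ($C^1$ dependence, in the spirit of its later perturbation estimates such as Lemma \ref{lemma:perturb_Sigma_L}) at the cost of a lengthy Jacobian computation, while yours is more elementary and works globally on $\underline{\Omega}$ in one pass; you also correctly flag the branch-identification issue inherent to the implicit-function-theorem argument, which the paper's proof glosses over but which the same uniqueness fact resolves.
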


Similarly, we also establish the following lemma on the continuity of the correlation matrix $\Sigma_{K,L}$ and $P_{K,L}$ w.r.t. $K$ and $L$, respectively. 

\begin{lemma}\label{lemma:conti_Sigma}
	For any stabilizing control pair $(K,L)$,  the correlation matrix $\Sigma_{K,L}$, and the solution $P_{K,L}$ to Lyapunov equation \eqref{equ:P_KL_Sol} are both  continuous w.r.t. $K$ and $L$.
\end{lemma}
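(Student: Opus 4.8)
The plan is to realize both $\Sigma_{K,L}$ and $P_{K,L}$ as solutions of discrete Lyapunov equations whose data depend continuously---indeed polynomially---on $(K,L)$, and then invoke continuity of matrix inversion. Write $A_{K,L}:=A-BK-CL$ for the closed-loop matrix and $N_{K,L}:=Q+K^\top R^u K-L^\top R^v L$ for the closed-loop cost matrix; both $(K,L)\mapsto A_{K,L}$ and $(K,L)\mapsto N_{K,L}$ are manifestly continuous. Since $(K,L)$ is stabilizing we have $\rho(A_{K,L})<1$, so $P_{K,L}$ is the unique solution of \eqref{equ:P_KL_Sol}, i.e., of $P_{K,L}=N_{K,L}+A_{K,L}^\top P_{K,L}A_{K,L}$, while $\Sigma_{K,L}=\EE_{x_0\sim\cD}\sum_{t\geq0}x_tx_t^\top=\sum_{t\geq0}A_{K,L}^t\,\Sigma_0\,(A_{K,L}^\top)^t$ converges and is the unique solution of $\Sigma_{K,L}=\Sigma_0+A_{K,L}\Sigma_{K,L}A_{K,L}^\top$, where $\Sigma_0:=\EE_{x_0\sim\cD}x_0x_0^\top$. (The stabilizing set $\{(K,L):\rho(A_{K,L})<1\}$ is open, so the continuity claim is a local statement there.)

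Next I would vectorize. Using the identity $\vect(MXN)=(N^\top\otimes M)\vect(X)$, the two Lyapunov equations become
\[
(\Ib-A_{K,L}\otimes A_{K,L})\,\vect(\Sigma_{K,L})=\vect(\Sigma_0),\qquad (\Ib-A_{K,L}^\top\otimes A_{K,L}^\top)\,\vect(P_{K,L})=\vect(N_{K,L}).
\]
The eigenvalues of $A_{K,L}\otimes A_{K,L}$ are the products $\lambda_i\lambda_j$ of eigenvalues of $A_{K,L}$, so $\rho(A_{K,L}\otimes A_{K,L})=\rho(A_{K,L})^2<1$; in particular $1$ is not an eigenvalue, hence $\Ib-A_{K,L}\otimes A_{K,L}$ and (by the same argument) $\Ib-A_{K,L}^\top\otimes A_{K,L}^\top$ are invertible. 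Consequently $\vect(\Sigma_{K,L})=(\Ib-A_{K,L}\otimes A_{K,L})^{-1}\vect(\Sigma_0)$ and $\vect(P_{K,L})=(\Ib-A_{K,L}^\top\otimes A_{K,L}^\top)^{-1}\vect(N_{K,L})$.

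I would then conclude by composition: $(K,L)\mapsto A_{K,L}$ is continuous, the Kronecker product is bilinear hence continuous, matrix inversion is continuous on the open set of invertible matrices---into which $\Ib-A_{K,L}\otimes A_{K,L}$ always lands by the previous step---and $(K,L)\mapsto\vect(N_{K,L})$ is continuous. Therefore $(K,L)\mapsto\vect(\Sigma_{K,L})$ and $(K,L)\mapsto\vect(P_{K,L})$ are continuous, and since $\vect$ is a linear isomorphism this yields continuity of $\Sigma_{K,L}$ and $P_{K,L}$ in $(K,L)$ over the set of stabilizing pairs, as claimed.

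The argument is essentially routine; the only point that deserves care is the invertibility of $\Ib-A_{K,L}\otimes A_{K,L}$, which I would justify from the spectral relation between $A\otimes A$ and $A$ noted above---equivalently, from convergence of the Neumann series $\sum_{k\geq0}(A_{K,L}\otimes A_{K,L})^k$, which holds because $\rho(A_{K,L})^2<1$ and in fact exhibits the inverse as a locally uniformly convergent series. An alternative that avoids Kronecker products altogether is to bound $\|\Sigma_{K,L}-\Sigma_{K',L'}\|$ and $\|P_{K,L}-P_{K',L'}\|$ directly from the series representations, using $\rho(A_{K',L'})\to\rho(A_{K,L})<1$ together with geometric tail bounds that are uniform on a neighborhood of $(K,L)$; but the vectorization route is cleaner.
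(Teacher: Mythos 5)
Your proof is correct and follows essentially the same route as the paper: both arguments vectorize the Lyapunov equations for $\Sigma_{K,L}$ and $P_{K,L}$ and rest on the invertibility of $\Ib-(A-BK-CL)\otimes(A-BK-CL)$, which holds because the eigenvalues of the Kronecker product are products of eigenvalues of the closed-loop matrix and hence have modulus strictly less than one. The only difference is the final step: the paper invokes the implicit function theorem, whereas you exploit linearity to solve for $\vect(\Sigma_{K,L})$ and $\vect(P_{K,L})$ explicitly and conclude by continuity of matrix inversion on the (open) set of stabilizing pairs --- a slightly more elementary finish that yields the same conclusion, and, since your explicit formula is a composition of smooth maps, even the continuous differentiability that the paper's argument provides.
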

\begin{proof}
 For stabilizing $(K,L)$, $\Sigma_{K,L}$ is the unique solution to the  Lyapunov equation
	\#\label{equ:cont_Sigma_trash_1}
	(A-BK-CL)\Sigma_{K,L}(A-BK-CL)^\top +\Sigma_0=\Sigma_{K,L},
	\#
	where we denote $\EE_{x_0\sim\cD}x_0x_0^\top>0$ by $\Sigma_0$. By vectorizing both sides, we can rewrite \eqref{equ:cont_Sigma_trash_1}~as 
	\$
	&\Psi\big(\vect(\Sigma_{K,L}),K,L\big)=\vect(\Sigma_{K,L}),
	\$
	where  the operator $\Psi:\RR^{d^2}\times \RR^{m_1\times d}\times \RR^{m_2\times d}\to \RR^{d^2}$ is defined as 
	\$
\Psi\big(\vect(\Sigma_{K,L}),K,L\big):=\big[(A-BK-CL)\otimes(A-BK-CL)\big]\cdot\vect(\Sigma_{K,L})+\vect(\Sigma_0).
	\$
	Notice that
	\$
	\frac{\partial \big[\Psi\big(\vect(\Sigma_{K,L}),K,L\big)-\vect(\Sigma_{K,L})\big]}{\partial \vect^\top(\Sigma_{K,L})}=\big[(A-BK-CL)\otimes(A-BK-CL)\big]-I,
	\$
	which is invertible for stabilizing $(K,L)$, since the eigenvalues of $\big[(A-BK-CL)\otimes(A-BK-CL)\big]$ have absolute values smaller than one. Hence, 	by the implicit function theorem \citep{krantz2012implicit}, $\vect(\Sigma_{K,L})$ is continuously differentiable, and also continuous, w.r.t. $K$ and $L$, 
	 which completes the proof.  
	The proof for $P_{K,L}$ is almost identical, which is omitted here for brevity. 
\end{proof}

In addition, recalling the definition of $\Omega$ in \eqref{equ:def_Omega}, we have $\Omega\subset \underline{\Omega}$. Hence, by Lemma \ref{lemma:optim_of_KL}, for any $L\in\Omega$, $P_L^*$ exists and $(K(L),L)$ is stabilizing. Hence, $\Sigma_L^*$ also exists. We can then bound  the spectral norm of $P_L^*$ and $\Sigma_L^*$. Also, since $P_L^*\leq P^*$, we can also bound  $W_L$ (see definition in \eqref{eq:define_wll}) as follows.

\begin{lemma}[Bounds for $\|P_{K,L}\|,\|\Sigma_{K,L}\|$, and $W_L$]\label{lemma:bound_P_L_Sigma_L}
Recalling the definition of $\Omega$ in  \eqref{equ:def_Omega} as
\$
\Omega:=\big\{L\in\RR^{m_2\times d}\given  Q-L^\top R^v L\geq \zeta\cdot \Ib\big\},
\$
	 it follows that for any $L\in\Omega$ and any $K$ that makes $(K,L)$ stabilizing 
	\$
	&\|P_{K,L}\|\leq {\cC(K,L)}/{\mu},\qquad\qquad\qquad\qquad\qquad\qquad\qquad \|\Sigma_{K,L}\|\leq {\cC(K,L)}/{\zeta}, \\
	&0<R^v-C^\top P^*C\leq W_{L^*}\leq W_{L}\leq R^v - C^\top \big[\xi^{-1}\cdot\Ib +   B(R^u)^{-1}B^\top\big]^{-1}C\leq R^v.
	\$
\end{lemma}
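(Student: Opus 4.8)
The plan is to reduce every bound to a trace identity for $\cC(K,L)$ and a single application of the matrix inversion lemma to $W_L$, after which everything follows from monotonicity of matrix inversion together with the two-sided estimate $Q-L^\top R^vL\leq P_L^*\leq P^*$ from Lemma \ref{lemma:optim_of_KL}. For the bound on $\|P_{K,L}\|$, I would first note that for $L\in\Omega$ the stage-cost matrix obeys $Q-L^\top R^vL+K^\top R^uK\geq\zeta\cdot\Ib>0$, so the Lyapunov recursion \eqref{equ:P_KL_Sol} forces $P_{K,L}>0$ (this is where membership in $\Omega$, not mere stabilizability, enters). Then, using $\cC(K,L)=\EE_{x_0\sim\cD}(x_0^\top P_{K,L}x_0)=\tr\big(P_{K,L}\,\EE_{x_0\sim\cD}x_0x_0^\top\big)$ together with $\EE_{x_0\sim\cD}x_0x_0^\top\geq\mu\cdot\Ib$, I obtain $\cC(K,L)\geq\mu\,\tr(P_{K,L})\geq\mu\,\|P_{K,L}\|$, the last step being the elementary fact that the spectral norm of a positive semidefinite matrix is dominated by its trace. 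For $\|\Sigma_{K,L}\|$ I would instead expand the cost along the trajectory, $\cC(K,L)=\tr\big[(Q-L^\top R^vL+K^\top R^uK)\,\Sigma_{K,L}\big]$, directly from the definition $\Sigma_{K,L}=\EE_{x_0\sim\cD}\sum_{t\geq0}x_tx_t^\top$, and bound it below by $\zeta\,\tr(\Sigma_{K,L})\geq\zeta\,\|\Sigma_{K,L}\|$ using $Q-L^\top R^vL+K^\top R^uK\geq\zeta\cdot\Ib$ and $\Sigma_{K,L}\geq0$.

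For the estimates on $W_L$, the key step is to rewrite \eqref{eq:define_wll} via the matrix inversion lemma: since $P_L^*>0$ (Lemma \ref{lemma:optim_of_KL}) and $R^u>0$, one has $P_L^*-P_L^*B(R^u+B^\top P_L^*B)^{-1}B^\top P_L^*=[(P_L^*)^{-1}+B(R^u)^{-1}B^\top]^{-1}$, whence $W_L=R^v-C^\top[(P_L^*)^{-1}+B(R^u)^{-1}B^\top]^{-1}C$. All four inequalities then follow from monotonicity of $X\mapsto[X^{-1}+B(R^u)^{-1}B^\top]^{-1}$ on positive definite $X$ and the bounds of Lemma \ref{lemma:optim_of_KL}: from $P_L^*\leq P^*$ and $P_{L^*}^*=P^*$ (valid because $(K^*,L^*)$ is the NE, hence $K(L^*)=K^*$ and $P_{K^*,L^*}=P^*$) one gets $W_L\geq W_{L^*}$; from $[(P^*)^{-1}+B(R^u)^{-1}B^\top]^{-1}\leq P^*$ one gets $W_{L^*}\geq R^v-C^\top P^*C>0$ by Assumption \ref{assum:invertibility} i); and from $P_L^*\geq Q-L^\top R^vL\geq\zeta\cdot\Ib$ on $\Omega$ (so that the constant $\xi$ in the statement may be taken to be $\zeta$) one gets $W_L\leq R^v-C^\top[\xi^{-1}\cdot\Ib+B(R^u)^{-1}B^\top]^{-1}C\leq R^v$, the last inequality because the bracketed matrix is positive definite, so $C^\top(\cdot)C\geq0$.

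I do not anticipate a genuine obstacle; the whole difficulty is bookkeeping — keeping the direction of the order-reversing matrix inversions straight throughout the $W_L$ argument — together with the two small auxiliary facts already flagged, namely that $L\in\Omega$ (rather than mere stabilizability) is what makes $P_{K,L}$ and the stage-cost matrix positive definite, and that the inner-loop Riccati solution at $L^*$ coincides with the GARE solution $P^*$.
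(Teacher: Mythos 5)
Your proposal is correct and follows essentially the same route as the paper's proof: lower-bounding $\cC(K,L)=\tr(P_{K,L}\Sigma_0)$ and $\cC(K,L)=\tr[(Q+K^\top R^uK-L^\top R^vL)\Sigma_{K,L}]$ via $\mu$ and $\zeta$, and rewriting $W_L=R^v-C^\top[(P_L^*)^{-1}+B(R^u)^{-1}B^\top]^{-1}C$ by the matrix inversion lemma and invoking operator monotonicity together with $Q-L^\top R^vL\leq P_L^*\leq P^*$ from Lemma \ref{lemma:optim_of_KL}. The only differences are cosmetic: you make explicit two points the paper leaves implicit, namely that the unspecified constant $\xi$ can be taken as $\zeta$ on $\Omega$ and that $P_{L^*}^*=P^*$ so the middle term is indeed $W_{L^*}$.
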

\begin{proof}
	Since $(K,L)$ is stabilizing,   $\cC(K,L)$ can be bounded as
	\$
	\cC(K,L)=\EE_{x_0\sim\cD} x_0^\top P_{K,L} x_0\geq \|P_{K,L}\|\sigma_{\min}(\EE x_0x_0^\top),
	\$
	since $P_{K,L}\geq P_L^*>0$ is positive definite by Lemma \ref{lemma:optim_of_KL}. Moreover,  $\cC(K,L)$ can also be bounded as
		\$\cC(K,L)&=\tr[\Sigma_{K,L}(Q+K^\top R^u K-L^\top R^v L)]\geq\tr(\Sigma_{K,L})\sigma_{\min}(Q-L^\top R^v L)\\
		&\geq \|\Sigma_{K,L}\|\sigma_{\min}(Q-L^\top R^v L)\geq \|\Sigma_{K,L}\|\cdot\zeta,
	\$
	where the first inequality uses the fact that $Q-L^\top R^v L$ is positive definite, and the last inequality is due to the definition of the set $\Omega$.  
	
	In addition, by matrix inversion lemma, $W_L$ can be written as 
	\$
W_{L} 
&= R^v + C^\top \big[-P_{L}^* + P_{ L}^*B (R^u+B^\top P_{L}^* B)^{-1} B^\top P_{L}^*\big]C\notag\\
&=R^v - C^\top \big[(P_{L}^*)^{-1} +   B(R^u)^{-1}B^\top\big]^{-1}C.
\$
Since Lemma \ref{lemma:optim_of_KL} shows that $\xi\cdot\Ib\leq P_{L}^*\leq P^*$, we know that
\$
0&<R^v-C^\top P^*C\leq R^v - C^\top \big[(P^*)^{-1} +   B(R^u)^{-1}B^\top\big]^{-1}C\leq W_{L}\\
&\leq R^v - C^\top \big[\xi^{-1}\cdot\Ib +   B(R^u)^{-1}B^\top\big]^{-1}C\leq R^v,
\$
which completes the proof. 
\end{proof}

Next, we provide  proofs for  the convergence of the proposed algorithms.

\subsection{Proof of Proposition \ref{prop:inner_full_grad_conv}}\label{subsec:proof_inner_full_grad_conv}

We first prove the global convergence of the inner-loop updates in  \eqref{equ:inner_gd}-\eqref{equ:inner_gauss_newton} for given $L\in\underline{\Omega}$.
Note that the proof roughly follows that of Theorem $7$ in \cite{fazel2018global}, but requires additional arguments on  the stability of the control pair $(K_\tau,L)$, where $\{K_\tau\}_{\tau\geq 0}$ is  generated by the updates in \eqref{equ:inner_gd}-\eqref{equ:inner_gauss_newton}\footnote{Note that the stability argument has been supplemented in the latest version of \cite{fazel2018global}, during the time of preparation of  this paper. But still, we provide a different approach to show the stability for the Gauss-Newton and natural nested-gradient updates, which may  be of independent  interest.}. 
From Lemma \ref{lemma:optim_of_KL}, we know that under Assumption \ref{assum:invertibility}, for any $L\in\underline{\Omega}$, the inner LQR problem always has a solution, and 
 $K(L)$ is such an optimal solution.  Thus,   there always exists some $K$ such that $(K,L)$ is stabilizing, namely, $(K(L),L)$,  which proves the first argument of  Proposition \ref{prop:inner_full_grad_conv}.
  Suppose the updates in  \eqref{equ:inner_gd}-\eqref{equ:inner_gauss_newton}  all start with such a stabilizing   $K$. Thus we have  
\#\label{equ:Bellman_trash}
(\tilde A_L-BK)^\top P_{K,L}(\tilde A_L-BK)-P_{K,L}=-\tilde Q_L-K^\top R^uK.
\#
By Lemma \ref{lemma:optim_of_KL}, $P_{K,L}\geq P_{L}^*>0$. Hence, $P_{K,L}$ is  invertible,    
and \eqref{equ:Bellman_trash} can be rewritten as 
\$
P_{K,L}^{-\frac{1}{2}}(\tilde A_L-BK)^\top P_{K,L}^{\frac{1}{2}}P_{K,L}^{\frac{1}{2}}(\tilde A_L-BK)P_{K,L}^{-\frac{1}{2}}=I-P_{K,L}^{-\frac{1}{2}}(\tilde Q_L+K^\top R^uK)P_{K,L}^{-\frac{1}{2}},
\$
which gives that
\#\label{equ:stability_trick}
\big[\rho(\tilde A_L-BK)\big]^2= 1-\sigma_{\min}\big[P_{K,L}^{-\frac{1}{2}}(\tilde Q_L+K^\top R^uK)P_{K,L}^{-\frac{1}{2}}\big]\leq 1-\sigma_{\min}\big(P_{K,L}^{-\frac{1}{2}}\tilde Q_LP_{K,L}^{-\frac{1}{2}}\big)< 1,	
\#
where the equation is due to that  $P_{K,L}^{-\frac{1}{2}}(\tilde A_L-BK)^\top P_{K,L}^{\frac{1}{2}}$ has identical spectrum as $\tilde A_L-BK$, the last  inequality is due to that $\tilde Q_L>0$. 
Also noticing that 
\$
\sigma_{\min}(P_{K,L}^{-1/2}\tilde Q_LP_{K,L}^{-1/2})=\sigma_{\min}(\tilde Q_L^{1/2}P_{K,L}^{-1}\tilde Q_L^{1/2}),
\$ 
we can thus assert that, if $P_{K',L}\leq P_{K,L}$, we have  
\#\label{equ:monotone_rho_bnd}
1-\sigma_{\min}(P_{K',L}^{-1/2}\tilde Q_LP_{K',L}^{-1/2})\leq 1-\sigma_{\min}(P_{K,L}^{-1/2}\tilde Q_LP_{K,L}^{-1/2}).
\#

Note that for all the  inner updates in  \eqref{equ:inner_gd}-\eqref{equ:inner_gauss_newton}, as long as $K\neq K(L)$, it holds that $\|\nabla_K\cC(K,L)\|>0$, i.e., there exists a constant $\epsilon_{K}>0$ such that $\|\nabla_K\cC(K,L)\|\geq \epsilon_{K}$. Moreover, the gradient norm $\|\nabla_K\cC(K,L)\|$ must also be upper bounded, since $K$ is stabilizing, and thus both $\|K\|$ and $\|P_K\|$ are bounded. Also note that both matrices $(R^u+B^\top P_{K,L} B)^{-1}$ and $\Sigma_{K,L}^{-1}$ have upper and lower-bounds, since $R^u+B^\top P_{K,L} B\geq R^u>0$ and $\Sigma_{K,L}\geq \EE_{x_0\sim\cD}x_0x_0^\top>0$, and $P_{K,L}$ is bounded. Therefore, at each $K\neq K(L)$, there exist constants $\text{Upper}_K,\text{Lower}_K>0$ such that 
\$
\alpha\cdot\text{Lower}_K\leq \|K'-K\|\leq \alpha\cdot\text{Upper}_K,
\$
where $K'$ is obtained from the one-step updates in  of any of  \eqref{equ:inner_gd}-\eqref{equ:inner_gauss_newton}. We thus define a  set $\Omega^1_K$, which depends on $K$,  as 
\$
\Omega^1_K:=\Big\{K'\biggiven \|K'-K\|\leq \alpha\cdot\text{Upper}_K\Big\}, 
\$
which is compact. 
On the other hand, define $\Omega^2_K$, the lower-level set of $K'$ as
\$
\Omega^2_K:=\Big\{K'\biggiven\rho(\tilde A_L-BK')\leq [1-\sigma_{\min}(P_{K,L}^{-{1}/{2}}\tilde Q_LP_{K,L}^{-{1}/{2}})]^{1/2}<1\Big\},
\$
which is closed by the continuity and lower-boundedness of   $\rho(\tilde A_L-BK)$  w.r.t. $K$ \citep{tyrtyshnikov2012brief}. 
Hence, the intersection $\Omega_K=\Omega^1_K\bigcap \Omega^2_K$ 
is  compact. Note that   $\Omega_K\neq \emptyset$, since it at least contains $K$. 
Also, the upper-level set that ensures $\rho(\tilde A_L-BK')\geq 1$ is closed. Thus, by Lemma \ref{lemma:dist_compact}, there exists   a positive distance between the two disjoint sets. 
Denote this distance by $\delta_{K}$. Then  any $K'$ such  that $\|K'-K\|\leq \delta_K$ is stabilizing.

Now we take the analysis for Gauss-Newton update \eqref{equ:inner_gauss_newton} as an example.  
If $\alpha\cdot\text{Upper}_K\leq \delta_K$ for any $\alpha\in[0,1/2]$, i.e., the range of $\alpha$ in Lemma $14$  of \cite{fazel2018global}     that ensures the contraction of the cost, then both $K'$ and $K$ are stabilizing. 
By further applying Lemma $10$ in \cite{fazel2018global} and the form of \eqref{equ:inner_gauss_newton}, we have that for any $\alpha\in[0,1/2]$
\#\label{equ:contract_P_K_gn}
V_{K',L}(x)-V_{K,L}(x)&=(-4\alpha+4\alpha^2)\tr\bigg[\sum_{t\geq0}(x'_t)(x'_t)^\top E_{K,L}^\top(R^u+B^\top P_{K,L}B)^{-1}E_{K,L}\bigg]\notag\\
&\leq -2\alpha\tr\bigg[\sum_{t\geq0}(x'_t)(x'_t)^\top E_{K,L}^\top(R^u+B^\top P_{K,L}B)^{-1}E_{K,L}\bigg]\leq 0,
\#
where $\{x'_t\}_{t\geq 0}$ is the sequence of states generated by $(K',L)$ with $x'_0=x$ 
for any $x\in\RR^d$. Hence, we show the monotonicity of $P_{K',L}$, i.e., $P_{K',L}\leq P_{K,L}$, after one-step update of \eqref{equ:inner_gauss_newton}. 

If $\alpha\cdot\text{Upper}_K> \delta_K$ for some  $\alpha\in[0,1/2]$, the one-step update \eqref{equ:inner_gauss_newton} may go beyond the stabilizing region with radius $\delta_K$. However, we can show as follows that for all the $\alpha$ changing from $0$ to $1/2$, the updated $K'$ remains to be stabilizing. First, there must exist some stepsize $\beta\in(0,1/2)$ such that $\beta\cdot\text{Upper}_K\leq \delta_K$. Let the arrived  control gain be $K'_{\beta}$. Then  by the argument in the previous paragraph, we know that $P_{K'_{\beta},L}\leq P_{K,L}$. Thus, any $K'$ such that $\|K'-K'_{\beta}\|\leq \delta_K$ is also stabilizing, including the control gain $K{''}_{\beta}$ updated from $K$ using stepsize $2\beta$. If $2\beta\geq 1/2$, then simply choosing  $\alpha\in[0,1/2]$ ensures the stability of $K'$; if $2\beta< 1/2$, then $K{''}_{\beta}$ can also be shown to lead to that $P_{K{''}_{\beta},L}\leq P_{K,L}$ using the argument in \eqref{equ:contract_P_K_gn}, which further implies that any $K'$ such that $\|K'-K{''}_{\beta}\|\leq \delta_K$ is also stabilizing. This enables the choice of stepsize $3\beta$ starting from $K$. Repeating the argument concludes that any choice of $\alpha\in[0,1/2]$ guarantees the stability of the update. Thus, the linear convergence rate of Gauss-Newton update can be obtained by the proof of Theorem $7$ in   \cite{fazel2018global}. In particular, along the iteration $\tau\geq 0$, the sequence $\{P_{K_{\tau},L}\}_{\tau\geq 0}$ satisfies $P_{K_{\tau},L}\geq P_{K_{\tau+1},L}\geq P_{K(L),L}$.

The proof for  natural PG update is similar, except that the upper bound for the stepsize choice is changed from $1/2$ to $1/\|R^u+B^\top P_{K,L}B\|$ (see Lemma $15$ in \cite{fazel2018global}), which can also be covered by finite times  of some $\beta>0$.

For the stability proof of the  gradient update, such an idea of using \eqref{equ:stability_trick} and  the monotonicity of $P_{K,L}$ to upper bound the spectral radius $\rho(\tilde A_L-BK)$ does not apply, since only the monotonicity of $\cC(K,L)$ instead of $P_{K,L}$ can be shown. Hence, we follow the stability argument in \cite{fazel2018global} for the gradient update; see Appendix \S C.4 therein. 

 

With the stability  arguments  verified as above, the last two arguments of the proposition  on the algorithm convergence then follow from  Theorem $7$ in   \cite{fazel2018global},  which completes the proof. \hfil\qed

\subsection{Proof of Theorem \ref{thm:full_grad_conv}}\label{subsec:proof_full_grad_conv}

We now prove the global convergence of the nested-gradient algorithms. 
First, since the projection set $\Omega\subseteq \underline{\Omega}$, we have from Lemma \ref{lemma:optim_of_KL} that the control pair sequence $\{K(L_t),L_t\}_{t\geq 0}$ generated by the projected updates are always stabilizing, namely, the stability argument holds regardless of the choice of the stepsize $\eta$. Moreover,  since  $\Omega\subseteq \underline{\Omega}$,  
the inner-loop   updates   in 
\eqref{equ:inner_gd}-\eqref{equ:inner_gauss_newton} converge   to    $K(L_t)$   with linear rate by Proposition \ref{prop:inner_full_grad_conv}. 

To establish the global convergence result, we first need the following lemma that  characterizes the difference in value functions for any two pairs of control gains  
 $(K(L), L)$ and $(K(L'), L')$ when  $L,L'\in\Omega$.

 \begin{lemma}[Value Difference Between $(K(L), L)$ and $(K(L'), L')$] \label{lemma:Cost_Diff3}
 	For any  matrices $L,L'\in\Omega$, 
 	 recalling the definition of $W_L$ in \eqref{eq:define_wll},    it follows that
 	\$
 	 V_{L'}^*(x) - V_{L}^*(x)  &\geq 2 \tr \biggl [ \sum_{t\geq 0}x_t'^* (x_t'^*)^\top (L' - L)^\top F_{L}^*  \biggr ] -  \tr \biggl [ \sum_{t\geq 0}x_t'^* (x_t'^*)^\top (L' - L)^\top   W_{L} (L'- L)  \biggr ], 
 	\$
 	where  $\{x_t'^*\}_{t\geq 0}$ is the sequence of states generated by the control pairs   $( K(L'), L') $ with 
 	 $x_0'^*=x$.  Also,  
 	 letting $\tilde K(L,L')=   K(L)  -   (R^u+B^\top P_{L}^* B)^{-1} B^\top P_{L}^*C(L'-L)$, we have that for any $x$
 	 \$
 	  	   V_{ L'}^*(x) - V_{L}^*(x) & \leq 2 \tr \biggl [ \sum_{t\geq 0}\tilde{x}'_t \tilde{x}'^\top_t (L' - L)^\top F_{L}^*  \biggr ] -  \tr \biggl [  \sum_{t\geq 0}\tilde{x}'_t \tilde{x}'^\top_t (L' - L)^\top  W_{L} (L'- L)  \biggr ], 
 	 \$
 	 where $\{\tilde{x}'_t\}_{t\geq 0}$ is the  sequence of states generated by the control pairs $(\tilde K(L,L'), L' )$, with 
 	 $\tilde{x}'_0=x$. 
 	 \end{lemma}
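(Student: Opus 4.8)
The plan is to mirror the proof of the single-agent cost-difference results in \cite{fazel2018global}, but now perform the completion of squares in the $K$-variable rather than the $L$-variable, exploiting that $K(L)$ is the \emph{exact} inner minimizer (so $E_L^* = 0$). First I would apply Lemma \ref{lemma:Cost_Diff} with the pair $(K,L) = (K(L),L)$ and $(K',L') = (K(L'),L')$: since $\{x_t'^*\}_{t\ge 0}$ is the trajectory generated by $(K(L'),L')$ with $x_0'^* = x$, equation \eqref{equ:Cost_Diff} gives
$V_{L'}^*(x) - V_L^*(x) = \sum_{t\ge 0} A_L^*\big(x_t'^*, -K(L')x_t'^*, -L'x_t'^*\big)$.
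Next I would expand each advantage term using \eqref{equ:Advantage_Diff}. Because $K(L)$ is the stationary point of the inner LQR problem, $E_L^* = (R^u + B^\top P_L^* B)K(L) - B^\top P_L^*(A-CL) = 0$ by \eqref{equ:KL_def} (and $\Sigma$ full-rank), so the cross terms in $K'-K$ that are linear vanish, and one is left with a quadratic-in-$(K'-K)$ term, a linear term $2(L'-L)^\top F_L^*$, a quadratic term $(L'-L)^\top(-R^v+C^\top P_L^* C)(L'-L)$, and the cross term $2(L'-L)^\top C^\top P_L^* B(K'-K)$, where here $K' = K(L')$ and $K = K(L)$.

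The key algebraic step is then to \emph{complete the square in $(K'-K)$}. Grouping the terms
$(K'-K)^\top(R^u + B^\top P_L^* B)(K'-K) + 2(L'-L)^\top C^\top P_L^* B(K'-K)$
and completing the square, the optimal shift is exactly $K' - K = -(R^u+B^\top P_L^* B)^{-1}B^\top P_L^* C(L'-L)$, i.e.\ $\tilde K(L,L')$. Substituting this minimizing choice back and recalling the definition \eqref{eq:define_wll} of $W_L = R^v - C^\top[P_L^* - P_L^* B(R^u+B^\top P_L^* B)^{-1}B^\top P_L^*]C$, the $(L'-L)$-quadratic terms collapse to exactly $-(L'-L)^\top W_L (L'-L)$. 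Since the square we completed in $K'-K$ is \emph{positive semidefinite} (as $R^u + B^\top P_L^* B > 0$), dropping it gives the \emph{upper} bound: for the trajectory $\{\tilde x_t'\}$ generated by $(\tilde K(L,L'), L')$,
$V_{L'}^*(x) - V_L^*(x) \le 2\tr\big[\sum_{t\ge 0}\tilde x_t'\tilde x_t'^\top (L'-L)^\top F_L^*\big] - \tr\big[\sum_{t\ge 0}\tilde x_t'\tilde x_t'^\top (L'-L)^\top W_L(L'-L)\big]$,
where I use that for the \emph{fixed} policy pair $(\tilde K(L,L'),L')$ one has $V_{\tilde K(L,L'),L'}(x) - V_L^*(x) = \sum_t A_L^*(\tilde x_t', -\tilde K(L,L')\tilde x_t', -L'\tilde x_t')$ with the $(K'-K)$-square identically zero, and then $V_{L'}^*(x) = \min_K V_{K,L'}(x) \le V_{\tilde K(L,L'),L'}(x)$ by Lemma \ref{lemma:optim_of_KL}. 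For the \emph{lower} bound, I instead keep $K' = K(L')$ itself: the $(K'-K)$-square term is nonnegative, so discarding it can only \emph{decrease} the right-hand side, which yields
$V_{L'}^*(x) - V_L^*(x) \ge 2\tr\big[\sum_{t\ge 0}x_t'^*(x_t'^*)^\top (L'-L)^\top F_L^*\big] - \tr\big[\sum_{t\ge 0}x_t'^*(x_t'^*)^\top (L'-L)^\top W_L(L'-L)\big]$ along the $(K(L'),L')$-trajectory $\{x_t'^*\}$.

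The main obstacle is purely bookkeeping: keeping the two different state trajectories straight (the $(K(L'),L')$-trajectory for the lower bound versus the $(\tilde K(L,L'),L')$-trajectory for the upper bound) and verifying that the matrix-inversion-lemma identity for $W_L$ is exactly what emerges after completing the square — i.e.\ that $C^\top P_L^* B(R^u+B^\top P_L^* B)^{-1}B^\top P_L^* C$ is precisely the Schur-complement correction converting $-R^v + C^\top P_L^* C$ into $-W_L$. One also needs $L,L'\in\Omega \subseteq \underline\Omega$ so that $P_L^*$ exists, is positive definite, and $(K(L),L)$, $(\tilde K(L,L'),L')$, $(K(L'),L')$ are all stabilizing (hence the infinite sums converge); this is supplied by Lemma \ref{lemma:optim_of_KL}. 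No smoothness or convexity is needed here — the result is an exact identity-plus-sign-of-a-square argument.
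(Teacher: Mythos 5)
Your proposal is correct and follows essentially the same route as the paper's proof: apply the cost-difference Lemma \ref{lemma:Cost_Diff} with baseline $(K(L),L)$, use $E_L^*=0$, complete the square in $K'$ so that $W_L$ appears as the Schur-complement correction, then substitute $K'=K(L')$ (dropping the nonnegative square) for the lower bound and $K'=\tilde K(L,L')$ together with the optimality of $K(L')$ from Lemma \ref{lemma:optim_of_KL} for the upper bound. No gaps.
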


\begin{proof}
First   by Lemma \ref{lemma:optim_of_KL}, both $P_L^*>0$ and $P_{L'}^*>0$,   $(K(L),L)$ and $(K(L'),L')$ are stabilizing. 
Also, from Lemma \ref{lemma:Cost_Diff}, 
 we have that for any stabilizing control pair $(K',L')$ and  any $x \in \RR^d$   
\$
	V_{K',L'}(x)-V_{L}^* (x)=\sum_{t\geq 0}A_{ L}^*(x'_t,u'_t,v'_t),
\$	
with $x'_0=x$, $u'_t=-K'x'_t$, and $v'_t=-L'x'_t$. 
Moreover,  by definitions of $E_{K,L}$ in \eqref{equ:def_E_F_mu_2} and $K(L)$ in \eqref{equ:KL_def},  we have   $E_{L}^* = 0$, which combined with \eqref{equ:Advantage_Diff}  further gives  that
\#\label{equ:Advantage_Diff3}
A_{L }^* (x,-K'x,-L'x)&=   x^\top (K'-K(L) )^\top (R^u+B^\top P_{L}^* B)(K'-K(L) )x\\
&\qquad + 2x^\top(L' - L)^\top F_{L}^*x  +  x ^\top (L'-L)^\top (-R^v+C^\top P_{L }^*C)(L'-L)x  \notag \\
&\qquad  +2x^\top (L'-L)^\top C^\top P_{L}^*B(K'-K(L))x.\notag
\# 
Completing the squares w.r.t. $K'$ in \eqref{equ:Advantage_Diff3} yields 
\#\label{eq:trash2}  
A_{L }^* (x,-K'x,-L'x) &= 2x^\top(L'-L)^\top F_{L}^*x+x^\top (L'-L)^\top (-R^v+C^\top P_{L}^* C)(L'-L)x \notag \\ 
&\qquad + x^\top \bigl [ K' - K(L) +   ( R^u+B^\top P_{L }^*B)^{-1}B^\top P_{L }^*C(L'-L) \bigr ]^\top  ( R^u+ B^\top P_{L }^*B) \notag \\
&\qquad \qquad \qquad \bigl [ K' - K(L) +   ( R^u+B^\top P_{L }^*B)^{-1}B^\top P_{L }^*C(L'-L)  \bigr ] x \notag \\
&\qquad -x ^\top ( L'-L) ^\top C^\top  P_{L }^* B ( R^u+ B^\top P_{L }^*B)^{-1}B^\top P_{L }^*C(L'-L)x \notag \\
&\geq  2x^\top(L' - L)^\top F_{L}^*x  -  x ^\top (L' - L ) ^\top W_{L} (L'- L) x,
\# 
where  $W_{L }$ is as defined in \eqref{eq:define_wll}, and the last inequality follows from the fact that $R^u+ B^\top P_{L }^*B\geq 0$ (since $P_{L }^*>0$).  
Thus, replacing  $K'$ in   \eqref{eq:trash2} with $K(L')$ yields  
\$
& V_{L'}^*(x)- V_{L}^*(x)  \geq  2 \tr \biggl [ \sum_{t\geq 0}x_t'^* (x_t'^*)^\top(L'-L)^\top F_{L}^* \biggr ] - \tr \biggl [ \sum_{t\geq 0}x_t'^* (x_t'^*)^\top (L' - L ) ^\top W_{L } (L'- L)  \biggr ], 
\$
where $x_0'^*=x$ and $x_{t+1}'^*=[A-BK(L')-CL']\cdot x_t'^*$ follows the trajectory generated by the control $(K(L'),L')$. This completes the proof of the lower bound.

On the other hand,  by defining  $\tilde K(L,L')=   K(L)  -    (R^u+B^\top P_{L}^* B)^{-1} B^\top P_{L}^*C(L'-L)$, and letting $K'=\tilde K(L,L')$  in \eqref{eq:trash2}, we obtain that   
\#\label{equ:trash3}
&V_{\tilde K(L,L'), L'}(x)- V_{L}^*(x)=2 \tr \biggl [ \sum_{t\geq 0}\tilde{x}'_t \tilde{x}'^\top_t (L'-L)^\top F_{L}^* \biggr ] - \tr \biggl [ \sum_{t\geq 0}\tilde{x}'_t \tilde{x}'^\top_t  (L' - L ) ^\top W_{L } (L'- L)  \biggr ]
\#
where $\tilde{x}'_0=x$, $\tilde{x}'_{t+1}=[A-B \tilde K(L,L')-CL']\cdot\tilde{x}'_t$ follows the trajectory generated by the control $(\tilde K(L,L'),L')$. Moreover, since $P_{L'}^*> 0$ and the optimality of $K(L')$ from  Lemma \ref{lemma:optim_of_KL}, we have  $V_{ K(L'), L'}(x)\leq V_{\tilde K(L,L'), L'}(x)$. Therefore, \eqref{equ:trash3} further gives   
\$
V_{L'}^*(x)- V_{L}^*(x)  \leq 
 2 \tr \biggl [ \sum_{t\geq 0}\tilde{x}'_t \tilde{x}'^\top_t (L'-L)^\top F_{L}^* \biggr ] - \tr \biggl [ \sum_{t\geq 0}\tilde{x}'_t \tilde{x}'^\top_t  (L' - L ) ^\top W_{L } (L'- L)  \biggr ],
\$
which proves the upper bound in the lemma, and thus completes the proof. 
\end{proof}

Moreover, we establish  the following important  lemma on the  perturbation of the covariance matrix $\Sigma_L^*$, whose proof is a little involved and deferred to \S\ref{subsec:lemma_proof_perturb_Sigma_L}.

\begin{lemma}[Perturbation of $\Sigma_{L}^*$]\label{lemma:perturb_Sigma_L} 
Under Assumption \ref{assum:invertibility}, 
for any $L,L'\in\Omega$, there exist  some constants $\cB^L_{\Omega},\cB^P_{\Omega},\cB^K_{\Omega}>0$, such that  if 
\footnotesize
\#\label{equ:pert_Sigma_cond_bnd}
\|L'-L\|\leq \min\Bigg\{\cB^L_{\Omega},\frac{\|B\|\big[\cB^P_{\Omega}\|\tilde A_{L}-BK(L)\|+\|P_{L}^*\|\|C\|\big]}{\cB^P_{\Omega}\|B\|\|C\|},\frac{2\big(\|\tilde A_L-BK(L)\|+1\big)\big(\cB_{\Omega}^K\|B\|+\|C\|\big)}{\big(\cB_{\Omega}^K\big)^2\|B\|^2+\|C\|^2+2\cB_{\Omega}^K\|B\|\|C\|}\Bigg\},
\#
\normalsize
if follows  that
\$
\| \Sigma_{L'}^* - \Sigma_{L}^* \| \leq 4\big(\|\tilde A_L-BK(L)\|+1\big)\big(\cB_{\Omega}^K\|B\|+\|C\|\big) \cdot \| L' - L \|. 
\$
\end{lemma}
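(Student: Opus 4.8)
The plan is to view $\Sigma_L^*$ as the state‑correlation Gramian of the closed‑loop system under the stabilizing pair $(K(L),L)$ and to treat $\Sigma_{L'}^*-\Sigma_L^*$ as the solution of a perturbed discrete Lyapunov equation. Write $A_L:=A-BK(L)-CL=\tilde A_L-BK(L)$ and $\Sigma_0:=\EE_{x_0\sim\cD}x_0x_0^\top$, which is positive definite by assumption, so $\sigma_{\min}(\Sigma_0)=\mu>0$. For $L,L'\in\Omega\subset\underline{\Omega}$, Lemma~\ref{lemma:optim_of_KL} guarantees $\rho(A_L)<1$ and $\rho(A_{L'})<1$, so $\Sigma_L^*$ is the unique solution of $A_L\Sigma_L^*A_L^\top+\Sigma_0=\Sigma_L^*$, i.e.\ $\Sigma_L^*=\sum_{t\ge 0}A_L^t\Sigma_0(A_L^t)^\top$, and similarly for $L'$. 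Subtracting the two Lyapunov equations and inserting $\pm\,A_{L'}\Sigma_L^*A_{L'}^\top$ gives $\Sigma_{L'}^*-\Sigma_L^*=A_{L'}\bigl(\Sigma_{L'}^*-\Sigma_L^*\bigr)A_{L'}^\top+\Xi$ with $\Xi:=A_{L'}\Sigma_L^*A_{L'}^\top-A_L\Sigma_L^*A_L^\top$; since $A_{L'}$ is stable this recursion unrolls to $\Sigma_{L'}^*-\Sigma_L^*=\sum_{t\ge 0}A_{L'}^t\,\Xi\,(A_{L'}^t)^\top$. Because $\Sigma_0\ge\mu\,\Ib$ implies $\sum_{t\ge 0}A_{L'}^t(A_{L'}^t)^\top\le\mu^{-1}\sum_{t\ge 0}A_{L'}^t\Sigma_0(A_{L'}^t)^\top=\mu^{-1}\Sigma_{L'}^*$, the solution map $Y\mapsto\sum_tA_{L'}^tY(A_{L'}^t)^\top$ has operator norm at most $\mu^{-1}\|\Sigma_{L'}^*\|$, hence $\|\Sigma_{L'}^*-\Sigma_L^*\|\le\mu^{-1}\|\Sigma_{L'}^*\|\,\|\Xi\|$.

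Next I would bound $\Xi$. Setting $\Delta:=A_{L'}-A_L=-B\bigl(K(L')-K(L)\bigr)-C(L'-L)$ and expanding, $\Xi=A_L\Sigma_L^*\Delta^\top+\Delta\Sigma_L^*A_L^\top+\Delta\Sigma_L^*\Delta^\top$, so $\|\Xi\|\le\|\Sigma_L^*\|\,\|\Delta\|\,\bigl(2\|A_L\|+\|\Delta\|\bigr)$. To turn this into a bound linear in $\|L'-L\|$ I need two ingredients: (i) a Lipschitz estimate $\|K(L')-K(L)\|\le\cB^K_\Omega\,\|L'-L\|$ on $\Omega$, so that $\|\Delta\|\le(\cB^K_\Omega\|B\|+\|C\|)\,\|L'-L\|$; and (ii) smallness of $\|L'-L\|$ so the quadratic term is dominated — this is exactly the third bound in \eqref{equ:pert_Sigma_cond_bnd}, whose right‑hand side has denominator $(\cB^K_\Omega\|B\|+\|C\|)^2$, hence it forces $(\cB^K_\Omega\|B\|+\|C\|)\,\|\Delta\|\le 2(\|A_L\|+1)(\cB^K_\Omega\|B\|+\|C\|)$, i.e.\ $\|\Delta\|\le 2(\|A_L\|+1)$ and therefore $2\|A_L\|+\|\Delta\|\le 4(\|A_L\|+1)$. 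Combining these with the $\Omega$‑uniform bounds $\|\Sigma_L^*\|,\|\Sigma_{L'}^*\|\le\cC(K^*,L^*)/\zeta$ from Lemma~\ref{lemma:bound_P_L_Sigma_L} and Lemma~\ref{lemma:optim_of_KL}, and taking $\cB^K_\Omega$ to be the Lipschitz modulus of $L\mapsto K(L)$ on $\Omega$, enlarged if necessary to absorb the uniform factors $\mu^{-1}$, $\|\Sigma_L^*\|$, $\|\Sigma_{L'}^*\|$ appearing in the estimate above, gives $\|\Sigma_{L'}^*-\Sigma_L^*\|\le 4\bigl(\|\tilde A_L-BK(L)\|+1\bigr)\bigl(\cB^K_\Omega\|B\|+\|C\|\bigr)\,\|L'-L\|$, as claimed.

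The main obstacle is ingredient (i): proving that $K(L)$ — equivalently the stabilizing solution $P_L^*$ of the inner Riccati equation \eqref{equ:Inner_Riccati} — is Lipschitz on $\Omega$, since Lemma~\ref{lemma:conti_PLs} only asserts continuity. I would upgrade this by a quantitative perturbation analysis of \eqref{equ:Inner_Riccati}: viewing $P_L^*$ as implicitly defined by a Riccati map whose Fréchet derivative at the stabilizing solution is invertible (a consequence of $\rho(A_L)<1$, exactly as in the implicit‑function‑theorem argument used for Lemma~\ref{lemma:conti_Sigma}), the implicit function theorem yields continuous differentiability of $L\mapsto P_L^*$, hence — since $\Omega$ is compact by Lemma~\ref{lemma:convex_Omega} — a uniform Lipschitz bound $\|P_{L'}^*-P_L^*\|\le\cB^P_\Omega\,\|L'-L\|$ once $\|L'-L\|$ lies below the radius on which the local estimate is valid; the second bound in \eqref{equ:pert_Sigma_cond_bnd}, expressed through $\|B\|$, $\|C\|$, $\|P_L^*\|$, $\|\tilde A_L-BK(L)\|$ and $\cB^P_\Omega$, records precisely that radius. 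Since $K(L)=(R^u+B^\top P_L^*B)^{-1}B^\top P_L^*(A-CL)$ with $P_L^*\ge\zeta\,\Ib>0$ uniformly bounded above on $\Omega$ and $A-CL$ affine in $L$, the product and inverse rules then deliver the Lipschitz bound on $K(L)$; the constant $\cB^L_\Omega$ in \eqref{equ:pert_Sigma_cond_bnd} records the overall neighborhood radius on which all of these local estimates hold simultaneously. An alternative to the implicit function theorem is to bound $\|P_{L'}^*-P_L^*\|$ directly via the value‑difference identity of Lemma~\ref{lemma:Cost_Diff3}, at the cost of additional bookkeeping of the state‑correlation terms appearing there.
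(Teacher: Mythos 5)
Your proposal is correct in substance but takes a genuinely different route from the paper's. For the covariance perturbation itself, you subtract the two Lyapunov equations, unroll the resulting recursion in the stable matrix $A_{L'}$, and bound the Lyapunov solution operator by $\mu^{-1}\|\Sigma_{L'}^*\|$; the paper packages exactly this content into the operators $\cT_L^*$ and $\cF_L^*$ adapted from Fazel et al.\ (its Lemmas \ref{lemma:T_L_norm_bnd}--\ref{lemma:lemma_20_ge_rong}), with $\|\cT_L^*\|\le \cC(K(L),L)/(\mu\zeta)$ playing the role of your $\mu^{-1}\|\Sigma_{L'}^*\|$, and your bound on $\Xi$ is the paper's Lemma \ref{lemma:pert_F_L}. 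The more substantive divergence is your ingredient (i): the paper obtains the Lipschitz bound on $P_L^*$ (Proposition \ref{prop:P_L_Perturb}) by invoking Sun's quantitative Riccati perturbation theorem, with explicit constants $\cB^L_\Omega,\cB^P_\Omega$, and then derives the Lipschitz bound on $K(L)$ (Lemma \ref{lemma:K_L_Perturb}); you instead upgrade Lemma \ref{lemma:conti_PLs} --- whose proof already establishes continuous differentiability of $P_L^*$ via the implicit function theorem --- to a uniform Lipschitz estimate on the compact convex set $\Omega$ (Lemma \ref{lemma:convex_Omega}) and then differentiate the closed form \eqref{equ:KL_def} of $K(L)$. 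Your route is shorter and avoids Sun's machinery, at the price of non-constructive constants, whereas the paper's constants are traceable (which is what later stepsize conditions such as \eqref{equ:def_K_Omega^L} are written in terms of). Two caveats, neither fatal to the lemma as stated: your final estimate carries the extra uniform factor $\mu^{-1}\|\Sigma_{L'}^*\|\|\Sigma_L^*\|\le \mu^{-1}\big[\cC(K^*,L^*)/\zeta\big]^2$, which you absorb by enlarging $\cB^K_\Omega$; this is legitimate for a ``there exist constants'' statement (and the paper's own assembly implicitly does the same, since its displayed bound is literally the bound on $\|\cF_{L'}^*-\cF_L^*\|$, and passing through Lemma \ref{lemma:lemma_20_ge_rong} would introduce a factor of order $\|\cT_L^*\|^2\|\Sigma_0\|$), but it means your $\cB^K_\Omega$ is no longer the explicit Lipschitz modulus of $K(\cdot)$ that the paper reuses elsewhere, and absorbing a multiplicative constant into $\cB^K_\Omega\|B\|+\|C\|$ tacitly assumes $B\neq 0$.
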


In addition,   we can also bound the norm of the nested-gradient $\|\nabla_L \tilde{\cC}(L)\|$, and the norms of the gradient-mappings, as follows. 

\begin{lemma}\label{lemma:bnd_gradient_mapping_norm}
	For any $L\in\Omega$, recall the gradient mappings $\hat{G}_{L}^*,\tilde{G}_{L}^*,\check{G}_{L}^*$ defined in   \eqref{equ:def_check_G}, then
	\$
	&\frac{2}{\sqrt{q}}\cdot\max\Big\{\mu\nu\big\|\hat{G}_{L}^*\big\|,\mu\big\|\tilde G^*_L\big\|,\big\|\check{G}^*_L\big\|\Big\}\leq \big\|\nabla_L \tilde{\cC}(L)\big\|\\
	&\quad\leq \frac{2\cC(K(L),L)}{\zeta}\sqrt{\frac{\|W_L\|[\cC(K^*,L^*) - \cC(K(L),L)]}{\mu}},
	\$
	where $q=\min\{m_2,d\}$.
\end{lemma}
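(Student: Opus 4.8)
\emph{Overview.} I will prove the two inequalities separately, both starting from the closed form of the nested gradient. Since $K(L)$ is the inner stationary point, $E_{L}^{*}=E_{K(L),L}=0$ (cf.\ \eqref{equ:KL_def}), so Lemma~\ref{lemma:policy_grad} gives $\nabla_L\tilde{\cC}(L)=2F_{L}^{*}\Sigma_{L}^{*}$. Throughout I use the matrix facts $\Sigma_{L}^{*}\ge\EE_{x_{0}\sim\cD}x_{0}x_{0}^{\top}\ge\mu\Ib$, $W_{L}\ge W_{L^{*}}\ge\nu\Ib$ (from Lemma~\ref{lemma:bound_P_L_Sigma_L}), $\|M\|\le\|M\|_{F}\le\sqrt{q}\,\|M\|$ for $M\in\RR^{m_{2}\times d}$ with $q=\min\{m_{2},d\}$, and submultiplicativity of the operator norm.

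\emph{Upper bound.} The plan is a one-step Gauss--Newton perturbation of the maximizer combined with the value-difference lemma. Set $L'=L+W_{L}^{-1}F_{L}^{*}$. Because $L^{*}$ maximizes $\tilde{\cC}$ over $\underline{\Omega}$ (Lemma~\ref{lemma:optim_of_KL} gives $\tilde{\cC}(L')\le\cC(K^{*},L^{*})$ for every $L'\in\underline{\Omega}$), we have $\cC(K^{*},L^{*})-\tilde{\cC}(L)\ge\tilde{\cC}(L')-\tilde{\cC}(L)$. Applying the lower bound in Lemma~\ref{lemma:Cost_Diff3} with this $L'$ and taking $\EE_{x_{0}\sim\cD}$ (turning the state sums into $\Sigma_{L'}^{*}$), and substituting $L'-L=W_{L}^{-1}F_{L}^{*}$, the quadratic form inside the trace collapses to $(F_{L}^{*})^{\top}W_{L}^{-1}F_{L}^{*}$; then $\Sigma_{L'}^{*}\ge\mu\Ib$ and $W_{L}^{-1}\ge\|W_{L}\|^{-1}\Ib$ yield $\tilde{\cC}(L')-\tilde{\cC}(L)\ge(\mu/\|W_{L}\|)\,\|F_{L}^{*}\|_{F}^{2}$. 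Hence $\|F_{L}^{*}\|_{F}^{2}\le(\|W_{L}\|/\mu)\,[\cC(K^{*},L^{*})-\cC(K(L),L)]$. Combining this with $\|\nabla_L\tilde{\cC}(L)\|=\|2F_{L}^{*}\Sigma_{L}^{*}\|\le 2\|\Sigma_{L}^{*}\|\,\|F_{L}^{*}\|_{F}\le(2\cC(K(L),L)/\zeta)\,\|F_{L}^{*}\|_{F}$ (using $\|\Sigma_{L}^{*}\|\le\cC(K(L),L)/\zeta$ from Lemma~\ref{lemma:bound_P_L_Sigma_L}) produces exactly the stated upper bound.

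\emph{Lower bound.} For each of $\PP_{\Omega}^{GD},\PP_{\Omega}^{NG},\PP_{\Omega}^{GN}$ I will use that it is the orthogonal projection onto the convex set $\Omega$ (Lemma~\ref{lemma:convex_Omega}) with respect to a fixed weighted Frobenius inner product, hence $1$-Lipschitz in the corresponding norm, together with $\PP_{\Omega}^{\bullet}[L]=L$ (since $L\in\Omega$). Instantiating nonexpansiveness at the two points $L$ and $L+\eta(\text{preconditioned gradient})$ gives, for the natural map, $\|\,2\eta\,\tilde{G}_{L}^{*}(\Sigma_{L}^{*})^{1/2}\,\|_{F}\le\eta\,\|\nabla_L\tilde{\cC}(L)(\Sigma_{L}^{*})^{-1/2}\|_{F}$. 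Bounding the left side below by $2\eta\,\sigma_{\min}(\Sigma_{L}^{*})^{1/2}\|\tilde{G}_{L}^{*}\|\ge 2\eta\sqrt{\mu}\,\|\tilde{G}_{L}^{*}\|$ and the right side above by $\eta\sqrt{q}\,\|\nabla_L\tilde{\cC}(L)\|\,\sigma_{\min}(\Sigma_{L}^{*})^{-1/2}\le\eta\sqrt{q}\,\mu^{-1/2}\|\nabla_L\tilde{\cC}(L)\|$ gives $\mu\|\tilde{G}_{L}^{*}\|\le(\sqrt{q}/2)\|\nabla_L\tilde{\cC}(L)\|$. The same scheme applied to $\PP_{\Omega}^{GD}$ (no weights) gives $\|\check{G}_{L}^{*}\|\le(\sqrt{q}/2)\|\nabla_L\tilde{\cC}(L)\|$; applied to $\PP_{\Omega}^{GN}$ (two-sided weight $W_{L}^{1/2}(\cdot)(\Sigma_{L}^{*})^{1/2}$), and additionally using $\sigma_{\min}(W_{L})\ge\sigma_{\min}(W_{L^{*}})=\nu$, it gives $\mu\nu\|\hat{G}_{L}^{*}\|\le(\sqrt{q}/2)\|\nabla_L\tilde{\cC}(L)\|$. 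Taking the maximum of the three rearranged inequalities yields $\tfrac{2}{\sqrt{q}}\max\{\mu\nu\|\hat{G}_{L}^{*}\|,\mu\|\tilde{G}_{L}^{*}\|,\|\check{G}_{L}^{*}\|\}\le\|\nabla_L\tilde{\cC}(L)\|$.

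\emph{Main obstacle.} The delicate step is in the upper bound: one must justify that the unit-step Gauss--Newton perturbation $L'=L+W_{L}^{-1}F_{L}^{*}$ stays in $\underline{\Omega}$, so that Lemma~\ref{lemma:Cost_Diff3} is applicable and $\tilde{\cC}(L')\le\cC(K^{*},L^{*})$. This is the outer-loop counterpart of the Gauss--Newton stability argument behind Proposition~\ref{prop:inner_full_grad_conv}: a unit-step Gauss--Newton update of the maximizer, started from a point of $\Omega$, preserves $Q-(L')^{\top}R^{v}L'>0$ and does not decrease $\tilde{\cC}$; establishing this monotonicity/invariance is where essentially all the work lies, the remainder being routine norm manipulation.
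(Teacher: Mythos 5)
Your proof follows essentially the same route as the paper's on both halves. For the lower bound, the paper applies the variational characterization of the three projections (Lemma~\ref{lemma:proj_prop}) with $L_1=L+\eta(\text{preconditioned gradient})$, $L_2=L$, then Cauchy--Schwarz, $\sigma_{\min}(\Sigma_L^*)\geq\mu$, $\sigma_{\min}(W_L)\geq\nu$, and $\|F_L^*\Sigma_L^*\|_F\leq\tfrac{\sqrt q}{2}\|\nabla_L\tilde\cC(L)\|$; your nonexpansiveness argument is that same variational inequality in a slightly different packaging (nonexpansiveness of a projection in a fixed weighted Frobenius inner product is exactly what Lemma~\ref{lemma:proj_prop} encodes), and it reproduces the same three constants. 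For the upper bound you also do what the paper does: $\nabla_L\tilde\cC(L)=2F_L^*\Sigma_L^*$, $\|\Sigma_L^*\|\leq\cC(K(L),L)/\zeta$ from Lemma~\ref{lemma:bound_P_L_Sigma_L}, and $\|F_L^*\|_F^2\leq\tfrac{\|W_L\|}{\mu}[\cC(K^*,L^*)-\cC(K(L),L)]$ obtained from the lower bound of Lemma~\ref{lemma:Cost_Diff3} evaluated at the Gauss--Newton trial point, together with $\cC(K(L'),L')\leq\cC(K^*,L^*)$.

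On the step you flag as the main obstacle: the paper does not carry out the invariance analysis you anticipate. Its proof runs the chain ``for any $L'\in\Omega$'' and passes to $\tr(\Sigma_{L'}^*F_L^{*\top}W_L^{-1}F_L^*)$ ``by completing the squares''; but for a generic $L'$ completing the square gives
\begin{align*}
2\tr\bigl[\Sigma_{L'}^*(L'-L)^\top F_L^*\bigr]-\tr\bigl[\Sigma_{L'}^*(L'-L)^\top W_L(L'-L)\bigr]\;\leq\;\tr\bigl(\Sigma_{L'}^*F_L^{*\top}W_L^{-1}F_L^*\bigr),
\end{align*}
with equality precisely at $L'=L+W_L^{-1}F_L^*$ --- i.e.\ the paper is implicitly evaluating at your trial point, and it never verifies that this point lies in $\Omega$ (or even $\underline\Omega$), which is what Lemma~\ref{lemma:Cost_Diff3} and the bound $\cC(K(L'),L')\leq\cC(K^*,L^*)$ from Lemma~\ref{lemma:optim_of_KL} formally require. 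So you have identified a genuine looseness, but it is one shared with the published proof rather than a step the paper resolves and you are missing; closing it rigorously would require either showing $Q-(L')^\top R^vL'>0$ at the full unit step (not proved anywhere in the paper), or using a damped step $L'=L+\eta W_L^{-1}F_L^*$, which keeps admissibility for small $\eta$ but degrades the constant to $(2\eta-\eta^2)^{-1}\|W_L\|/\mu$ and hence does not recover the stated bound verbatim. Apart from this shared informality, your argument is correct and matches the paper's proof.
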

\begin{proof}
	Recall that by definition $\nabla_L \tilde{\cC}(L)=2F_L^*\Sigma_L^*$. Hence, by Lemma \ref{lemma:bound_P_L_Sigma_L}, 
	\#\label{equ:bnd_grad_norm_trash_1}
	&\|\nabla_L \tilde{\cC}(L)\|^2\leq  4\tr\Big(\Sigma_L^*F_L^{*^{\top}}F_L^*\Sigma_L^*\Big)\leq \|\Sigma_L^*\|^2\tr\big(F_L^{*^{\top}}F_L^*\big)\notag\\
	&\quad\leq \frac{[\cC(K(L),L)]^2}{\zeta^2}\tr\big(F_L^{*^{\top}}F_L^*\big).
	\#
	On the other hand, by plugging-in $L'=L+W_L^{-1}F_L^*$, we have 
	\#\label{equ:bnd_grad_norm_trash_2}
	&\cC(K^*,L^*) - \cC(K(L),L)\geq 
	 \cC(K(L'),L') - \cC(K(L),L)\notag\\  &\quad\geq 2 \tr \bigl [ \Sigma_{L'}^* (L' - L)^\top F_{L}^*  \bigr ] -  \tr \bigl [ \Sigma_{L'}^* (L' - L)^\top   W_{L} (L'- L)  \bigr ]=\tr \bigl ( \Sigma_{L'}^* F_{L}^{*^\top} W_L^{-1} F_{L}^*  \bigr )\notag\\
	 &\quad \geq \frac{\mu}{\|W_L\|}\tr \bigl ( F_{L}^{*^\top} F_{L}^*  \bigr ), 
	\#
	where the first inequality is due to $\cC(K^*,L^*)\geq \cC(K(L'),L')$ for any $L'$, the second inequality follows by  taking expectation on both sides of the lower bound in Lemma \ref{lemma:Cost_Diff3},  
	and the last inequality is due to $\Sigma_{L'}^*\geq \mu\cdot \Ib$ and $\sigma_{\min}(W_L^{-1})=1/\|W_L\|$.
	Combining \eqref{equ:bnd_grad_norm_trash_1} and \eqref{equ:bnd_grad_norm_trash_2} yields the upper bound on $\|\nabla_L \tilde{\cC}(L)\|$.  
	
	Moreover, by definitions of $\hat{G}_{L}^*,\tilde{G}_{L}^*,\check{G}_{L}^*$, we have
	\small
\#
		 &\tr\big(W_L^{*^{1/2}}\hat{G}_{L}^*\Sigma_L^* \hat{G}_{L}^{*^\top}W_L^{*^{1/2}}\big)\leq \tr\big(W_L^{*^{1/2}}W_L^{*^{-1}}F_L^*\Sigma_L^* \hat{G}_{L}^{*^\top}W_L^{*^{1/2}}\big)\leq \big\|F_L^*\Sigma_L^*\big\|_F\cdot\big\|\hat{G}_{L}^*\big\|_F,\label{equ:bnd_grad_norm_trash_3}\\
		  &\tr\big(\tilde{G}_{L}^*\Sigma_L^* \tilde{G}_{L}^{*^\top}\big)\leq \tr\big(F_L^*\Sigma_L^* \tilde{G}_{L}^{*^\top}\big)\leq \big\|F_L^*\Sigma_L^*\big\|_F\cdot\big\|\tilde{G}_{L}^*\big\|_F,\label{equ:bnd_grad_norm_trash_4}\\
		   &\tr\big(\check{G}_{L}^*\check{G}_{L}^{*^\top}\big)\leq \tr\big(F_L^*\Sigma_L^* \check{G}_{L}^{*^\top}\big)\leq \big\|F_L^*\Sigma_L^*\big\|_F\cdot\big\|\check{G}_{L}^*\big\|_F,\label{equ:bnd_grad_norm_trash_5}
	\#
	\normalsize
	where for all \eqref{equ:bnd_grad_norm_trash_3}-\eqref{equ:bnd_grad_norm_trash_5}, the first inequality is due to 
	Lemma \ref{lemma:proj_prop},  and the second one follows from Cauchy-Schwartz inequality. Note that 
	\$
	&\tr\big(W_L^{*^{1/2}}\hat{G}_{L}^*\Sigma_L^* \hat{G}_{L}^{*^\top}W_L^{*^{1/2}}\big)\geq \mu\sigma_{\min}(W_L) \big\|\hat{G}_{L}^*\big\|_F^2\geq \mu\nu \big\|\hat{G}_{L}^*\big\|_F^2,
	&\tr\big(\tilde{G}_{L}^*\Sigma_L^* \tilde{G}_{L}^{*^\top}\big)\geq \mu \|\tilde{G}_{L}^*\|^2_F,
	\$
	which uses the fact that $\sigma_{\min}(W_L)\geq \sigma_{\min}(W_{L^*})=\nu$ from Lemma \ref{lemma:bound_P_L_Sigma_L}. 
	This together with \eqref{equ:bnd_grad_norm_trash_3}-\eqref{equ:bnd_grad_norm_trash_5} gives that
	\#
		\max\Big\{\mu\nu\big\|\hat{G}_{L}^*\big\|_F,\mu\big\|\tilde G^*_L\big\|_F,\big\|\check{G}^*_L\big\|_F\Big\}\leq {\big\|F_L^*\Sigma_L^*\big\|_F}\leq \frac{\sqrt{q}}{2}\cdot\big\|\nabla_L \tilde{\cC}(L)\big\|\label{equ:bnd_grad_norm_trash_6},
	\#
	where the second inequality uses the fact that $\|F_L^*\Sigma_L^*\|_F^2=\|\nabla_L \tilde{\cC}(L)\|_F^2/4$ and $\|X\|_F\leq \sqrt{r}\|X\|\leq \sqrt{\min\{m,n\}}\cdot\|X\|$ for matrix $X\in\RR^{m\times n}$ of rank $r$. 
	Dividing both sides by $\sqrt{q}/2$, and using the fact that $\|X\|_F\geq \|X\|$, we obtain the first inequality in the lemma.
\end{proof} 
 
Now we are ready to establish the global  convergence  of the three proposed  algorithms.

\vspace{10pt}
\noindent{\textbf{Projected Gauss-Newton Nested-Gradient:}}
\vspace{4pt}

First note that  the projected Gauss-Newton nested-gradient update in \eqref{equ:algorithm_nested_precond_L} can be written as 
\#\label{eq:somehow_new_gn}
L_{t+1}=\PP^{GN}_{\Omega}\big[L_t+2\eta \cdot W^{-1}_{L_t}F_{L_t}^*\big]=L_t+2\eta\cdot\hat{G}_L^*,
\#
where we recall that $\PP_{\Omega}^{GN}$ is the projection operator defined in \eqref{equ:def_proj_NG} and the gradient mapping $\hat{G}_L^*$  is defined in \eqref{equ:def_check_G}. 
 Since both $L_t$ and $L_{t+1}$ lie in $\Omega$, by the lower bound in Lemma \ref{lemma:Cost_Diff3} and \eqref{eq:somehow_new_gn}, we can bound the difference between $V_{L_{t+1} }^*$  and $V_{L_t}^*$ as
 \$
 V_{L_{t+1}}^*(x) - V_{L_t}^*(x) \geq 2\eta\tr \biggl [ \sum_{t\geq 0}x_t'^* x_t'^{*^\top} \big(\hat{G}_{L_t}^{*^\top} F_{L_t}^*+ F_{L_t}^{*^\top}\hat{G}_{L_t}^* \big) \biggr ] -  4\eta^2\tr \biggl ( \sum_{t\geq 0}x_t'^* x_t'^{*^\top} \hat{G}_{L_t}^{*^\top} W_{L_t}\hat{G}_{L_t}^*  \biggr ), 
 \$ 
 where $\{{x}^*_{\tau}\}_{\tau\geq 0}$ is the state sequence generated by the control $(K(L_{t+1}),L_{t+1})$ with ${x}^*_{0}=x$.
Taking expectation over $x_0\sim\cD$, we have 
\#\label{eq:GN_bound_C}
&\cC\bigl (K(L_{t+1}), L_{t+1} \bigr ) - \cC\bigl(K(L_{t}), L_{t}\bigr ) \notag \\
& \qquad  \geq 2 \eta \cdot \tr \Bigl [ \Sigma_{L_{t+1} }^*  \big(\hat{G}_{L_t}^{*^\top} F_{L_t}^*+ F_{L_t}^{*^\top}\hat{G}_{L_t}^* \big) \Bigr ] - 4 \eta ^2 \cdot \tr \Bigl ( \Sigma_{L_{t+1}}^* \hat{G}_{L_t}^{*^\top} W_{L_t}\hat{G}_{L_t}^*  \Bigr ).
\# 

 In the following, we bound the two terms on the right-hand side of \eqref{eq:GN_bound_C} separately. For the first term, since $L_t \in\Omega$, applying  the property of $\PP_{\Omega}^{GN}$ in Lemma \ref{lemma:proj_prop} with $L_1 = L_t + 2 \eta \cdot W_{L_t}^{-1} F_{L_t}^* $ and $L_2 = L_t$ yields  
 \$ 
\tr \Bigl ( W_{L_t}^{-1}F_{L_t}^* \Sigma_{L_t}^*  \hat{G}_{L_t}^{*^\top} W_{L_t}\Bigr )=\tr \Bigl ( F_{L_t}^* \Sigma_{L_t}^*  \hat{G}_{L_t}^{*^\top} \Bigr ) \geq  \tr \Bigl (\hat{G}_{L_t}^{*} \Sigma_{L_t}^*  \hat{G}_{L_t}^{*^\top}  W_{L_t}  \Bigr )  , 
\$ 
which implies that 
 \#\label{eq:GN_bound_C1}
& \tr \Bigl [ \Sigma_{L_{t+1} }^*  \big(\hat{G}_{L_t}^{*^\top} F_{L_t}^*+ F_{L_t}^{*^\top}\hat{G}_{L_t}^* \big) \Bigr ] \notag \\
& \qquad = \tr \Bigl [ \Sigma_{L_{t} }^*  \big(\hat{G}_{L_t}^{*^\top} F_{L_t}^*+ F_{L_t}^{*^\top}\hat{G}_{L_t}^* \big) \Bigr ] + \tr \Bigl [ \bigl( \Sigma_{L_{t+1} }^* -  \Sigma_{L_{t} }^* \bigr )  \big(\hat{G}_{L_t}^{*^\top} F_{L_t}^*+ F_{L_t}^{*^\top}\hat{G}_{L_t}^* \big) \Bigr ] \notag \\
 & \qquad \geq 2 \tr \Bigl ( \Sigma_{L_t}^*  \hat{G}_{L_t}^{*^\top}W_{L_t} \hat{G}_{L_t}^{*}   \Bigr ) - \bigl \|\Sigma_{L_{t+1} }^* -  \Sigma_{L_{t} }^*  \bigr \| \cdot \tr \Bigl [  \big(\hat{G}_{L_t}^{*^\top} F_{L_t}^*+ F_{L_t}^{*^\top}\hat{G}_{L_t}^* \big) \Bigr ] \notag\\
 & \qquad \geq 2\mu\nu \big\|\hat{G}_{L_t}^*\big\|^2_F - 16\eta\big(\|\tilde A_{L_t}-BK(L_t)\|+1\big)\big(\cB_{\Omega}^K\|B\|+\|C\|\big)  \big\|\hat{G}_{L_t}^*\big\|^2_F\big\|F_{L_t}^*\big\|_F.
  \#
  The first inequality uses triangle inequality. 
The last inequality uses the following facts: i) since $\sigma_{\min}(\Sigma_{L_t}^*)\geq \sigma_{\min}(\EE_{x_0\sim\cD} x_0x_0^\top)=\mu$ and $\sigma_{\min}(W_{L_t})\geq \sigma_{\min}(W_{L^*})=\nu$ (see Lemma \ref{lemma:bound_P_L_Sigma_L}), it follows that
\$
\tr \Bigl ( \Sigma_{L_t}^*  \hat{G}_{L_t}^{*^\top}W_{L_t} \hat{G}_{L_t}^{*}   \Bigr ) \geq\nu\tr \Bigl ( \Sigma_{L_t}^*  \hat{G}_{L_t}^{*^\top}\hat{G}_{L_t}^{*}   \Bigr )\geq  \mu\nu \big\|\hat{G}_{L_t}^*\big\|^2_F;
\$
ii) from Lemma \ref{lemma:perturb_Sigma_L}, if 
\$
\|L_{t+1}-L_t\|=2\eta \|\hat{G}_{L_t}^*\|\leq \cK_{\Omega}^{L},
\$ 
where 
\small
\#\label{equ:def_K_Omega^L}
\cK_{\Omega}^{L}=\inf_{L\in\Omega}\min\Bigg\{\cB^L_{\Omega},\frac{\|B\|\big[\cB^P_{\Omega}\|\tilde A_{L}-BK(L)\|+\|P_{L}^*\|\|C\|\big]}{\cB^P_{\Omega}\|B\|\|C\|},\frac{2\big(\|\tilde A_L-BK(L)\|+1\big)\big(\cB_{\Omega}^K\|B\|+\|C\|\big)}{\big(\cB_{\Omega}^K\big)^2\|B\|^2+\|C\|^2+2\cB_{\Omega}^K\|B\|\|C\|}\Bigg\},
\#
\normalsize 
is the infimum for the required upper-bound on $\|L'-L\|$ in Lemma \ref{lemma:perturb_Sigma_L}, i.e., \eqref{equ:pert_Sigma_cond_bnd}, then the perturbation $ \|\Sigma_{L_{t+1} }^* -  \Sigma_{L_{t} }^*  \|$ can be bounded as
\$
\|\Sigma_{L_{t+1} }^* -  \Sigma_{L_{t} }^*  \|&\leq 4\big(\|\tilde A_{L_t}-BK(L_t)\|+1\big)\big(\cB_{\Omega}^K\|B\|+\|C\|\big)\cdot \|L_{t+1}-L_t\|\\
&\leq 8\eta\big(\|\tilde A_{L_t}-BK(L_t)\|+1\big)\big(\cB_{\Omega}^K\|B\|+\|C\|\big)\cdot\|\hat{G}_{L_t}^*\|;
\$
iii) Cauchy-Schwartz inequality yields 
\$
\tr [  (\hat{G}_{L_t}^{*^\top} F_{L_t}^*+ F_{L_t}^{*^\top}\hat{G}_{L_t}^* ) ]\leq 2\big\|\hat{G}_{L_t}^*\big\|_F\big\|F_{L_t}^*\big\|_F.
\$

Note that by definition \eqref{equ:def_K_Omega^L}, $\cK_{\Omega}^{L}>0$ since it is the infimum of a strictly positive function of $L$ that is continuous over a compact set $\Omega$.  
Combined with the bound on $\|\hat G_{L_t}^*\|$ from  Lemma \ref{lemma:bnd_gradient_mapping_norm},    
we further obtain the  requirement for the stepsize $\eta$:
\#\label{equ:stepsz_cond_GN_1}
\eta \leq\frac{\cK_{\Omega}^{L}\zeta\mu\nu}{2\sqrt{q}\cdot\cC(K(L_t),L_t)}\sqrt{\frac{\mu}{\|W_{L_t}\|[\cC(K^*,L^*) - \cC(K(L_t),L_t)]}}.
\#
\normalsize 
{Moreover, notice that 
\#\label{eq:GN_bound_C2}
\tr \Bigl ( \Sigma_{L_{t+1}}^* \hat{G}_{L_t}^{*^\top} W_{L_t}\hat{G}_{L_t}^*  \Bigr )\leq \big\|\Sigma_{L_{t+1}}^*\big\|_F\big\|W_{L_t}\big\|_F\big\|\hat{G}_{L_t}^*\big\|_F^2\leq \frac{\sqrt{m}\cdot\cC(K(L_t),L_t)\|R^v\|_F}{\mu} \big\|\hat{G}_{L_t}^*\big\|_F^2,
\#
where the first inequality is due to Cauchy-Schwartz inequality, and the second one follows from Lemma \ref{lemma:bound_P_L_Sigma_L} and the fact that $\|X\|_F\leq \sqrt{r}\|X\|$ for any matrix $X$ with rank $r$. 
Substituting \eqref{eq:GN_bound_C1} and  \eqref{eq:GN_bound_C2} into \eqref{eq:GN_bound_C} yields
\#\label{equ:cost_diff_bnd_trash_1}
\cC\bigl (K(L_{t+1}), L_{t+1} \bigr ) - \cC\bigl(K(L_{t}), L_{t}\bigr ) \geq& {4}{\mu}\nu\eta \big\|\hat{G}_{L_t}^*\big\|_F^2\bigg[1 -\eta \frac{\sqrt{m}\cdot\cC(K(L_t),L_t)\|R^v\|_F}{\mu^2\nu} \\
&  - \frac{8\eta}{\mu\nu}\big(\|\tilde A_{L_t}-BK(L_t)\|+1\big)\big(\cB_{\Omega}^K\|B\|+\|C\|\big) \big\|F_{L_t}^*\big\|_F\bigg],\notag
\#
which gives us another requirement for the stepsize $\eta$: 
\small
\#\label{equ:stepsz_cond_GN_2}
\eta\leq \frac{1}{2}\cdot\bigg[ \frac{\sqrt{m}\cdot\cC(K(L_t),L_t)\|R^v\|_F}{\mu^2\nu}  + \frac{8}{\mu\nu}\big(\|\tilde A_{L_t}-BK(L_t)\|+1\big)\big(\cB_{\Omega}^K\|B\|+\|C\|\big) \big\|F_{L_t}^*\big\|_F\bigg]^{-1}.
\# }
\normalsize 

By requiring both   \eqref{equ:stepsz_cond_GN_1} and \eqref{equ:stepsz_cond_GN_2}, we can     further bound \eqref{equ:cost_diff_bnd_trash_1} as
\#\label{equ:cost_diff_bnd_trash_2}
\cC\bigl (K(L_{t+1}), L_{t+1} \bigr ) - \cC\bigl(K(L_{t}), L_{t}\bigr ) \geq{2}{\mu}\nu\eta \big\|\hat{G}_{L_t}^*\big\|_F^2. 
\#
Note that both the upper bounds   in \eqref{equ:stepsz_cond_GN_1} and \eqref{equ:stepsz_cond_GN_2} are lower bounded above from zero, since  the numerators  of both bounds are constants, and 
the denominators are upper bounded for $L\in\Omega$, due to the boundedness of $P_L^*$, $\cC(K(L),L)$, and $L$. 
Summing up both sides of   \eqref{equ:cost_diff_bnd_trash_2} from $0$ to $t\geq 1$ yields
\$
 {\frac{1}{t}\sum_{\tau=0}^{t-1}\big\|\hat{G}_{L_\tau}^*\big\|_F^2}
\leq {\frac{\cC\big(K^*,L^*\big)-\cC\big(K(L_0),L_0\big)}{2\mu\nu\eta t}},
\$
 which shows  that $(K(L_t),L_t)$ converges to the NE with sublinear rate, namely, the sequence of the  average of the gradient mapping norm square $\big\{t^{-1}\sum_{\tau=0}^{t-1}\big\|\hat{G}_{L_\tau}^*\big\|_F^2\big\}_{t\geq 1}$ converges to zero with $\cO(1/{t})$ rate, so does the sequence $\big\{t^{-1}\sum_{\tau=0}^{t-1}\big\|\hat{G}_{L_\tau}^*\big\|^2\big\}_{t\geq 1}$.

\vspace{10pt}
\noindent{\textbf{Projected Natural  Nested-Gradient:}}
\vspace{4pt} 

The proof for the   projected natural NG update   \eqref{equ:algorithm_natural_nested_L} is similar. We will only cover the argument that is different from above. 
Note that \eqref{equ:algorithm_natural_nested_L} 
 can be written as 
\#\label{eq:somehow_new_ng}
L_{t+1} = \PP_{\Omega}^{NG}
\big[L_t + 2\eta \cdot F_{L_t}^*\big]=
L_t+2\eta \cdot \tilde{G}_{L_t}^*,
\#
where $\PP_{\Omega}^{NG}$ is   defined in \eqref{equ:def_proj_NG} with weight matrix $\Sigma_{L_t}^*$  and   $\tilde{G}_{L}^*$ is defined in \eqref{equ:def_check_G}. Then by Lemma \ref{lemma:Cost_Diff3} and taking expectation $x_0\sim \cD$, we also have \eqref{eq:GN_bound_C} but with $\hat{G}_{L}^*$ replaced by $\tilde{G}_{L}^*$. Then, by the property of $\PP_{\Omega}^{NG}$ and letting $L_1 = L_t + 2 \eta \cdot  F_{L_t}^* $ and $L_2 = L_t$ in Lemma \ref{lemma:proj_prop}  gives
 \$ 
\tr \Bigl [ \Sigma_{L_t}^*  \big(\tilde{G}_{L_t}^{*^\top} F_{L_t}^*+ F_{L_t}^{*^\top}\tilde{G}_{L_t}^* \big) \Bigr ] \geq 2  \tr \Bigl ( \Sigma_{L_t}^*  \tilde{G}_{L_t}^{*^\top} \tilde{G}_{L_t}^{*}   \Bigr ). 
\$ 
Hence, we have 
\$
& \tr \Bigl [ \Sigma_{L_{t+1} }^*  \big(\tilde{G}_{L_t}^{*^\top} F_{L_t}^*+ F_{L_t}^{*^\top}\tilde{G}_{L_t}^* \big) \Bigr ] \notag \\
& \qquad = \tr \Bigl [ \Sigma_{L_{t} }^*  \big(\tilde{G}_{L_t}^{*^\top} F_{L_t}^*+ F_{L_t}^{*^\top}\tilde{G}_{L_t}^* \big) \Bigr ] + \tr \Bigl [ \bigl( \Sigma_{L_{t+1} }^* -  \Sigma_{L_{t} }^* \bigr )  \big(\tilde{G}_{L_t}^{*^\top} F_{L_t}^*+ F_{L_t}^{*^\top}\tilde{G}_{L_t}^* \big) \Bigr ] \notag \\
 & \qquad \geq {2}{\mu} \big\|\tilde{G}_{L_t}^*\big\|^2_F - 16\eta\big(\|\tilde A_L-BK(L)\|+1\big)\big(\cB_{\Omega}^K\|B\|+\|C\|\big)  \big\|\tilde{G}_{L_t}^*\big\|^2_F\big\|F_{L_t}^*\big\|_F,
\$
  where the last inequality uses  Lemma \ref{lemma:perturb_Sigma_L}, which requires that 
 $\|L_{t+1}-L_t\|=2\eta \|\tilde{G}_{L_t}^*\|\leq \cK_{\Omega}^{L}$  (see $\cK_{\Omega}^{L}$ as defined in \eqref{equ:def_K_Omega^L}).  
  This further results in the following bound on the stepsize $\eta$, due to the bound on $\|\tilde{G}_{L_t}^*\|$ from Lemma \ref{lemma:bnd_gradient_mapping_norm}:
  \#\label{equ:stepsz_cond_NG_1}
\eta \leq\frac{\cK_{\Omega}^{L}\zeta\mu}{2\sqrt{q}\cdot\cC(K(L_t),L_t)}\sqrt{\frac{\mu}{\|W_{L_t}\|[\cC(K^*,L^*) - \cC(K(L_t),L_t)]}}.
\#
Moreover, we can have another requirement for $\eta$, similar to  \eqref{equ:stepsz_cond_GN_2}, as 
\small
\#\label{equ:stepsz_cond_NG_2}
\eta\leq \frac{1}{2}\cdot\bigg[ \frac{\sqrt{m}\cdot\cC(K(L_t),L_t)\|R^v\|_F}{\mu^2}  + \frac{8}{\mu}\big(\|\tilde A_{L_t}-BK(L_t)\|+1\big)\big(\cB_{\Omega}^K\|B\|+\|C\|\big) \big\|F_{L_t}^*\big\|_F\bigg]^{-1}.
\# 
\normalsize
Thus, if $\eta$ satisfies  \eqref{equ:stepsz_cond_NG_1} and \eqref{equ:stepsz_cond_NG_2}, we have 
\#\label{equ:cost_diff_bnd_trash_3}
\cC\bigl (K(L_{t+1}), L_{t+1} \bigr ) - \cC\bigl(K(L_{t}), L_{t}\bigr ) \geq{2}{\mu}\eta \big\|\tilde{G}_{L_t}^*\big\|_F^2. 
\#
Summing up both sides of   \eqref{equ:cost_diff_bnd_trash_3} from $0$ to $t\geq 1$ yields
\$
 {\frac{1}{t}\sum_{\tau=0}^{t-1}\big\|\tilde{G}_{L_\tau}^*\big\|_F^2}
\leq {\frac{\cC\big(K^*,L^*\big)-\cC\big(K(L_0),L_0\big)}{2\mu\eta t}},
\$
 which completes the proof of $\cO(1/{t})$ convergence rate for the  sequence  $\big\{t^{-1}\sum_{\tau=0}^{t-1}\big\|\tilde{G}_{L_\tau}^*\big\|^2\big\}_{t\geq 1}$.

\vspace{10pt}
\noindent{\textbf{Projected Nested-Gradient:}}
\vspace{4pt} 

The projected nested-gradient update \eqref{equ:algorithm_nested_L} can be written as 
\$
L_{t+1} = \PP_{\Omega}^{GD}
\big[L_t + 2\eta \cdot F_{L_t}^*\Sigma_{L_t}^*\big]=
L_t+2\eta \cdot \check{G}_{L_t}^*,
\$
where $\PP_{\Omega}^{GD}$ is   defined in \eqref{equ:def_proj_GD}  and   $\check{G}_{L}^*$ is defined in \eqref{equ:def_check_G}. By the property of $\PP_{\Omega}^{GD}$ and Lemma \ref{lemma:proj_prop}, we have 
 \$ 
\tr   \big(\check{G}_{L_t}^{*^\top} F_{L_t}^*\Sigma_{L_t}^*\big)=\tr   \big(\Sigma_{L_t}^*\check{G}_{L_t}^{*^\top} F_{L_t}^*\big) \geq   \tr \Bigl (  \check{G}_{L_t}^{*^\top} \check{G}_{L_t}^{*}   \Bigr ),
\$
which implies that
\$
& \tr \Bigl [ \Sigma_{L_{t+1} }^*  \big(\check{G}_{L_t}^{*^\top} F_{L_t}^*+ F_{L_t}^{*^\top}\check{G}_{L_t}^* \big) \Bigr ] \notag \\
& \qquad = \tr \Bigl [ \Sigma_{L_{t} }^*  \big(\check{G}_{L_t}^{*^\top} F_{L_t}^*+ F_{L_t}^{*^\top}\check{G}_{L_t}^* \big) \Bigr ] + \tr \Bigl [ \bigl( \Sigma_{L_{t+1} }^* -  \Sigma_{L_{t} }^* \bigr )  \big(\check{G}_{L_t}^{*^\top} F_{L_t}^*+ F_{L_t}^{*^\top}\check{G}_{L_t}^* \big) \Bigr ] \notag \\
 & \qquad \geq {2} \big\|\check{G}_{L_t}^*\big\|^2_F - 16\eta\big(\|\check A_L-BK(L)\|+1\big)\big(\cB_{\Omega}^K\|B\|+\|C\|\big)  \big\|\check{G}_{L_t}^*\big\|^2_F\big\|F_{L_t}^*\big\|_F,
  \$
  if, by Lemma \ref{lemma:perturb_Sigma_L}, $\|L_{t+1}-L_t\|=2\eta \|\check{G}_{L_t}^*\|\leq \cK_{\Omega}^{L}$ holds. By the bound on $\|\check{G}_{L_t}^*\|$ from Lemma \ref{lemma:bnd_gradient_mapping_norm}, we further require
  \#\label{equ:stepsz_cond_GD_1}
\eta \leq\frac{\cK_{\Omega}^{L}\zeta}{2\sqrt{q}\cdot\cC(K(L_t),L_t)}\sqrt{\frac{\mu}{\|W_{L_t}\|[\cC(K^*,L^*) - \cC(K(L_t),L_t)]}}.
\#
Also, similar to \eqref{equ:stepsz_cond_GN_2}, we also require
\small
\#\label{equ:stepsz_cond_GD_2}
\eta\leq \frac{1}{2}\cdot\bigg[ \frac{\sqrt{m}\cdot\cC(K(L_t),L_t)\|R^v\|_F}{\mu}  + {8}\big(\|\tilde A_{L_t}-BK(L_t)\|+1\big)\big(\cB_{\Omega}^K\|B\|+\|C\|\big) \big\|F_{L_t}^*\big\|_F\bigg]^{-1}.
\#
\normalsize
Thus, if $\eta$ satisfies  \eqref{equ:stepsz_cond_GD_1} and \eqref{equ:stepsz_cond_GD_2}, we have
\#\label{equ:cost_diff_bnd_trash_4}
\cC\bigl (K(L_{t+1}), L_{t+1} \bigr ) - \cC\bigl(K(L_{t}), L_{t}\bigr ) \geq{2}\eta \big\|\check{G}_{L_t}^*\big\|_F^2. 
\#
Summing up both sides of   \eqref{equ:cost_diff_bnd_trash_4} from $0$ to $t\geq 1$ yields the desired $\cO(1/{t})$ convergence rate for the sequence $\big\{t^{-1}\sum_{\tau=0}^{t-1}\big\|\check{G}_{L_\tau}^*\big\|^2\big\}_{t\geq 1}$, which thus completes the proof.  
\hfil\qed

\subsection{Proof of Theorem \ref{thm:full_grad_conv_local}}\label{subsec:proof_full_grad_conv_local}
Now we   analyze the \emph{locally linear}  convergence rates of the proposed  algorithms.

\vspace{10pt}
\noindent{\textbf{Projected Gauss-Newton Nested-Gradient:}}
\vspace{4pt}

First, by Assumption \ref{assum:invertibility} and  the definition of $\Omega$ in \eqref{equ:def_Omega},   $L^*$ is an  interior point of $\Omega$. 
Letting $L'=L^*$ and $L=L_t$ in  the upper bound of  Lemma \ref{lemma:Cost_Diff3}, we have
\small
\#\label{equ:local_rate_trash_1}
&\cC(K^*, L^* ) - \cC  \bigl (K(L_t),  L_t\bigr )  
\leq 2   \tr \bigl [ \Sigma_{\tilde K_t, L^*} (L^*-L_t)^\top F_{L_t}^* \bigr ] - \tr \bigl [ \Sigma_{\tilde K_t, L^*}  (L^* - L_t ) ^\top W_{L_t} (L^*- L_t)  \bigr ] \notag \\
& \quad
\leq  \tr \Bigl ( \Sigma_{\tilde K_t, L^*}  F_{L_t}^{*^\top} W_{L_t}^{-1}   F_{L_t}^*\Bigr )\leq \big\|\Sigma_{\tilde K_t, L^*}  \big\|\cdot\tr \Bigl ( F_{L_t}^{*^\top} W_{L_t}^{-1}   F_{L_t}^*\Bigr ),
\#
where $\tilde K_t$ is defined as follows  
\$
\tilde K_t= K(L_t)  -  (R^u+B^\top P_{L_t}^* B)^{-1} B^\top P_{L_t}^*C(L^*-L_t)=(R^u+B^\top P_{L_t}^* B)^{-1} B^\top P_{L_t}^*(A-CL^*),
\$
and the second inequality follows by completing squares. 
Note that the correlation matrix $\Sigma_{\tilde K_t, L^*}$ may be unbounded, since the control pair $(\tilde K_t,L^*)$, where $\tilde K_t$ is generated by $L_t$, may not be  stabilizing, unless $L_t$ is close to $L^*$, since we know by Assumption \ref{assum:invertibility} that $(K^*,L^*)$ is stabilizing.  
In fact, by the continuity of $P_L^*$ w.r.t. $L$ from  Lemma \ref{lemma:conti_PLs}, and the continuity of $\rho(A-B K-CL^*)$ w.r.t. $K$ \citep{tyrtyshnikov2012brief},  
there exists a ball centered at $L^*$ with radius $\omega_1>0$, denoted by $\cB(L^*,\omega_1)$, such that   $\cB(L^*,\omega_1)\subseteq \Omega$,  and for any $L_t\in \cB(L^*,\omega_1)$, $\rho(A-B\tilde K_t-CL^*)<1$, i.e., $(\tilde K_t,L^*)$ is stabilizing.  Thus by Lemma \ref{lemma:bound_P_L_Sigma_L}, \eqref{equ:local_rate_trash_1} can be bounded as
\footnotesize
\#\label{equ:local_rate_trash_2}
&\cC(K^*, L^* ) - \cC  \bigl (K(L_t),  L_t\bigr )  
\leq \frac{\cC(\tilde K_t,  L^*)}{\zeta}\cdot\tr \Bigl ( F_{L_t}^{*^\top} W_{L_t}^{-1}   F_{L_t}^*\Bigr )\leq \frac{\cC(K^*,  L^*)+\vartheta}{\zeta}\cdot\tr \Bigl ( F_{L_t}^{*^\top} W_{L_t}^{-1}   F_{L_t}^*\Bigr ),
\#
\normalsize
for some constant $\vartheta\geq 0$, where the last inequality is due to the continuity of $P_{K,L}$, and thus $\cC(K,L)=\tr(\Sigma_0 P_{K,L})$  where $\Sigma_0=\EE x_0x_0^\top$,  w.r.t. $K$, for given $L$,  from Lemma \ref{lemma:conti_Sigma}.

On the other hand,  
due to the continuity of $P_L^*$ from Lemma \ref{lemma:conti_PLs},   $\cC(K(L),L)=\tr(\Sigma_0 P_L^*)$  is continuous w.r.t. $L$ for any $L\in\Omega$.   Let $\bar{\cC}_{\Omega}=\sup_{L\in\partial \Omega}\cC(K(L),L)$, where $\partial \Omega$ denotes the boundary of the set $\Omega$. Then by continuity and the uniqueness of the maximizer $L^*$, there exists some $L_t\in\cB(L^*,\omega_1)$ around $L^*$ such that $\bar{\cC}_{\Omega}<\cC(K(L_t),L_t)<\cC(K^*,L^*)$, and the upper-level set $\cA_{{\Omega}}^{L_t}:=\{L\given \cC(K(L),L)\geq \cC(K(L_t),L_t)\}$ lies in $\cB(L^*,\omega_1)$ (thus also lies in $\Omega$).  Since $\cC(K^*,L^*)$ is the upper bound of $\cC(K(L),L)$, the upper-level set $\cA_{{\Omega}}^{L_t}$ is compact. Also, letting $\Omega^c:=\RR^{m_2\times d}/\{\Omega/\partial \Omega\}$, then we know that $\Omega^c=\{L\given \lambda_{\max}(L^\top R^v L-Q+\zeta\cdot \Ib)\geq 0\}$, which is closed since $\lambda_{\max}(\cdot)$ is a continuous function. Thus, by Lemma \ref{lemma:dist_compact}, there exists a distance $\omega_2>0$ between the disjoint sets $\cA_{{\Omega}}^{L_t}$ and $\Omega^c$. Thus,   for any $L_{t+1}$ such that $\|L_{t+1}-L_t\|\leq \omega_2$, $L_{t+1}$ belongs to $\Omega$, namely, the projection is ineffective, i.e.,  $\PP^{GN}(L_{t+1})=L_{t+1}$. Letting   
$
L_{t+1}=L_t+2\eta W_{L_t}^{-1}F_{L_t}^*
$.
In addition,  we have 
\$
\big\|F_L^*\big\|\leq \big\|F_L^*\Sigma_L^*\big\|\big\| \Sigma_L^{*^{-1}}\big\|\leq \frac{\sqrt{q}}{2}\cdot\big\|\nabla_L \tilde{\cC}(L)\big\|\cdot\frac{1}{\sigma_{\min}\big(\Sigma_L^{*}\big)}\leq  \frac{\sqrt{q}}{2\mu}\cdot\big\|\nabla_L \tilde{\cC}(L)\big\|,
\$
where the second inequality follows from \eqref{equ:bnd_grad_norm_trash_6} in the proof of Lemma \ref{lemma:bnd_gradient_mapping_norm}, and the fact that $\| \Sigma_L^{*^{-1}}\|=\sigma_{\min}^{-1}\big(\Sigma_L^{*}\big)$.
By Lemma \ref{lemma:bnd_gradient_mapping_norm}, we further have
\#\label{equ:eta_bnd_local_GN_trash_1}
\big\|F_L^*\big\|\leq  \frac{\sqrt{q}}{2\mu}\cdot\big\|\nabla_L \tilde{\cC}(L)\big\|\leq \frac{\sqrt{q}\cC(K(L),L)}{\mu\zeta}\sqrt{\frac{\|W_L\|[\cC(K^*,L^*) - \cC(K(L),L)]}{\mu}}.
\#
Also, notice that  
\#\label{equ:eta_bnd_local_GN_trash_2}
\big\|W_{L}^{-1}F_L^*\big\|\leq\big\|W_{L}^{-1}\big\|\big\|F_L^*\big\|=\frac{\big\|F_L^*\big\|}{\sigma_{\min}(W_L)}\leq \frac{\big\|F_L^*\big\|}{\nu}. 
\#
Thus,  by \eqref{equ:eta_bnd_local_GN_trash_1} and \eqref{equ:eta_bnd_local_GN_trash_2}, to ensure $\|L_{t+1}-L_t\|\leq \omega_2$ we require
\$
\eta\leq \frac{\omega_2\mu\nu\zeta}{2\sqrt{q}\cC(K(L),L)}\sqrt{\frac{\mu}{\|W_L\|[\cC(K^*,L^*) - \cC(K(L),L)]}},
\$
which can be satisfied by the following sufficient condition
\#\label{equ:eta_bnd_local_GN_1}
\eta\leq \frac{\omega_2\mu\nu\zeta}{2\sqrt{q}\cC(K^*,L^*)}\sqrt{\frac{\mu}{\|R^v\|\cC(K^*,L^*)}},
\#
where   $\|W_{L}\|\leq \|R^v\|$ by  Lemma \ref{lemma:bound_P_L_Sigma_L}. 
Note that the bound in \eqref{equ:eta_bnd_local_GN_1} is independent of $L$.

In sum, as long as $\eta$ satisfies   \eqref{equ:eta_bnd_local_GN_1}, we know that $
L_{t+1}=L_t+2\eta W_{L}^{-1}F_L^*
$ still lies in $\Omega$. 
Hence, by the lower bound in Lemma \ref{lemma:Cost_Diff3}, we have 
\#\label{eq:recall_gauss_newton_2_bound} 
\cC\bigl (K(L_{t+1}), L_{t+1} \bigr ) - \cC\bigl(K(L_{t}), L_{t}\bigr ) &\geq 4\eta  \tr\Big(\Sigma_{L_{t+1}}^*F_{L_t}^{*^\top}W_{L_t}^{-1}F_{L_t}^*\Big)-4\eta^2\tr\Big(\Sigma_{L_{t+1}}^*F_{L_t}^{*^\top}W_{L_t}^{-1}F_{L_t}^*\Big)\notag\\
&\geq 2\eta  \mu \cdot \tr \bigl (  F_{L_t}^{*^\top} W_{L_t}^{-1}F_{L_t}^* \bigr ),
\#
provided that the stepsize $\eta\leq 1/2$.

Combining    \eqref{equ:local_rate_trash_2} and \eqref{eq:recall_gauss_newton_2_bound} yields
\$
\cC\bigl (K(L_{t+1}), L_{t+1} \bigr ) - \cC\bigl(K(L_{t}), L_{t}\bigr ) \geq \frac{2\eta  \mu\zeta}{\cC(K^*,  L^*)+\vartheta} \cdot\big[\cC(K^*, L^* ) - \cC  \bigl (K(L_t),  L_t\bigr )\big],
\$
which further leads to 
\small
\#\label{equ:GN_local_rate_final_C}
\cC(K^*, L^* ) - \cC\bigl(K(L_{t+1}), L_{t+1}\bigr ) \leq \bigg(1-\frac{2\eta  \mu\zeta}{\cC(K^*,  L^*)+\vartheta}\bigg) \cdot \big[\cC(K^*, L^* ) - \cC  \bigl (K(L_t),  L_t\bigr )\big].
\#
\normalsize
That is, the sequence $\{\cC  \bigl (K(L_t),  L_t\bigr )\}_{t\geq 0}$ converges to $\cC(K^*, L^* )$ with linear rate, provided that
\$
\eta\leq \min\Bigg\{\frac{1}{2},\frac{\omega_2\mu\nu\zeta}{2\sqrt{q}\cC(K^*,L^*)}\sqrt{\frac{\mu}{\|R^v\|\cC(K^*,L^*)}},\frac{\cC(K^*,  L^*)+\vartheta}{4\mu\zeta}\Bigg\}.
\$
In addition,  by Lemma \ref{lemma:bnd_gradient_mapping_norm}
\$
\big\|\nabla_L \tilde{\cC}(L_t)\big\|^2 \leq \frac{4\cC(K^*,L^*)^2\|R^v\|}{\mu\zeta^2}\cdot{{[\cC(K^*,L^*) - \cC(K(L_t),L_t)]}},
\$
where we use that $\cC(K(L_t),L_t)\leq \cC(K^*,L^*)$ and $W_{L_t}\leq R^v$. 
Thus,  \eqref{equ:GN_local_rate_final_C} also implies the locally linear convergence rate of $\{\|\nabla_L \tilde{\cC}(L_t)\|^2\}_{t\geq 0}$, which 
 completes the proof.

%
%

\vspace{10pt}
\noindent{\textbf{Projected  Natural  Nested-Gradient:}}
\vspace{4pt} 

The proof for projected natural nested-gradient is similar to the one above. \eqref{equ:local_rate_trash_2} and \eqref{equ:eta_bnd_local_GN_trash_1} still hold. Now since the update becomes $L_{t+1}=L_t+2\eta F_{L_t}^*$, to ensure $\|L_{t+1}-L_t\|\leq \omega_2$ we require
\$
\eta\leq \frac{\omega_2\mu\zeta}{2\sqrt{q}\cC(K(L),L)}\sqrt{\frac{\mu}{\|W_L\|[\cC(K^*,L^*) - \cC(K(L),L)]}},
\$
which can be satisfied by
\#\label{equ:eta_bnd_local_NG_1}
\eta\leq \frac{\omega_2\mu\zeta}{2\sqrt{q}\cC(K^*,L^*)}\sqrt{\frac{\mu}{\|R^v\|\cC(K^*,L^*)}}. 
\#
Then,  by the lower bound in Lemma \ref{lemma:Cost_Diff3}, it follows that
\small
\#\label{eq:recall_natural_gradient_2_bound} 
&\cC\bigl (K(L_{t+1}), L_{t+1} \bigr ) - \cC\bigl(K(L_{t}), L_{t}\bigr ) \geq 4\eta  \tr\Big(\Sigma_{L_{t+1}}^*F_{L_t}^{*^\top}F_{L_t}^*\Big)-4\eta^2\tr\Big(\Sigma_{L_{t+1}}^*F_{L_t}^{*^\top}W_{L_t}F_{L_t}^*\Big)\notag\\
&\quad\geq 4\eta  \tr\Big(\Sigma_{L_{t+1}}^*F_{L_t}^{*^\top}F_{L_t}^*\Big)-4\eta^2\|R^v\|\tr\Big(\Sigma_{L_{t+1}}^*F_{L_t}^{*^\top}F_{L_t}^*\Big) \geq 2\eta  \mu \cdot \tr \bigl (  F_{L_t}^{*^\top} F_{L_t}^* \bigr ),
\#
\normalsize
where the second inequality is due to $\|W_{L_t}\|\leq \|R^v\|$ from Lemma \ref{lemma:bound_P_L_Sigma_L}, and the last inequality holds if 
$
\eta\leq 1/(2\|R^v\|)
$. 
Note that \eqref{equ:local_rate_trash_2} further gives
\small 
\#\label{equ:local_rate_trash_22}
\cC(K^*, L^* ) - \cC  \bigl (K(L_t),  L_t\bigr )\leq \frac{\cC(K^*,  L^*)+\vartheta}{\zeta\sigma_{\min}(W_{L_t})}\tr \Bigl ( F_{L_t}^{*^\top}   F_{L_t}^*\Bigr )\leq \frac{\cC(K^*,  L^*)+\vartheta}{\zeta\nu}\tr \Bigl ( F_{L_t}^{*^\top}   F_{L_t}^*\Bigr ),
\#
\normalsize
which combined with   \eqref{eq:recall_natural_gradient_2_bound} yields
\$
\cC\bigl (K(L_{t+1}), L_{t+1} \bigr ) - \cC\bigl(K(L_{t}), L_{t}\bigr ) \geq \frac{2\eta  \mu\zeta\nu}{\cC(K^*,  L^*)+\vartheta} \cdot\big[\cC(K^*, L^* ) - \cC  \bigl (K(L_t),  L_t\bigr )\big].
\$
Therefore, the linear convergence rate follows as
\small
\#\label{equ:NG_local_rate_final_C}
\cC\bigl (K^*, L^*  \bigr ) - \cC\bigl(K(L_{t+1}), L_{t+1}\bigr ) \leq \bigg(1-\frac{2\eta  \mu\zeta\nu}{\cC(K^*,  L^*)+\vartheta} \bigg)\cdot\big[\cC(K^*, L^* ) - \cC  \bigl (K(L_t),  L_t\bigr )\big],
\#
\normalsize
provided that the stepsize $\eta$ satisfies 
\$
\eta\leq \min\Bigg\{\frac{1}{2\|R^v\|},\frac{\omega_2\mu\zeta}{2\sqrt{q}\cC(K^*,L^*)}\sqrt{\frac{\mu}{\|R^v\|\cC(K^*,L^*)}},\frac{\cC(K^*,  L^*)+\vartheta}{4\mu\zeta\nu}\Bigg\}.
\$
Note that  \eqref{equ:NG_local_rate_final_C} also implies the locally linear  rate of $\{\|\nabla_L \tilde{\cC}(L_t)\|^2\}_{t\geq 0}$, completing the proof.

\vspace{10pt}
\noindent{\textbf{Projected   Nested-Gradient:}}
\vspace{4pt}   

By \eqref{equ:bnd_grad_norm_trash_5} and Lemma \ref{lemma:bnd_gradient_mapping_norm}, we have
\#\label{equ:GD_trash_1}
&\big\|F_L^*\Sigma_L^*\big\|\leq \big\|F_L^*\Sigma_L^*\big\|_F\leq \frac{\sqrt{q}}{2}\cdot\big\|\nabla_L \tilde{\cC}(L)\big\|\notag\\
&\quad\leq \frac{\sqrt{q}\cC(K(L),L)}{\zeta}\sqrt{\frac{\|W_L\|[\cC(K^*,L^*) - \cC(K(L),L)]}{\mu}}. 
\#
Since the update becomes $L_{t+1}=L_t+2\eta F_{L_t}^*\Sigma_{L_t}^*$, to ensure $\|L_{t+1}-L_t\|\leq \omega_2$, we require
\#\label{equ:eta_bnd_local_GD_1}
\eta\leq \frac{\omega_2\zeta}{2\sqrt{q}\cC(K^*,L^*)}\sqrt{\frac{\mu}{\|R^v\|\cC(K^*,L^*)}}. 
\#
Then, applying Lemma \ref{lemma:Cost_Diff3} we have
\small
\#\label{eq:recall_gradient_descent_2_bound} 
&\cC\bigl (K(L_{t+1}), L_{t+1} \bigr ) - \cC\bigl(K(L_{t}), L_{t}\bigr ) \geq 4\eta  \tr\Big(\Sigma_{L_{t+1}}^*\Sigma_{L_t}^*F_{L_t}^{*^\top}F_{L_t}^*\Big)-4\eta^2\tr\Big(\Sigma_{L_{t+1}}^*\Sigma_{L_t}^*F_{L_t}^{*^\top}W_{L_t}F_{L_t}^*\Sigma_{L_t}^*\Big)\notag\\
&\quad\geq \big(4\eta- 4\eta^2\|R^v\|\|\Sigma_{L_{t+1}}^*\|\big) \tr\Big(\Sigma_{L_{t}}^*\Sigma_{L_{t}}^*F_{L_t}^{*^\top}F_{L_t}^*\Big)-4\eta\big\|\Sigma_{L_{t+1}}^*-\Sigma_{L_{t}}^*\big\| \tr\Big(\Sigma_{L_{t}}^*F_{L_t}^{*^\top}F_{L_t}^*\Big)\notag\\
&\quad\geq \big(4\eta- 4\eta^2\|R^v\|\|\Sigma_{L_{t+1}}^*\|\big) \tr\Big(\Sigma_{L_{t}}^*\Sigma_{L_{t}}^*F_{L_t}^{*^\top}F_{L_t}^*\Big)-4\eta\frac{\big\|\Sigma_{L_{t+1}}^*-\Sigma_{L_{t}}^*\big\|}{\mu} \tr\Big(\Sigma_{L_{t}}^*F_{L_t}^{*^\top}F_{L_t}^*\Sigma_{L_{t}}^*\Big)\notag\\
&\quad= 4\eta\Bigg(1- \eta\|R^v\|\|\Sigma_{L_{t+1}}^*\|-\frac{\big\|\Sigma_{L_{t+1}}^*-\Sigma_{L_{t}}^*\big\|}{\mu}\Bigg) \big\|F_L^*\Sigma_L^*\big\|_F^2.
\#
\normalsize

By recalling Lemma \ref{lemma:perturb_Sigma_L} and the definition of $\cK_{\Omega}^L$ in \eqref{equ:def_K_Omega^L}, 
if $\eta$  makes  $\|L_{t+1}-L_t\|=2\eta\|F_{L_t}^*\Sigma_{L_t}^*\|\leq \cK_{\Omega}^L$, 
i.e., 
\#\label{equ:GD_trash_2.5}
\eta\leq \frac{\cK_{\Omega}^L\zeta}{2\sqrt{q}\cC(K^*,L^*)}\sqrt{\frac{\mu}{\|R^v\|\cC(K^*,L^*)}},
\#
then it follows  that
\$
&\frac{\big\|\Sigma_{L_{t+1}}^*-\Sigma_{L_{t}}^*\big\|}{\mu}\leq \frac{4\eta}{\mu}\big(\|\tilde A_{L_t}-BK(L_t)\|+1\big)\big(\cB_{\Omega}^K\|B\|+\|C\|\big)\cdot \|L_{t+1}-L_t\|\\
&\quad \leq \frac{4\eta\cK_{\Omega}^L}{\mu}\big(\|\tilde A_{L_t}-BK(L_t)\|+1\big)\big(\cB_{\Omega}^K\|B\|+\|C\|\big). 
\$
If we further require 
\#\label{equ:GD_trash_2}
\eta\leq \frac{\mu}{16\eta \cK_{\Omega}^L\big(\|\tilde A_{L_t}-BK(L_t)\|+1\big)\big(\cB_{\Omega}^K\|B\|+\|C\|\big)},
\#
then ${\big\|\Sigma_{L_{t+1}}^*-\Sigma_{L_{t}}^*\big\|}/{\mu}\leq 1/4$, which also implies that
\$
\big\|\Sigma_{L_{t+1}}^*\big\|\leq \big\|\Sigma_{L_{t}}^*\big\|+\big\|\Sigma_{L_{t+1}}^*-\Sigma_{L_{t}}^*\big\|\leq \frac{\cC(K(L_t),L_t)}{\zeta}+\frac{\mu}{4}\leq \frac{\cC(K(L_t),L_t)}{\zeta}+\frac{\big\|\Sigma_{L_{t+1}}^*\big\|}{4}. 
\$
Thus, we can bound $\big\|\Sigma_{L_{t+1}}^*\big\|\leq 4\cC(K(L_t),L_t)/(3\zeta)$.
Then if 
$\eta$ further satisfies
\#\label{equ:GD_trash_3}
\eta\leq \frac{3\zeta}{16\cC\bigl(K(L_{t}), L_{t}\bigr )\|R^v\|},
\#
we have 
$
1- \eta\|R^v\| \cdot \|\Sigma_{L_{t+1}}^*\|- \|\Sigma_{L_{t+1}}^*-\Sigma_{L_{t}}^*\| /\mu\geq 1-1/4-1/4= 1/ 2,
$
which establishes the bound in  \eqref{eq:recall_gradient_descent_2_bound} as
\#\label{equ:GD_trash_4}
\cC\bigl (K(L_{t+1}), L_{t+1} \bigr ) - \cC\bigl(K(L_{t}), L_{t}\bigr ) \geq 2\eta \big\|F_L^*\Sigma_L^*\big\|_F^2. 
\#

On the other hand, by \eqref{equ:local_rate_trash_22}, we also have 
\#\label{equ:GD_trash_5}
\cC(K^*, L^* ) - \cC  \bigl (K(L_t),  L_t\bigr )\leq \frac{\cC(K^*,  L^*)+\vartheta}{\zeta\nu\mu^2}\tr \Bigl ( \Sigma_{L_t}^*F_{L_t}^{*^\top}   F_{L_t}^*\Sigma_{L_t}^*\Bigr ). 
\#
Combining \eqref{equ:GD_trash_4} and \eqref{equ:GD_trash_5} yields 
\$
\cC\bigl (K^*, L^*  \bigr ) - \cC\bigl(K(L_{t+1}), L_{t+1}\bigr ) \leq \bigg(1-\frac{2\eta  \mu^2\zeta\nu}{\cC(K^*,  L^*)+\vartheta} \bigg)\cdot\big[\cC(K^*, L^* ) - \cC  \bigl (K(L_t),  L_t\bigr )\big],
\$
which gives the locally linear convergence rate if
\#\label{equ:GD_trash_6}
\eta\leq \frac{\cC(K^*,  L^*)+\vartheta}{4\mu\zeta\nu}.
\#
In sum, there exists some $\eta$ that satisfies \eqref{equ:eta_bnd_local_GD_1}, \eqref{equ:GD_trash_2.5}, \eqref{equ:GD_trash_2}, \eqref{equ:GD_trash_3}, and \eqref{equ:GD_trash_6}, to guarantee the locally linear convergence rates of both $\{\cC  \bigl (K(L_t),  L_t\bigr )\}_{t\geq 0}$ and $\{\|\nabla_L \tilde{\cC}(L_t)\|^2\}_{t\geq 0}$,  which concludes the proof.
 \qed


\section{Simulation Results}\label{sec:simulations}

In this section, we provide some  numerical results to show the superior convergence property of several PO    methods.    
We consider two settings referred to as  {\bf Case $1$} and {\bf Case $2$}, which are created based on the simulations in 
  \cite{al2007model}, with 
\$
  &A=\left[\begin{matrix}
  	0.956488 & 0.0816012 & -0.0005\\
  	0.0741349 & 0.94121 & -0.000708383
  	\\
  	0 & 0 & 0.132655
  \end{matrix}
  \right],\quad 
  B=\left[\begin{matrix}
-0.00550808 & -0.096 & 0.867345
\end{matrix}
\right]^\top,
  \$
  and $R^u=R^v=\Ib$, $
  \Sigma_0 =0.03\cdot \Ib. 
  $
  We choose $Q=\Ib$ and  $
C=\left[0.00951892,~0.0038373 ,~0.001\right]^\top$ for {\bf Case $1$}; while $Q=0.01\cdot\Ib$ and $
C=\left[0.00951892,~ 0.0038373,~ 0.2\right]^\top$ for {\bf Case $2$}.  
By direct calculation, we have  that 
\$
 {\textbf{Case $1$:}~} P^*=\left[\begin{matrix}
  	23.7658  & 16.8959   & 0.0937\\
  	16.8959  & 18.4645   & 0.1014\\
  	0.0937   & 0.1014   & 1.0107
  \end{matrix}
  \right],\quad {\textbf{Case $2$:}~} P^*=\left[\begin{matrix}
  	6.0173   & 5.6702 &  -0.0071\\
  	5.6702   & 5.4213 &  -0.0067\\
  	-0.0071  & -0.0067 &   0.0102
  \end{matrix}
  \right].
  \$
 Thus, one can easily check that $R^v-C^\top P^*C>0$ is satisfied for both {\bf Case $1$} and {\bf Case $2$}, i.e., Assumption \ref{assum:invertibility} i) holds.  
 However, for {\bf Case $1$}, $\lambda_{\min}(Q-(L^*)^\top R^v L^*)=0.8739>0$ satisfies Assumption \ref{assum:invertibility} ii); for {\bf Case $2$}, $\lambda_{\min}(Q-(L^*)^\top R^v L^*)=-0.0011<0$ fails to satisfy it.   
%
  
In both settings, 
we evaluate the convergence performance of not only our nested-gradient methods, but also two types of their variants, alternating-gradient (AG) and gradient-descent-ascent (GDA) methods. AG  methods are based on the nested-gradient methods, but at each outer-loop iteration, the  inner-loop gradient-based updates only perform a finite number of iterations, instead of converging to the exact solution $K(L_t)$ as nested-gradient methods, which follows the idea in \citep{nouiehed2019solving}. 
The GDA methods perform policy gradient descent for the minimizer and ascent  for the maximizer simultaneously. 
Detailed updates of these two types of methods   are deferred to \S\ref{sec:append_simu}.

\begin{figure*}[!thbp]
	\centering
	\begin{tabular}{ccc}
		\hskip-6pt\includegraphics[width=0.323\textwidth]{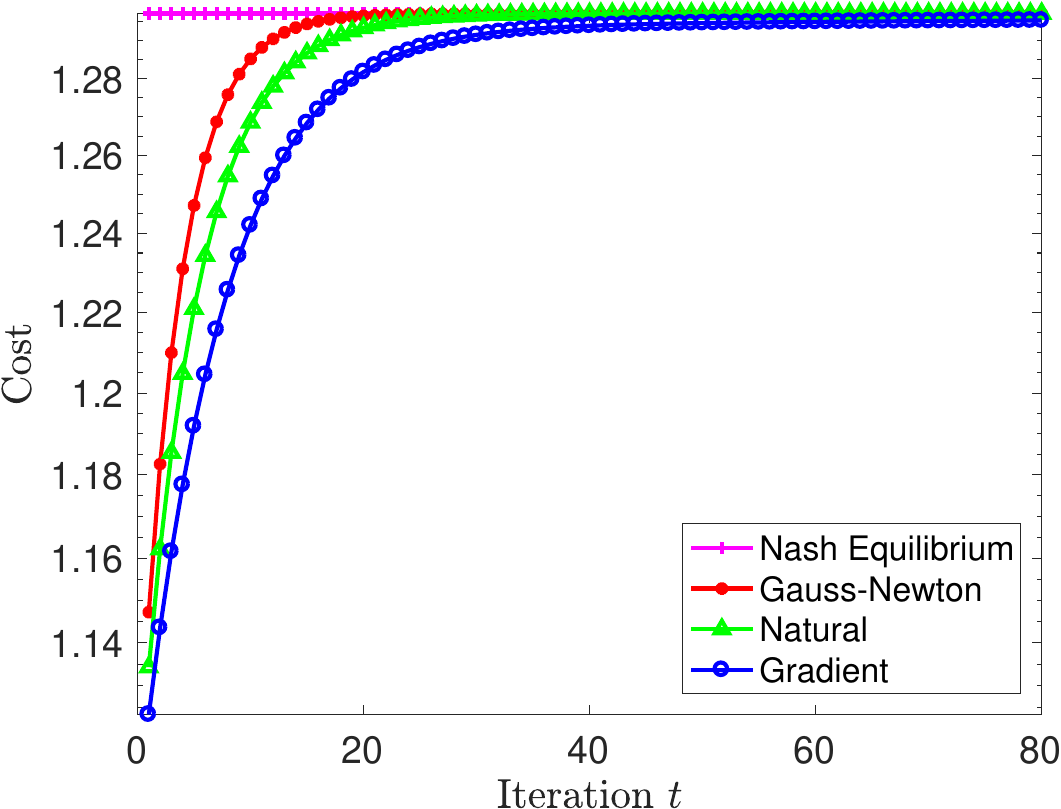}
		&
		\hskip-6pt\includegraphics[width=0.323\textwidth]{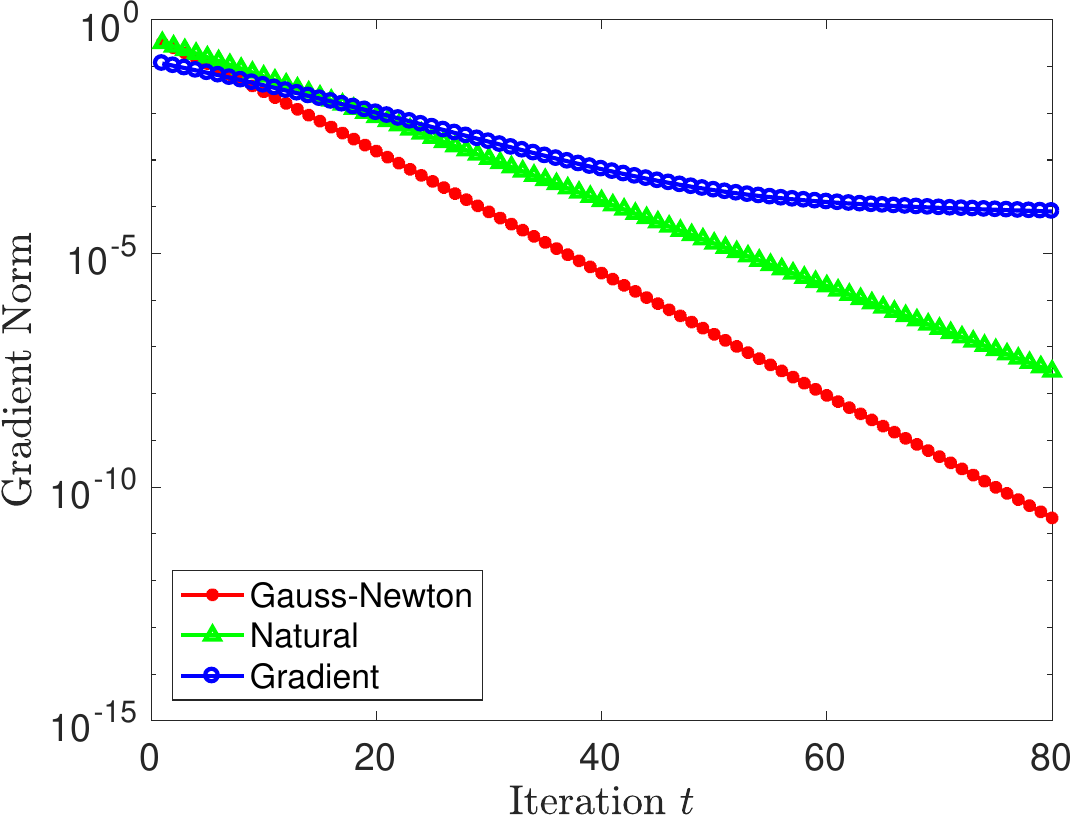}
		& 
		\hskip-6pt\includegraphics[width=0.323\textwidth]{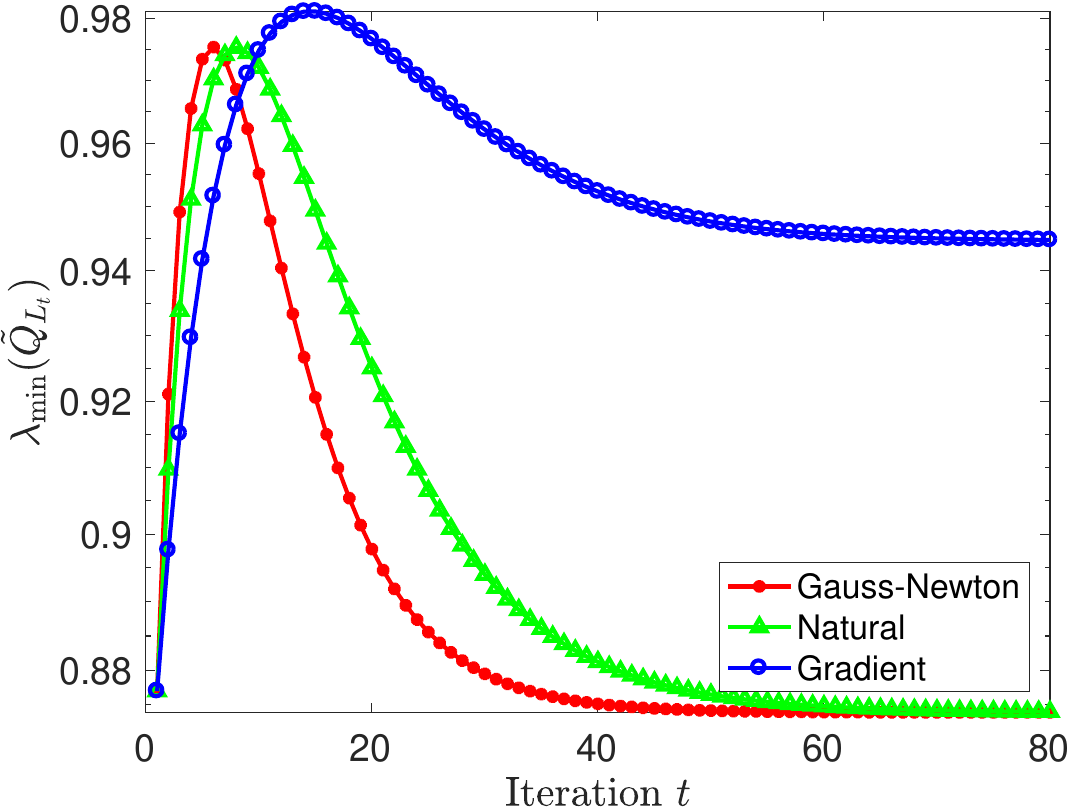}\\
		\hskip-8pt(a) $\cC(K(L),L)$ & \hskip 0pt(b) Grad. Mapp.   Norm Square  & \hskip2pt (c) $\lambda_{\min}(\tilde Q_L)$ 
	\end{tabular}
	\caption{Performance of the three projected NG methods  for {\bf Case $1$} where Assumption \ref{assum:invertibility} ii) is satisfied. (a) shows the monotone convergence of the expected cost $\cC(K(L),L)$ to the NE cost $\cC(K^*,L^*)$; (b) shows the convergence of the gradient mapping norm square; (c) shows the change of the smallest eigenvalue of $\tilde Q_L=Q-L^\top R^vL$.
}
	\label{fig:case_1_nested}
\end{figure*}

\begin{figure*}[!thbp]
	\centering
	\begin{tabular}{ccc}
		\hskip-6pt\includegraphics[width=0.323\textwidth]{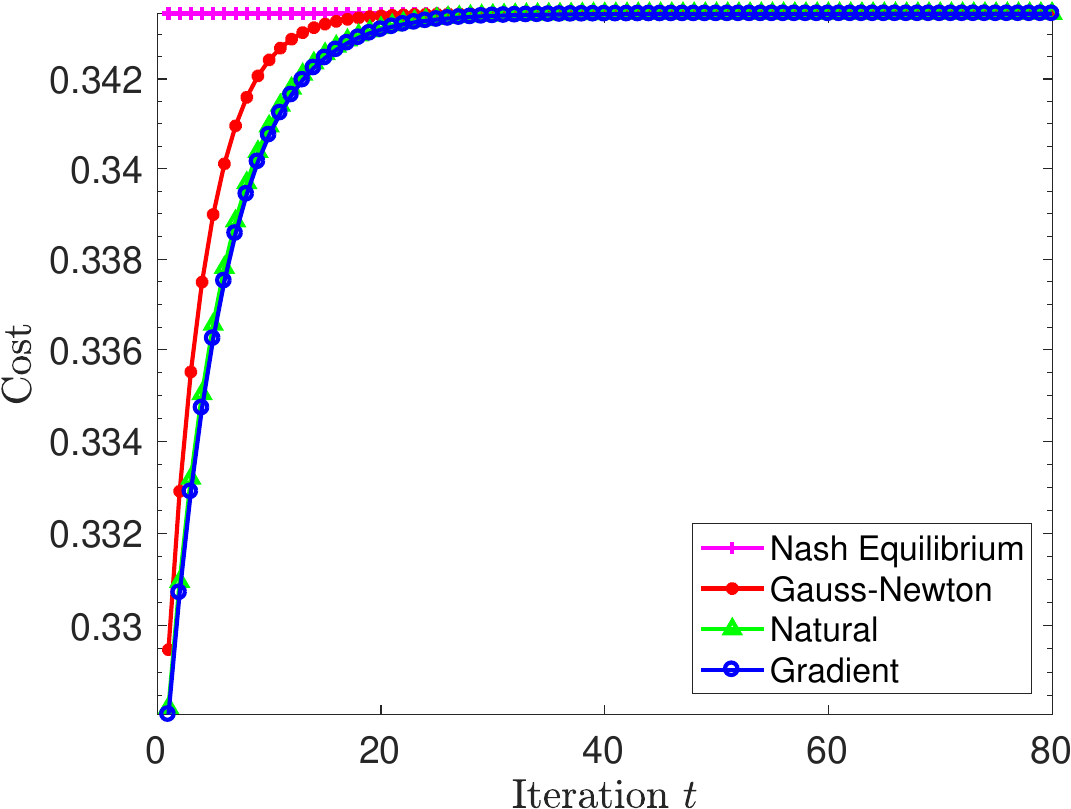}
		&
		\hskip-6pt\includegraphics[width=0.323\textwidth]{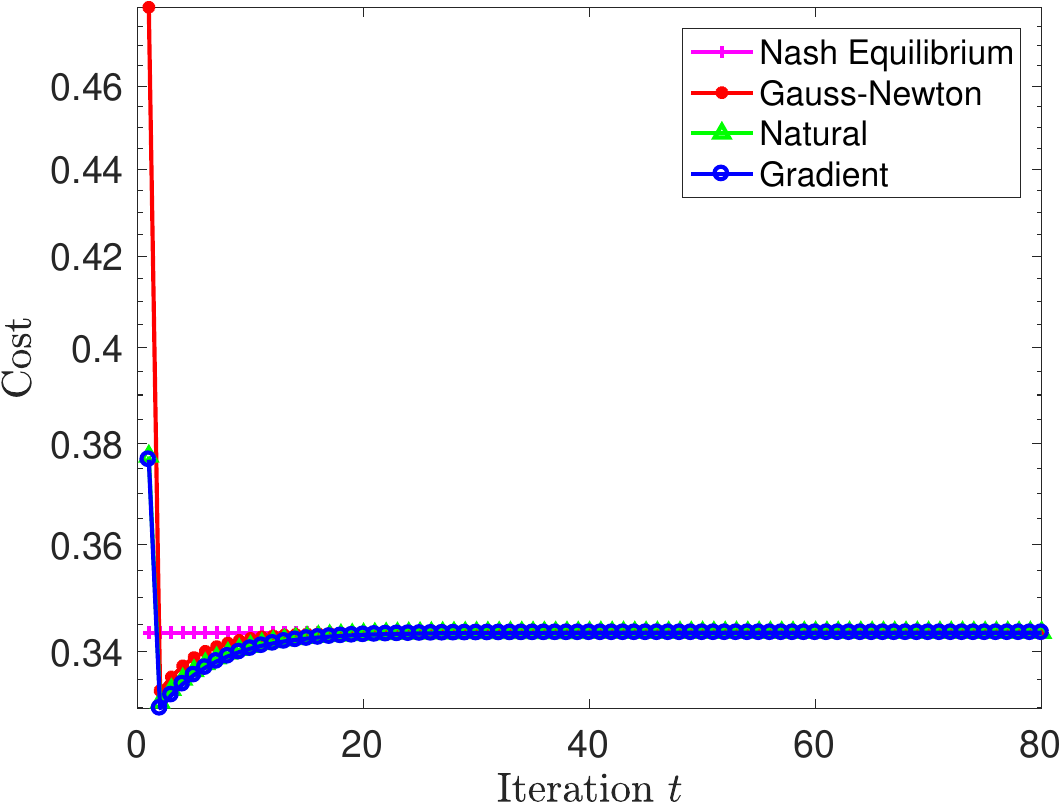}
		& 
		\hskip-6pt\includegraphics[width=0.323\textwidth]{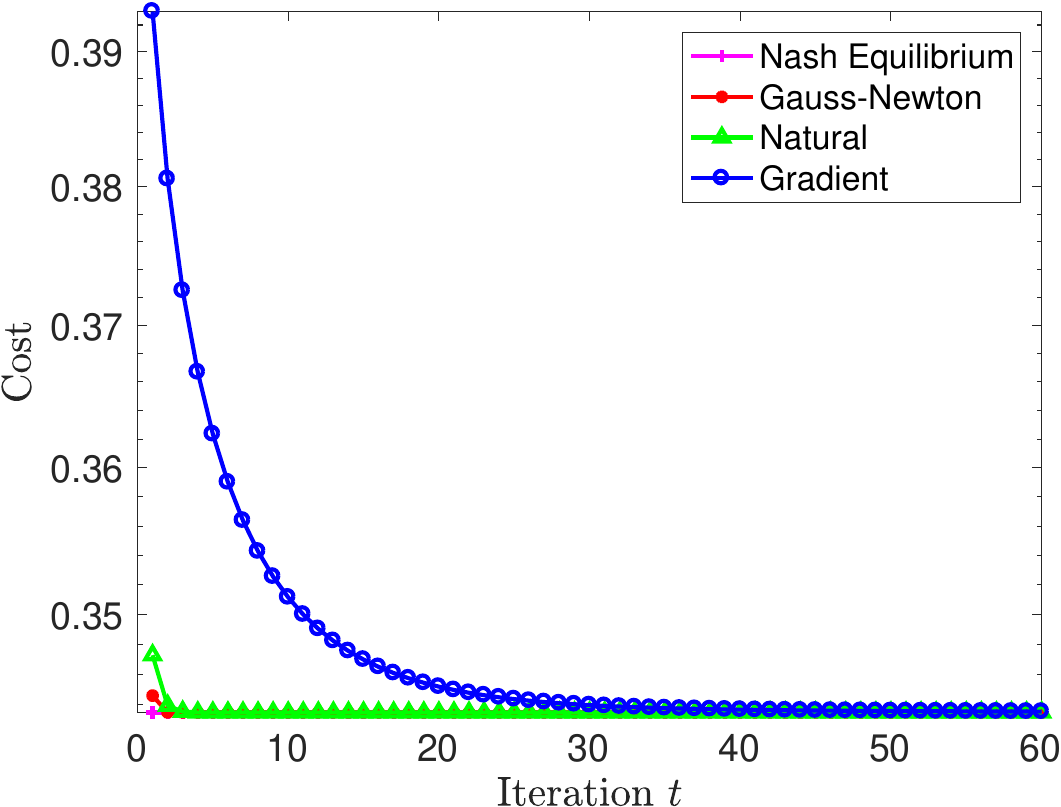}\\
		\hskip-8pt(a) Nested-Gradient  & \hskip 4pt(b) Alternating-Gradient  & \hskip4pt (c) Gradient-Descent-Ascent
	\end{tabular}
	\caption{Convergence of the cost for {\bf Case $2$} where Assumption \ref{assum:invertibility} ii) is not satisfied. (a), (b), and (c) show convergence of the  NG, AG,  and GDA methods, respectively. 
}
	\label{fig:case_2_cost_conv}
\end{figure*}


Figure \ref{fig:case_1_nested} shows that for {\bf Case $1$}, our nested-gradient methods  indeed enjoy the global convergence to the NE. 
The cost $\cC(K(L),L)$ monotonically increases to that at the NE, and the convergence rate of natural NG sits  between that of the other two NG methods. Also, we note that the convergence rates of gradient mapping square in (b) are linear, which are due to   (c) that $\lambda(\tilde Q_L)$ is always positive along the iteration, i.e., the projection is not effective. This way, our convergence results follow from the local convergence rates  in Theorem \ref{thm:full_grad_conv_local}, although the initialization is random (global). We have also shown in Figure \ref{fig:case_2_cost_conv} that even without Assumption \ref{assum:invertibility} ii), i.e., in {\bf Case $2$}, all the PO methods mentioned successfully converge to the NE, although the cost sequences do not converge monotonically. This motivates us to provide theory for other policy optimization methods, also for more general settings of LQ games. 
We note that no projection was imposed when implementing these algorithms in all our experiments, which  justifies that the projection here is just for the purpose of theoretical  analysis. In fact,  we have not found an instance of LQ games that makes the projections effective as the algorithms proceed. This motivates the  theoretical study of \emph{projection-free} algorithms in our future work.   
More simulation results can be found in \S\ref{sec:append_simu}.


\section{Concluding Remarks}
This paper has developed policy optimization methods, specifically,  projected nested-gradient methods, to solve for the Nash equilibria of zero-sum LQ games. In spite of the nonconvexity-nonconcavity of the problem, the gradient-based algorithms have been shown to converge to the NE with globally sublinear and locally linear rates. This work appears  to be  the first one showing that policy optimization methods can converge to the NE of a class of zero-sum Markov games, with finite-iteration  analyses.  Interesting simulation results have demonstrated the superior convergence property of our algorithms, even without the projection operator, and that of the gradient-descent-ascent algorithms with simultaneous updates of both  players, even when  Assumption \ref{assum:invertibility} ii) is relaxed. Based on both the theory and simulation, future directions include convergence analysis for the setting  under a relaxed version of 
 Assumption \ref{assum:invertibility}, and that for the projection-free versions of the algorithms, which we believe can be done by the techniques in our recent work \cite{zhang2018policymixed}. 
 Besides, developing policy optimization methods for general-sum LQ  games is  another interesting yet challenging future direction.


\section*{Acknowledgements}
K. Zhang and T. Ba\c{s}ar were supported in part by the US Army Research Laboratory (ARL) Cooperative Agreement W911NF-17-2-0196, and in part by the Office of Naval Research (ONR) MURI
Grant N00014-16-1-2710. Z. Yang was supported by Tencent PhD Fellowship.  The authors would like to thank Renyuan Xu and  Xiangyuan (Rocker) Zhang for the careful reading, and pointing out several typos. The authors also appreciate the  valuable  feedback from the  anonymous NeurIPS reviewers.


\bibliographystyle{ims}
\bibliography{Minimax_LQR} 

\appendix 

\newpage


%

\section{Pseudocode for Model-Free Nested-Gradient Algorithms} \label{sec:append_alg} 

In this section, we provide the pseudocode of the  model-free nested-gradient algorithms, 
 which are built upon the nested-gradient updates proposed in \S\ref{sec:algorithms}.

First, as essential elements in the nested-gradient, the gradient ${\nabla_K
  \cC(K,L)}$ and the correlation matrix $\Sigma_{K,L}$ for given $K,L$ can be estimated via samples. The estimates are obtained from the function \textbf{Est($K$;$L$)}, which is tabulated in  Algorithm \ref{alg:est_grad_corre}. The estimate of ${\nabla_K
  \cC(K,L)}$, denoted by $\widehat{\nabla_K
  \cC(K,L)}$,  is obtained via zeroth-order optimization algorithms, where the perturbation $U_i$ is drawn from a ball with fixed radius.

\begin{algorithm}[htpb]
	\caption{\textbf{Est($K$;$L$)}: Estimating  $\widehat{\nabla_K
  \cC(K,L)}$ and $\widehat \Sigma_{K,L}$ at $K$ for given   $L$} 
	\label{alg:est_grad_corre}
	\begin{algorithmic}[1]
		\STATE Input: $K,L$, number of trajectories $m$, rollout length $\cR$,
                smooth parameter $r$, dimension $\tilde d=m_1 d$
		\FOR{$i = 1, \cdots m$}
		\STATE Sample a policy $\widehat K_i = K+U_i$, where $U_i$ is
                drawn uniformly at random over matrices with $\|U_i\|_F=r$
		\STATE Simulate $(\widehat K_i,L)$ for $\cR$ steps starting
                from $x_0\sim \cD$, and collect the empirical estimates $\widehat \cC_i$ and $\widehat \Sigma_i$ as:
\[
\widehat \cC_i = \sum_{t=1}^\cR c_t \, , \quad \widehat \Sigma_i = \sum_{t=1}^\cR x_t x_t^\top
\]
where $c_t$ and $x_t$ are the costs and states following  this trajectory
		\ENDFOR
		\STATE Return the estimates:
\[
\widehat{\nabla_K
  \cC(K,L)} = \frac{1}{m} \sum_{i=1}^m \frac{\tilde d}{r^2} \widehat \cC_i U_i
\, , \quad
\widehat \Sigma_{K,L} = \frac{1}{m} \sum_{i=1}^m \widehat \Sigma_i .
\]
	\end{algorithmic}
\end{algorithm}

\begin{algorithm}[!h] 
	\caption{\textbf{Inner-NG($L$): Model-Free Updates  For Finding $K(L)$}} 
	\label{alg:model_free_inner_NPG}
	\begin{algorithmic}[1]
		\STATE Input: $L$, number of iterations $\cT$,  initialization $K_0$ such that $(K_0,L)$ is stable
		\FOR{$\tau = 0, \cdots, \cT-1$}
		\STATE Call \textbf{Est($K_\tau$;$L$)} to obtain the gradient  and the correlation matrix estimates:
		\$
		[\widehat{\nabla_K \cC(K_\tau,L)},\widehat \Sigma_{K_\tau,L}]=\textbf{Est}(K_\tau;L)
		\$
		\STATE Either PG update:
		$~~\quad\quad\quad\quad
		K_{\tau+1} = K_\tau-\alpha \widehat{\nabla_K \cC(K_\tau,L)}, 
		$\\
		or 
		natural PG update:
		$\quad\qquad
		K_{\tau+1} = K_\tau-\alpha \widehat{\nabla_K \cC(K_\tau,L)}\cdot \widehat \Sigma_{K_\tau,L}^{-1}. 
		$
		\ENDFOR
		\STATE Return the iterate  $K_{\cT}$
	\end{algorithmic}
\end{algorithm}

Given Algorithm \ref{alg:est_grad_corre}, we then summarize the model-free updates for solving the inner-loop minimization problem, i.e., finding $K(L)$ as a subroutine \textbf{Inner-NG($L$)} in Algorithm \ref{alg:model_free_inner_NPG}. Note that among updates \eqref{equ:inner_gd}-\eqref{equ:inner_gauss_newton},  only the policy gradient and the natural PG updates can be converted to model-free versions.

After a finite number $\cT$ of inner-loop updates in Algorithm \ref{alg:model_free_inner_NPG}, the approximate stationary point solution $K_{\cT}$ is then substituted into the outer-loop nested-gradient update, as shown in Algorithm \ref{alg:model_free_outer_NPG}. Note that the example uses projected NG update only, since the corresponding projection operator $ \PP^{GD}_{\Omega}[\cdot]$, see definition in \eqref{equ:def_proj_GD}, does not rely on the iterate $L_t$ at each iteration $t$. Then after a finite number $T$ of projected NG iterates, the algorithm  outputs the solution pair $\big(\widehat{K(L_T)},L_T\big)$.

\begin{algorithm}[!t]
	\caption{\textbf{Outer-NG: Model-Free  Nested-Gradient Algorithms}} 
	\label{alg:model_free_outer_NPG}
	\begin{algorithmic}[1]
		\STATE Input: $L_0$, number of trajectories $m$, number of iterations $T$, rollout length $\cR$, 
                 parameter $r$, dimension $\tilde d=m_2d$
		\FOR{$t = 0, \cdots, T-1$}
		\FOR{$i = 1, \cdots m$}
		\STATE Sample a policy $\widehat L_i = L_t+V_i$, where $V_i$ is
                drawn uniformly at random over matrices with $\|V_i\|_F=r$
        \STATE Call \textbf{Inner-NG}($\widehat L_i$) to obtain the estimate of $K(\widehat L_i)$:
        \[
        \widehat{K(\widehat L_i)} = \textbf{Inner-NG}(\widehat L_i)
        \]
		\STATE Simulate $(\widehat{K(\widehat L_i)},\widehat L_i)$ for $\cR$ steps starting
                from $x_0\sim \cD$, and collect the empirical estimates $\widehat C_i$ and $\widehat \Sigma_i$ as:
\[
\widehat \cC_i = \sum_{t=1}^\cR c_t \, , \quad \widehat \Sigma_i = \sum_{t=1}^\cR x_t x_t^\top
\] 
where $c_t$ and $x_t$ are the costs and states following  this trajectory 
		\ENDFOR
		\STATE Obtain the estimates of the gradient and the correlation  matrix:
\[
\widehat{\nabla_L
  \tilde \cC(L_t)} = \frac{1}{m} \sum_{i=1}^m \frac{\tilde d}{r^2} \widehat \cC_i V_i
\, , \quad
\widehat \Sigma_{\widehat{K(L_t)},L_t} = \frac{1}{m} \sum_{i=1}^m \widehat \Sigma_i 
\]
\STATE Either projected NG update:
		$\quad\quad\quad\quad
		L_{t+1} = \PP^{GD}_{\Omega}\Big[L_t + \eta \widehat{\nabla_L
  \tilde \cC(L_t)}\Big], 
		$\\
		or 
		projected natural NG  update:
		$~~\quad\quad
		L_{t+1} = \PP^{NG}_{\Omega}\Big[L_t + \eta \widehat{\nabla_L
  \tilde \cC(L_t)}\widehat \Sigma_{\widehat{K(L_t)},L_t}^{-1}\Big]. 
		$
		\ENDFOR
		\STATE Return the iterate $L_T$.	
		\end{algorithmic}
\end{algorithm}

\newpage
\clearpage


\section{Supplementary Proofs}\label{sec:append_aux_proof}
In this section, we provide  supplementary proofs for some results that are either claimed in the paper or used in the proofs before. 

\subsection{Proof of Lemma \ref{lemma:saddle_point_formal}}\label{subsec:proof_saddle_point_formal}
\begin{proof}
	Since $Q-(L^*)^\top R^v L^*>0$, we know that $Q>0$, which implies that $(A,Q^{1/2})$ is observable. Then by Theorem $3.7$ in \cite{bacsar2008h}, the existence of $P^*$ in Assumption \ref{assum:invertibility} shows that the value of the game \eqref{equ:def_state_game}   exists. Moreover, by Lemma $3.1$ in \cite{stoorvogel1994discrete}, such a stabilizing solution $P^*$, if exists, is unique. Hence,  by \cite[Theorem $3.7$]{bacsar2008h}, the value of the game \eqref{equ:def_state_game} is represented as $x_0^\top P^* x_0$, and given $\{u_t^*\}_{t\geq 0}$, $\{v_t^*\}_{t\geq 0}$ achieves the upper-value among any control sequence $\{v_t\}_{t\geq 0}$, i.e., for any $x_0\in\RR^d$,
	\#\label{equ:saddle_immed_1}
	\sum_{t=0}^\infty c_t(x_t,u^*_t,v_t)\leq \sum_{t=0}^\infty c_t(x_t,u^*_t,v^*_t). 
	\#
	Also, the closed-loop system $A-BK^*-CL^*$ is stable, i.e., the control pair $(K^*,L^*)$ is stabilizing. 
	
	On the other hand, by \cite{jacobson1977values}, given $\{v_t^*\}_{t\geq 0}$, $\{u_t^*\}_{t\geq 0}$ achieves the lower-value among any stabilizing control sequence $\{u_t\}_{t\geq 0}$; for the control sequence $\{u_t\}_{t\geq 0}$ that is not stabilizing, since $Q-(L^*)^\top R^v L^*>0$, the cost  goes to infinity. Hence,  
		\#\label{equ:saddle_immed_2}
	\sum_{t=0}^\infty c_t(x_t,u^*_t,v^*_t)\leq \sum_{t=0}^\infty c_t(x_t,u_t,v^*_t), 
	\#
	for any control sequence $\{u_t\}_{t\geq 0}$. Combining \eqref{equ:saddle_immed_1} and \eqref{equ:saddle_immed_2} yields that $(\{u_t^*\}_{t\geq 0},\{v_t^*\}_{t\geq 0})$  is a saddle-point of the game, i.e., the NE of the game \eqref{equ:def_state_game}, which completes the proof. 
\end{proof}

\subsection{Proof of Lemma \ref{lemma:nonconvex_nonconcave}}\label{subsec:proof_nonconvex_nonconcave}
\begin{proof}
Since by Assumption \ref{assum:invertibility}, $Q-(L^*)^\top R^v L^*>0$ and $\rho(A-BK^*-CL^*)<1$, it suffices to only consider those $L\in\underline{\Omega}$. For those $L$, $Q+K^\top R^u K-L^\top R^v L>0$, implying that the necessary and sufficient condition for the cost $\cC(K,L)$ to be finite is that the control pair $(K,L)$ is stabilizing.  Thus, we can use the counter-example used in the proof of Lemma $2$ in \cite{fazel2018global}, by  making $B=C=\Ib$, and letting $A-CL$ here equal to the $A$ matrix there, in order to show the nonconvexity of the feasible set of $K$ for these given $L$. Hence, $\min_{K}\cC(K,L)$ is a nonconvex minimization problem.  	Similarly, by letting    $A-BK$ and $C$ here equal to  $A$ and $B$ there, respectively, we know  that the set of stabilizing $L$ for these given $K$ is not convex. Therefore, $\max_{L\in\underline{\Omega}}\cC(K,L)$ is a nonconcave maximization problem, which completes the proof. 
\end{proof}

\subsection{Proof of Lemma \ref{lemma:policy_grad}}\label{subsec:proof_policy_grad}
\begin{proof}
	Let $\cC_{K,L}(x)=x^\top P_{K,L} x$. Then 
	\#\label{equ:lemma_1_1}
	\cC_{K,L}(x_0)=x_0^\top (Q+K^\top R^u K-L^\top R^v L )x_0+\cC_{K,L}((A-BK-CL)x_0).
	\#
	Note that $\cC_{K,L}((A-BK-CL)x_0)$ on the right-hand side  of \eqref{equ:lemma_1_1} has both its subscript and the argument related to $K$. Thus, we have
	\$
	\nabla_K \cC(K,L)=&~2R^uKx_0x_0^\top-2B^\top P_{K,L}(A-BK-CL)x_0x_0^\top+\nabla_K \cC_{K,L}(x_1)\biggiven_{x_1=(A-BK-CL)x_0}\\
	=&~2[(R^u+B^\top P_{K,L}B)K-B^\top P_{K,L}(A-CL)]\cdot \sum_{t=0}^\infty x_t x_t^\top,
	\$
	where the second equation follows from induction. Similarly, we can obtain the gradient w.r.t. $L$ as \eqref{equ:policy_grad_L_form}, which completes the proof.
\end{proof}

\subsection{Proof of Lemma \ref{lemma:stationary_point}}\label{subsec:proof_stationary_point}
\begin{proof}
	Since $\Sigma_{K,L}$ is full-rank, then  
	if $\nabla_K \cC(K,L)=\nabla_L \cC(K,L)=0$, we have
	\#
	K=&~(R^u+B^\top P_{K,L}B)^{-1}B^\top P_{K,L}(A-CL)\label{equ:K_Stationary_0}\\
	L=&~(-R^v+C^\top P_{K,L}C)^{-1}C^\top P_{K,L}(A-BK),\label{equ:L_Stationary_0}
	\#
	provided that the matrix inversion  $(-R^v+C^\top P_{K,L}C)^{-1}$ exists. By solving \eqref{equ:K_Stationary_0} and \eqref{equ:L_Stationary_0}, we obtain that 
	\#
	K=&~[R^u+B^\top  P_{K,L}B-B^\top  P_{K,L}C(-R^v+C^\top  P_{K,L}C)^{-1}C^\top  P_{K,L}B]^{-1}\notag\\
	&\quad\times [B^\top  P_{K,L}A-B^\top  P_{K,L}C(-R^v+C^\top  P_{K,L}C)^{-1}C^\top  P_{K,L}A],\label{equ:K_Stationary}\\
	L=&~[-R^v+C^\top P_{K,L}C-C^\top  P_{K,L}B(R^u+B^\top  P_{K,L}B)^{-1}B^\top  P_{K,L}C]^{-1}\notag\\
	&\quad\times [C^\top  P_{K,L}A-C^\top  P_{K,L}B(R^u+B^\top  P_{K,L}B)^{-1}B^\top  P_{K,L}A].\label{equ:L_Stationary}
	\#

	Now it suffices to compare $P_{K,L}$ and $P^*$. In fact, at the NE, $P^*$ should also satisfy the Lyapunov  equation, i.e., 
	\#\label{equ:P_star_Sol}
	P^*=Q+(K^*)^\top R^u K^*-(L^*)^\top R^v L^*+(A-BK^*-CL^*)^\top P^*(A-BK^*-CL^*),
	\#
	where $K^*$ and $L^*$ satisfy \eqref{equ:K_GARE} and \eqref{equ:L_GARE}. 
	Note that  the set of equations \eqref{equ:K_Stationary}, \eqref{equ:L_Stationary}, and \eqref{equ:P_KL_Sol} is essentially the same as the set of equations  \eqref{equ:K_GARE}, \eqref{equ:L_GARE}, and \eqref{equ:P_star_Sol}.
	Thus, the two sets of equations have identical solutions, which are all solutions to the GARE \eqref{equ:P_GARE} since the latter can be obtained by substituting \eqref{equ:K_GARE} and \eqref{equ:L_GARE} into  \eqref{equ:P_star_Sol}. 
	
	On the other hand, 
	under Assumption \ref{assum:invertibility}, the solution $P^*$ to the GARE \eqref{equ:P_GARE} is unique in the regime of positive definite matrices that generate a stabilizing  control pair $(K^*,L^*)$ following \eqref{equ:K_Stationary}-\eqref{equ:L_Stationary}  \citep{stoorvogel1994discrete,bacsar2008h}. Hence, such a stable control pair $(K,L)$ coincides with the NE pair $(K^*,L^*)$, which completes  the proof.  
\end{proof}

\subsection{Proof of Lemma \ref{lemma:convex_Omega}}\label{subsec:proof_convex_Omega}
\begin{proof}
	Recall the definition of ${\Omega}$ in \eqref{equ:def_Omega} for any  $0<\zeta<\sigma_{\min}(\tilde Q_{L^*})$. Then for any $L_1,L_2\in \Omega$ and $\lambda\in[0,1]$, we have 
	\$
	&[\lambda L_1+(1-\lambda)L_2]^\top R^v [\lambda L_1+(1-\lambda)L_2]\\
	&\quad=\lambda^2 L_1^\top R^v L_1+(1-\lambda)^2 L_2^\top R^v L_2+\lambda(1-\lambda)\big(L_1^\top R^v L_2+L_2^\top R^v L_1\big)\\
	&\quad \leq \lambda^2 L_1^\top R^v L_1+(1-\lambda)^2 L_2^\top R^v L_2+\lambda(1-\lambda)\big(L_1^\top R^v L_1+L_2^\top R^v L_2\big)\\
	&\quad \leq [\lambda^2+(1-\lambda)^2+2\lambda(1-\lambda)]\cdot(Q-\zeta \cdot\Ib)=Q-\zeta \cdot\Ib,
	\$
	where the first inequality follows  from
	 $
	(L_1-L_2)^\top R^v (L_1-L_2)\geq 0
	$ 
	for $R^v>0$, 
	and the second inequality is by  definition of $L_1$ and $L_2$. 
	This shows that
	 $\lambda L_1+(1-\lambda)L_2$ also lies in $\Omega$, which shows that the set $\Omega$ is convex. 
	 
	 Moreover, since the largest eigenvalue of $L^\top R^v L-Q+\zeta\cdot \Ib$, i.e., $\lambda_{\max}(L^\top R^v L-Q+\zeta\cdot \Ib)$ is a continuous function of $L$, and is lower bounded by $-\lambda_{\max}(Q)+\zeta$, 
	 the lower-level set  $\{L\given\lambda_{\max}(L^\top R^v L-Q+\zeta\cdot \Ib)\leq 0\}$ is closed and bounded, i.e., compact, which proves  that $\Omega$ is  compact, thus completing the proof.
\end{proof}


\subsection{Proof of Lemma \ref{lemma:proj_prop}}\label{subsec:proof_lemma:proj_prop}
\begin{proof}
	 We choose the proof for the projected natural NG operator $\PP_{\Omega}^{NG}$ as an example. The proofs  for the other two operators  are similar, and follow  directly. Recall that the  following definition of $\PP_{\Omega}^{NG}$  at iterate $L$ is
	 \$
\PP^{NG}_{\Omega}[\tilde L]=\argmin_{\check{L}\in\Omega} ~\tr\Big[\big(\check{L}-\tilde L\big)\Sigma_{L}^* \big(\check{L}-\tilde L\big)^\top\Big],
\$
whose optimality condition can be written as
\$
\tr\Big[\big(\PP_{\Omega}^{NG}[\tilde L]-\tilde L\big)\Sigma_L^*\big(\check{L}-\PP_{\Omega}^{NG}[\tilde L]\big)^\top\Big]\geq 0,\qquad\forall \check{L}\in\Omega.
\$
Letting $\tilde L=L_1$ and $\check{L}=\PP_{\Omega}^{NG}[L_2]$, we have
\#\label{equ:proj_proof_trash_1}
\tr\Big[\big(\PP_{\Omega}^{NG}[L_1]-L_1\big)\Sigma_L^*\big(\PP_{\Omega}^{NG}[L_2]-\PP_{\Omega}^{NG}[L_1]\big)^\top\Big]\geq 0.
\#
Also, letting $\tilde L=L_2$ and $\check{L}=\PP_{\Omega}^{NG}[L_1]$ yields
\#\label{equ:proj_proof_trash_2}
\tr\Big[\big(\PP_{\Omega}^{NG}[L_2]-L_2\big)\Sigma_L^*\big(\PP_{\Omega}^{NG}[L_1]-\PP_{\Omega}^{NG}[L_2]\big)^\top\Big]\geq 0.
\#
Combining \eqref{equ:proj_proof_trash_1} and \eqref{equ:proj_proof_trash_2} leads to 
\$
\tr\Big[\big(L_1-L_2-\PP_{\Omega}^{NG}[L_1]+\PP_{\Omega}^{NG}[L_2]\big)\Sigma_L^*\big(\PP_{\Omega}^{NG}[L_1]-\PP_{\Omega}^{NG}[L_2]\big)^\top\Big]\geq 0,
\$
namely,
\small
\$
\tr\Big[\big(L_1-L_2\big)\Sigma_L^*\big(\PP_{\Omega}^{NG}[L_1]-\PP_{\Omega}^{NG}[L_2]\big)^\top\Big]\geq \tr\Big[\big(\PP_{\Omega}^{NG}[L_1]-\PP_{\Omega}^{NG}[L_2]\big)\Sigma_L^*\big(\PP_{\Omega}^{NG}[L_1]-\PP_{\Omega}^{NG}[L_2]\big)^\top\Big],
\$
\normalsize
which completes the proof.
\end{proof}

\subsection{Proof of Lemma \ref{lemma:conti_PLs}}\label{subsec:lemma_conti_PLs}
\begin{proof}
	Note that the Riccati equation for the inner  problem (see \eqref{equ:Inner_Riccati})  can be rewritten as 
	\#\label{equ:Inner_Riccati_Bellman_way}
	P_L^*=\tilde Q_L+\tilde{A}_L^\top \big[I+P_L^*B(R^u)^{-1}B^\top\big]^{-1}P_L^*\tilde{A}_L. 
	\#
	We now use the implicit function theorem \citep{krantz2012implicit} to show that $P_L^*$ is a continuous function of $L$. 
	To this end, it suffices to show that $\vect(P_L^*)$ is continuous w.r.t. $\vect(L)$.

	By vectorizing both sides of \eqref{equ:Inner_Riccati_Bellman_way}, 
	we have
	\$
	\Psi\big(\vect(P_L^*),\vect(L)\big):&=\vect(\tilde Q_L)+\vect\big\{\tilde A_L^\top \big[I+P_L^*B(R^u)^{-1}B^\top\big]^{-1}P_L^*\tilde A_L\big\}\\
	&=\vect(\tilde Q_L)+\big(\tilde A_L^\top \otimes \tilde A_L^\top\big)\vect\big\{\big[I+P_L^*B(R^u)^{-1}B^\top\big]^{-1}P_L^*\big\}=\vect(P_L^*), 
	\$
 where we define a mapping $\Psi:\RR^{d^2}\times \RR^{m_2d}\to \RR^{d^2}$ as above, and also use the  relationship between  Kronecker product and matrix vectorization that for any matrices $A$, $B$, and $X$ with proper dimensions
	\$
	\vect(AXB) =\big(B^\top \otimes A\big) \vect(X). 
	\$ 
Then by the chain rule of  matrix differentials (see Theorem $9$ in \cite{magnus1985matrix}), we  know that 
	\#\label{equ:matrix_diff_Pt}
	&\frac{\partial\vect\big\{\big[I+P_L^*B(R^u)^{-1}B^\top\big]^{-1}P_L^*\big\}}{\partial \vect^\top (P_L^*)}\notag\\
	&\quad=(P_L^*\otimes I)\cdot\frac{\partial \vect\big\{\big[I+P_L^*B(R^u)^{-1}B^\top\big]^{-1}\big\}}{\partial \vect^\top (P_L^*)}+I\otimes \big[I+P_L^*B(R^u)^{-1}B^\top\big]^{-1},
	\#
	where $I$ denotes the identity matrices of compatible dimensions.

	Now we show  that 
   \#\label{equ:matrix_diff_trash_1}
	&\frac{\partial \vect\big\{\big[I+P_L^*B(R^u)^{-1}B^\top\big]^{-1}\big\}}{\partial \vect^\top (P_L^*)}\notag\\
	&\quad=\big\{-B(R^u)^{-1}B^\top \cdot\big[I+P_L^*B(R^u)^{-1}B^\top\big]^{-1}\big\}\otimes \big[I+P_L^*B(R^u)^{-1}B^\top\big]^{-1}.
	\#
	To this end, since both sides of \eqref{equ:matrix_diff_trash_1} are matrices with dimension $d^2\times d^2$, we can compare  the element at the $[(j-1)d+i]$-th row and the $[(l-1)d+k]$-th column on both sides  with $i,j,k,l\in[d]$. On the left-hand side, we first notice  that    
	\$
	&\frac{\partial \vect\big\{\big[I+P_L^*B(R^u)^{-1}B^\top\big]^{-1}\big\}}{\partial [P_L^*]_{k,l}}\\
	&\quad=-\big[I+P_L^*B(R^u)^{-1}B^\top\big]^{-1}\cdot \frac{\partial [P_L^*B(R^u)^{-1}B^\top]}{\partial [P_L^*]_{k,l}}\cdot \big[I+P_L^*B(R^u)^{-1}B^\top\big]^{-1},
	\$
	since  for some matrix function $F$, $(F^{-1})'=-F^{-1}F'F^{-1}$. Also, due to the fact that
	 \$
	 \frac{\partial [P_L^*B(R^u)^{-1}B^\top]}{\partial [P_L^*]_{k,l}}= \left[\begin{matrix}
		\rule[5pt]{60pt}{0.4pt}&0&\rule[5pt]{60pt}{0.4pt}\\
		\big[B(R^u)^{-1}B^\top\big]_{l,1}&\cdots & \big[B(R^u)^{-1}B^\top\big]_{l,m}  \\  
		\rule[2pt]{60pt}{0.4pt}&0&\rule[2pt]{60pt}{0.4pt}
	\end{matrix}\right]\leftarrow k\text{-th row},
	 \$
	 we have
	 \small
	\#\label{equ:matrix_diff_trash_2}
	&\Bigg[\frac{\partial \vect\big\{\big[I+P_L^*B(R^u)^{-1}B^\top\big]^{-1}\big\}}{\partial \vect^\top (P_L^*)}\Bigg]_{(j-1)d+i,(l-1)d+k}=\frac{\partial \Big[\big[I+P_L^*B(R^u)^{-1}B^\top\big]^{-1}\Big]_{i,j}}{\partial [P_L^*]_{k,l}}\notag\\
	&\quad = -\Big[\big[I+P_L^*B(R^u)^{-1}B^\top\big]^{-1}\Big]_{i,k}\cdot\sum_{q=1}^d \big[B(R^u)^{-1}B^\top\big]_{l,q} \cdot\Big[\big[I+P_L^*B(R^u)^{-1}B^\top\big]^{-1}\Big]_{q,j}.
	\#
	\normalsize
	On the right-hand side of \eqref{equ:matrix_diff_trash_1},  we have 
\#\label{equ:matrix_diff_trash_3}
	&\bigg[-\Big\{B(R^u)^{-1}B^\top \cdot\big[I+P_L^*B(R^u)^{-1}B^\top\big]^{-1}\Big\}\otimes \big[I+P_L^*B(R^u)^{-1}B^\top\big]^{-1}\bigg]_{(j-1)d+i,(l-1)d+k}\notag\\
	&\quad=\Big[-B(R^u)^{-1}B^\top \cdot\big[I+P_L^*B(R^u)^{-1}B^\top\big]^{-1}\Big]_{j,l}\cdot \Big[\big[I+P_L^*B(R^u)^{-1}B^\top\big]^{-1}\Big]_{i,k}\notag\\
	&\quad =\Big[-B(R^u)^{-1}B^\top \cdot\big[I+P_L^*B(R^u)^{-1}B^\top\big]^{-1}\Big]_{l,j}\cdot \Big[\big[I+P_L^*B(R^u)^{-1}B^\top\big]^{-1}\Big]_{i,k}, 
	\#
	where the first equation is due to the definition of Kronecker product, and the second one follows from  that  the matrix  
	\$
	&-B(R^u)^{-1}B^\top \cdot\big[I+P_L^*B(R^u)^{-1}B^\top\big]^{-1}\\
	&\quad=-B(R^u)^{-1}B^\top+B(R^u)^{-1}B^\top\big[(P^{*}_L)^{-1}+B(R^u)^{-1}B^\top\big]^{-1}B(R^u)^{-1}B^\top
	\$ is symmetric. 
	Therefore,   for any $(i,j,k,l)\in[d]$, \eqref{equ:matrix_diff_trash_2} and \eqref{equ:matrix_diff_trash_3} are identical, which verifies    \eqref{equ:matrix_diff_trash_1}. 
	
	Combining  \eqref{equ:matrix_diff_trash_1} with   \eqref{equ:matrix_diff_Pt}, we have
\#\label{equ:matrix_diff_trash_4}
		&\frac{\partial\vect\big\{\big[I+P_L^*B(R^u)^{-1}B^\top\big]^{-1}P_L^*\big\}}{\partial \vect^\top (P_L^*)}=I\otimes \big[I+P_L^*B(R^u)^{-1}B^\top\big]^{-1}\notag\\
		&\quad\qquad+(P_L^*\otimes I)\cdot\big\{-B(R^u)^{-1}B^\top \cdot\big[I+P_L^*B(R^u)^{-1}B^\top\big]^{-1}\big\}\otimes \big[I+P_L^*B(R^u)^{-1}B^\top\big]^{-1}\notag\\
		&\quad=\big\{I-P_L^*B(R^u)^{-1}B^\top \cdot\big[I+P_L^*B(R^u)^{-1}B^\top\big]^{-1}\big\}\otimes \big[I+P_L^*B(R^u)^{-1}B^\top\big]^{-1}\notag\\
		&\quad= \big[I+P_L^*B(R^u)^{-1}B^\top\big]^{-1}\otimes \big[I+P_L^*B(R^u)^{-1}B^\top\big]^{-1}
	\#
	where the second equation uses the fact that $(A\otimes B)(C\otimes D)=(AC)\otimes(BD)$ and $(A\otimes B)+(C\otimes B)=(A+C)\otimes B$, and the last one uses matrix inversion lemma. Hence,  we can write the partial derivative of $  \Psi\big(\vect(P_L^*),\vect(L)\big)-\vect(P_L^*)$ as
	\#\label{equ:partial_der}
	&\frac{\partial \big[\Psi\big(\vect(P_L^*),\vect(L)\big)-\vect(P_L^*)\big]}{\partial \vect^\top (P_L^*)}\notag\\
	&\quad=\Big\{\tilde A_L^\top\big[I+P_L^*B(R^u)^{-1}B^\top\big]^{-1}\Big\} \otimes \Big\{\tilde A_L^\top\big[I+P_L^*B(R^u)^{-1}B^\top\big]^{-1}\Big\}-I.  
	\# 
	By definition of $K(L)$ in \eqref{equ:KL_def}, we have  
			\$
		\tilde A_L^\top\big[I+P_L^*B(R^u)^{-1}B^\top\big]^{-1}=\Big\{\tilde A_L\big[I+B(R^u)^{-1}B^\top P_L^*\big]^{-1}\Big\}^{\top}=\big[\tilde A_L-BK(L)\big]^\top.
		\$
		By Lemma \ref{lemma:optim_of_KL}, we know that $L\in\underline{\Omega}$ implies that $(K(L),L)$ is stabilizing, i.e., $\tilde A_L^\top\big[I+P_L^*B(R^u)^{-1}B^\top\big]^{-1}$ has spectral radius less than $1$.
		Therefore, the partial derivative in \eqref{equ:partial_der}  is invertible, since  the eigenvalues of the first matrix  on the right-hand side of \eqref{equ:partial_der} are the products of any two eigenvalues of $\tilde A_L^\top\big[I+P_L^*B(R^u)^{-1}B^\top\big]^{-1}$, which have   absolute values smaller than $1$. In addition,  $\Psi\big(\vect(P_L^*),\vect(L)\big)-\vect(P_L^*)$ is continuous w.r.t. both $\vect(P_L^*)$ and $\vect(L)$. 
		Hence,    we obtain from  the implicit function theorem that $\vect(P_L^*)$ is a continuously differentiable function w.r.t. $\vect(L)$, at some open neighborhood around $L$, 
		so is $P_L^*$ w.r.t. $L$. Note that such an argument holds for any  $L\in\underline{\Omega}$,  
		 which  completes the proof.  
\end{proof}

\subsection{Proof of Lemma \ref{lemma:perturb_Sigma_L}}\label{subsec:lemma_proof_perturb_Sigma_L}
\begin{proof}
	The proof is composed of several important  lemmas,  following the same vein as the proof of Lemma $16$ in \cite{fazel2018global}. 	
	Note that Assumption \ref{assum:invertibility} is assumed to hold throughout the proof, and  will not be repeated  at each intermediate result. 
	
	We first provide the perturbation result for $P_L^*$ in the following proposition. The results are based on the perturbation theory of algebraic Riccati equations in \cite{konstantinov1993perturbation,sun1998perturbation}, since for given $L$, $P_{L}^*$ is the solution to the inner-loop Riccati equation \eqref{equ:Inner_Riccati} with cost matrix $\tilde Q_{L}=Q-L^\top R^v L$ and transition matrix $\tilde A_{L}=A-CL$.

\begin{proposition}[Perturbation of $P_L^*$]\label{prop:P_L_Perturb}
For any $L,L'\in \Omega$, where 
$\Omega$ is defined in \eqref{equ:def_Omega},  
	there exists some constant $\cB^L_{\Omega}>0$ such that 	if
	$
	\|L'-L\|\leq \cB^L_{\Omega},
	$
	it follows that 
	\$
	\|P_{L'}^*-P_L^*\|\leq \cB^P_{\Omega}\cdot\|L'-L\|,
	\$
	for some constant $\cB^P_{\Omega}>0$.
\end{proposition}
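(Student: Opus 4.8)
The plan is to upgrade the smoothness of $L \mapsto P_L^*$ already established in Lemma~\ref{lemma:conti_PLs} to a quantitative Lipschitz estimate, using compactness (and convexity) of $\Omega$. For $L \in \Omega$ the solution $P_L^*$ of the inner-loop Riccati equation \eqref{equ:Inner_Riccati} depends on $L$ only through the cost matrix $\tilde Q_L = Q - L^\top R^v L$ and the transition matrix $\tilde A_L = A - CL$; on the bounded set $\Omega$ both are locally Lipschitz in $L$, with $\tilde A_{L'} - \tilde A_L = -C(L'-L)$ and $\|\tilde Q_{L'} - \tilde Q_L\| \le (\|L\| + \|L'\|)\,\|R^v\|\,\|L'-L\|$. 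Since $\Omega \subset \underline{\Omega}$, Lemma~\ref{lemma:optim_of_KL} guarantees that $P_L^* > 0$ and that the closed-loop matrix $\tilde A_L - BK(L)$ is Schur stable for every $L \in \Omega$, so the Lyapunov operator $X \mapsto X - (\tilde A_L - BK(L))^\top X (\tilde A_L - BK(L))$ is invertible and the algebraic Riccati equation is non-degenerate at each point of $\Omega$.

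First I would invoke the perturbation theory of algebraic Riccati equations in \cite{konstantinov1993perturbation,sun1998perturbation}: at each fixed $L \in \Omega$, non-degeneracy yields a threshold $\epsilon(L) > 0$ and a constant $c(L) < \infty$ such that whenever the coefficient perturbations $\|\tilde Q_{L'} - \tilde Q_L\|$ and $\|\tilde A_{L'} - \tilde A_L\|$ are both at most $\epsilon(L)$, one has $\|P_{L'}^* - P_L^*\| \le c(L)\big(\|\tilde Q_{L'} - \tilde Q_L\| + \|\tilde A_{L'} - \tilde A_L\|\big)$. Combining this with the Lipschitz bounds on $\tilde Q_L$ and $\tilde A_L$ above and with $\sup_{L\in\Omega}\|L\| < \infty$ (compactness of $\Omega$), one obtains, for $\|L'-L\|$ below some threshold $\tilde\epsilon(L) > 0$, an estimate $\|P_{L'}^* - P_L^*\| \le \tilde c(L)\,\|L'-L\|$.

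Next I would make the constants uniform over $\Omega$. The quantities $c(L)$ and $\epsilon(L)$ produced by the ARE perturbation theorems are built from $\|P_L^*\|$, $\|\tilde A_L\|$, $\|\tilde Q_L\|$, $\|(R^u)^{-1}\|$, and the norm of the inverse of the closed-loop Lyapunov operator; each of these is a continuous function of $L$ on $\Omega$, using continuity of $P_L^*$ (Lemma~\ref{lemma:conti_PLs}), continuity of $K(L)$ via \eqref{equ:KL_def}, and the fact that $\rho(\tilde A_L - BK(L)) < 1$ throughout $\Omega$. Hence $L \mapsto \tilde c(L)$ is continuous and $L \mapsto \tilde\epsilon(L)$ is bounded below by a positive continuous function on the compact set $\Omega$; setting $\cB^P_{\Omega} := \sup_{L\in\Omega}\tilde c(L) < \infty$ and $\cB^L_{\Omega} := \inf_{L\in\Omega}\tilde\epsilon(L) > 0$ then gives exactly the claimed bound. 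I would also note that this ARE-perturbation route can be short-circuited: Lemma~\ref{lemma:conti_PLs} already shows that $P_L^*$ is continuously differentiable on $\underline{\Omega}$, so its Jacobian $\partial\vect(P_L^*)/\partial\vect^\top(L)$ is bounded on the compact set $\Omega$; since $\Omega$ is convex (Lemma~\ref{lemma:convex_Omega}), the segment $[L,L']$ lies in $\Omega$, and the mean value inequality yields $\|P_{L'}^* - P_L^*\| \le \big(\sup_{L\in\Omega}\big\|\partial\vect(P_L^*)/\partial\vect^\top(L)\big\|\big)\,\|L'-L\|$ for all $L,L'\in\Omega$, which is even stronger than the stated local estimate.

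The main obstacle is precisely this uniformization step: the off-the-shelf ARE perturbation bounds are pointwise, with constants that could in principle degenerate as $L$ approaches $\partial\Omega$ (where $\tilde Q_L$ might lose positivity or the closed loop might lose stability). The point that rescues the argument is that $\Omega$ is defined by the strict constraint $Q - L^\top R^v L \geq \zeta\cdot\Ib$ and is compact, so on $\Omega$ all relevant quantities stay bounded and $\tilde A_L - BK(L)$ stays uniformly Schur stable; verifying continuity of each ingredient so that a compactness argument applies is the only delicate bookkeeping, and everything else is routine norm estimation.
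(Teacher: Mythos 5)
Your main argument is essentially the paper's: the paper also proves this proposition by invoking Theorem $4.1$ of \cite{sun1998perturbation} for the inner-loop Riccati equation with data $(\tilde A_L,\tilde Q_L,B(R^u)^{-1}B^\top)$, translating the perturbation of $L$ into perturbations of $\tilde A_L$ and $\tilde Q_L$ exactly as you do, and then making the pointwise threshold and constant uniform over the compact set $\Omega$ via continuity of $P_L^*$ (Lemma \ref{lemma:conti_PLs}) and the uniform Schur stability of $\tilde A_L-BK(L)$. The difference is one of resolution: the paper actually verifies the hypotheses (4.40)--(4.41) of Sun's theorem, which forces the long bookkeeping with the quantities $f,g,\phi,\ell,p,\psi,\theta$ and produces explicit expressions for $\cB^L_{\Omega}$ and $\cB^P_{\Omega}$ in terms of problem data; your write-up treats that step as a black box, which is acceptable for the statement as phrased but does not by itself certify that the threshold $\epsilon(L)$ and constant $c(L)$ have the monotone/continuous form needed for the $\inf/\sup$ over $\Omega$ to be positive and finite --- the paper's explicit formulas are what make that compactness step transparent. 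Your mean-value shortcut is a genuinely different and cleaner route: since the proof of Lemma \ref{lemma:conti_PLs} in fact establishes continuous differentiability of $\vect(P_L^*)$ on the open set $\underline{\Omega}$ via the implicit function theorem, and $\Omega$ is convex and compact (Lemma \ref{lemma:convex_Omega}) with $\Omega\subset\underline{\Omega}$, the Jacobian is bounded on $\Omega$ and the segment $[L,L']$ stays in $\Omega$, yielding a global Lipschitz bound on all of $\Omega$ with no smallness condition on $\|L'-L\|$ --- strictly stronger than the stated local estimate. What it gives up is explicitness: the Lipschitz constant is only known to exist (as a supremum of a continuous Jacobian norm), whereas the Sun-based constants are computable from $A,B,C,Q,R^u,R^v,\zeta$; it also leans on the identification of the IFT branch with the stabilizing solution, which is exactly the content of Lemma \ref{lemma:conti_PLs} (whose statement asserts only continuity, though its proof delivers $C^1$), so you should cite that identification explicitly rather than the bare statement.
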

\begin{proof}
	The proof is built upon the result of Theorem $4.1$ in \cite{sun1998perturbation}. 
First, since both $L,L'\in\Omega$, we have $\tilde Q_{L'},\tilde Q_{L}\geq 0$, and also $B(R^u)^{-1}B^\top\geq 0$ for both $L$ and $L'$. This validates the applicability  of  \cite[Theorem $4.1$]{sun1998perturbation}. 
Also note that by Lemma \ref{lemma:optim_of_KL}, both $P_L^*$ and $P_{L'}^*$ exist and are positive definite.  
First recalling the definition of $K(L)$ in \eqref{equ:KL_def}, we have the following relationship:
\$
\tilde A_L-BK(L)=\tilde A_L-B(R^u+B^\top P_{L}^*B)^{-1}B^\top P_{L}^*\tilde A_L=[\Ib+B(R^u)^{-1}B^\top P_{L}^*]^{-1}\tilde A_L,
\$
where the second equation uses the matrix inversion lemma. 

To simplify the notation, we let $\Delta=L'-L$ and also define the following quantities\footnote{Note that we change some of the notations used in \cite[Theorem $4.1$]{sun1998perturbation} in order to:  i) avoid the conflict with our notations; ii) simplify the bound for better readability.}:
\$
&\delta=\big\|\tilde A_{L'}-\tilde A_{L}\big\|=\|C\Delta\|,\quad f=\big\|[\Ib+B(R^u)^{-1}B^\top P_{L}^*]^{-1}\big\|,\quad g=\|B(R^u)^{-1}B^\top\|\\
&\phi=\big\|[\Ib+B(R^u)^{-1}B^\top P_{L}^*]^{-1}\tilde A_L\big\|,\quad\gamma =f\delta(2\phi+f\delta),\quad \psi=\big\|P_{L}^*\cdot[\Ib+B(R^u)^{-1}B^\top P_{L}^*]^{-1}\big\|\\
&T_L=\Ib-\big[\tilde A_L-BK(L)\big]^\top\otimes \big[\tilde A_L-BK(L)\big]^\top,~ \ell=\|T_L^{-1}\|^{-1},~H=P_L^*[\Ib+B(R^u)^{-1}B^\top P_{L}^*]^{-1}\tilde A_L\\
&p=\big\|T_L^{-1}\big[\Ib\otimes H^\top +(H^\top \otimes\Ib)\Pi\big]\big\|,~~\varepsilon =\frac{1}{\ell}\big\|\Delta R^v L+L^\top R^v \Delta+\Delta^\top R^v \Delta\big\|+\Big(p+\frac{\psi \delta}{\ell}\Big)\|C\Delta\|\\
&\qquad\qquad\qquad\qquad\qquad\alpha=f(\|\tilde A_L\|+\|C\Delta\|),\qquad \theta=\frac{\ell}{\phi+\sqrt{\phi^2+\ell}},
\$
where $\Pi$ is the vec-permutation matrix \citep[pp. 32-34]{graham2018kronecker}. 
Also, from Lemma \ref{lemma:optim_of_KL},  we know that $\tilde A_L-BK(L)$ is stabilizing, which thus implies that $\ell$ is finite and 
\#\label{equ:l_greater_0}
\ell=1/\|T_L^{-1}\|=\sigma_{\min}(T_L)>0.
\#  
We note that since $\Omega$ is a compact set of $L$, and $\sigma_{\min}(T_L)$ is a continuous function of $L$, $\ell$ is uniformly lower bounded above zero for any $L\in\Omega$. 

Since the term $\|\Delta G\|$ in \cite[Theorem $4.1$]{sun1998perturbation} is zero here, the first condition in (4.40) of \cite{sun1998perturbation} is trivially  satisfied. 
For the other  two conditions in (4.40) there, we require the following sufficient conditions  to hold 
\#\label{equ:veri_440_2}
1-fg\xi_*\geq0, \quad \frac{f\delta+\phi fg\xi_*}{1-fg\xi_*}\leq \theta,
\#
where $\xi_*$ is d efined as 
$
\xi_*={(2\ell\varepsilon)}\cdot{(\ell/2+\ell fg\varepsilon)^{-1}}.
$
Note that if we additionally require 
\small
\#\label{equ:prop_pert_P_cond_1}
\gamma=f\delta(2\phi+f\delta)\leq f\|C\Delta\|(2\phi+2\ell+f\|C\Delta\|)\leq 2f(\phi+\ell)\|C\|\|\Delta\|+f^2\|C\|^2\|\Delta\|^2\leq \ell/2,
\#
\normalsize
then the definition of $\xi_*$ here is strictly larger than that in \cite{sun1998perturbation}. 
Thus, if such an $\xi_*$ satisfies \eqref{equ:veri_440_2},  
then the  other two conditions in \eqref{equ:veri_440_2} can be satisfied, too. 
Moreover, if we also let  
\#\label{equ:veri_441}
\varepsilon &=\frac{1}{\ell}\big\|\Delta R^v L+L^\top R^v \Delta+\Delta^\top R^v \Delta\big\|+\Big(p+\frac{\psi \delta}{\ell}\Big)\|C\Delta\|\notag\\
&\leq \frac{1}{\ell}\big(2\|R^v L\|\|\Delta\|+\|R^v\|\|\Delta\|^2\big)
+\Big(p+\frac{\psi \delta}{\ell}\Big)\|C\Delta\|\leq \frac{(\ell/2)^2}{2\ell fg(\ell+2{\alpha})}=\frac{\ell}{8fg(\ell+2{\alpha})}
\#
hold, then since $\gamma\leq \ell/2$ from \eqref{equ:prop_pert_P_cond_1},  the right-hand side of \eqref{equ:veri_441} satisfies
\$
\frac{(\ell/2)^2}{2\ell fg(\ell+2{\alpha})}\leq \frac{(\ell-\gamma)^2}{\ell fg(\ell-\gamma+2\alpha+\sqrt{(\ell-\gamma+2\alpha)^2-(\ell-\gamma)^2}}.
\$
This implies that the condition in (4.41) in  \cite{sun1998perturbation} holds. Then, we obtain from Theorem $4.1$ in \cite{sun1998perturbation} that 
\#\label{equ:P_pert_trash_1}
&\big\|P_{L'}^*-P_{ L}^*\big\|\leq \xi_*=\frac{2\ell\varepsilon}{\ell/2+\ell fg\varepsilon}\leq 4\varepsilon=\frac{4}{\ell}\big\|\Delta R^v L+L^\top R^v \Delta+\Delta^\top R^v \Delta\big\|+4\Big(p+\frac{\psi \delta}{\ell}\Big)\|C\Delta\|\notag\\
&\quad \leq \frac{8}{\ell}\| R^v L\|\|\Delta\|+\frac{4}{\ell}\|R^v\|\|\Delta\|^2+4p\|C\|\|\Delta\|+4\frac{\psi}{\ell}\|C\|^2\|\Delta\|^2. 
\#

Now we discuss   sufficient  conditions of \eqref{equ:veri_440_2}, \eqref{equ:prop_pert_P_cond_1}, and \eqref{equ:veri_441}, to ensure a perturbation bound on $P_L^*$ as desired from \eqref{equ:P_pert_trash_1}. 
The two conditions in \eqref{equ:veri_440_2} can be written as 
\#\label{equ:P_pert_trash_2}
fg\frac{2\varepsilon}{1/2+fg\varepsilon}\leq 1\Longrightarrow fg\varepsilon\leq 1/2,\quad f\delta +\big(\phi+\theta\big)fg\xi_*\leq \theta,
\#
where one sufficient condition for the second one to hold is 
\#\label{equ:P_pert_trash_3}
f\delta +4(\phi+\theta)fg\varepsilon\leq\theta,
\#
since $\xi_*\leq 2\varepsilon/(1/2)=4\varepsilon$.
Note that since $f\delta\geq 0$ and $fg\varepsilon\geq 0$, \eqref{equ:P_pert_trash_3} holds implies that $fg\varepsilon\leq 1/2$. Hence we only need a  sufficient condition for  \eqref{equ:P_pert_trash_3} to hold, which can be the following one
\small
\#\label{equ:P_pert_final_trash_1}
f\|C\|\|\Delta\| +4(\phi+\theta)fg\bigg(\frac{2}{\ell}\| R^v L\|\|\Delta\|+\frac{1}{\ell}\|R^v\|\|\Delta\|^2+p\|C\|\|\Delta\|+\frac{\psi}{\ell}\|C\|^2\|\Delta\|^2\bigg)\leq\theta. 
\#
\normalsize
\eqref{equ:P_pert_final_trash_1} can be satisfied if the following condition  on $\|\Delta\|$ holds:
\small
\#\label{equ:P_pert_final_trash_1_final}
\|\Delta\| \leq \min\bigg\{\frac{\|C\|+4(\phi+\theta)g(2\|R^vL\|/\ell+p\|C\|)}{4(\phi+\theta)g(\|R^v\|/\ell+\psi\|C\|^2/\ell)},\frac{\theta}{2f\|C\|+8f(\phi+\theta)g(2\|R^vL\|/\ell+p\|C\|)}\bigg\}.
\#
\normalsize
Moreover, the condition in \eqref{equ:prop_pert_P_cond_1} gives 
\#\label{equ:P_pert_final_trash_2}
2f(\phi+\ell)\|C\|\|\Delta\|+f^2\|C\|^2\|\Delta\|^2\leq \ell/2,
\#
which can be satisfied by the following condition on $\|\Delta\|$:
\#\label{equ:P_pert_final_trash_2_final}
\|\Delta\|\leq \min\bigg\{\frac{2(\phi+\ell)}{f\|C\|},\frac{\ell}{8f(\phi+\ell)\|C\|}\bigg\}. 
\#
Also, by letting 
\#\label{equ:P_pert_final_trash_3}
&2\alpha=2f(\|\tilde A_L\|+\|C\Delta\|) \leq 2f(\|\tilde A_L\|+\|C\|)\\
&\Longrightarrow \frac{\ell}{8fg(\ell+2{\alpha})}\geq \frac{\ell}{8fg[\ell+2f(\|\tilde A_L\|+\|C\|)]},\notag
\#
\normalsize 
the condition in \eqref{equ:veri_441} can thus be satisfied if  we let
\#\label{equ:P_pert_final_trash_4}
\frac{1}{\ell}\big(2\|R^v L\|\|\Delta\|+\|R^v\|\|\Delta\|^2\big)
+(p+1)\|C\|\|\Delta\|+\frac{\psi}{\ell}\|C\|^2\|\Delta\|^2\leq \frac{\ell}{8fg[\ell+2f(\|\tilde A_L\|+\|C\|)]}.
\#
Note that conditions \eqref{equ:P_pert_final_trash_3}-\eqref{equ:P_pert_final_trash_4} can be satisfied if
\#\label{equ:P_pert_final_trash_3_and_4_final}
\|\Delta\|\leq \min\bigg\{1,\frac{2\|R^v L\|+(p+1)\ell\|C\|}{\|R^v\|+\psi\|C\|^2},\frac{\ell}{16fg[\ell+2f(\|\tilde A_L\|+\|C\|)] (\|R^v L\|/\ell+2(p+1)\|C\|)}\bigg\}.
\#
Thus, under  \eqref{equ:P_pert_final_trash_1_final}, \eqref{equ:P_pert_final_trash_2_final}, and \eqref{equ:P_pert_final_trash_3_and_4_final}, the  bound  \eqref{equ:P_pert_trash_1} can be further written as 
\#\label{equ:P_L_pert_final_trash}
&\big\|P_{L'}^*-P_{ L}^*\big\|\leq \bigg[\frac{\|C\|}{(\phi+\theta)g}
+\frac{16\|R^vL\|}{\ell}+8p\|C\|\bigg]\cdot\|\Delta\|, 
\#
where the inequality follows by using the first bound of $\Delta$ in the  $\min$ of \eqref{equ:P_pert_final_trash_1_final}.  
It is straightforward to see that  all the upper bounds on $\|\Delta\|$ from  \eqref{equ:P_pert_final_trash_1_final}, \eqref{equ:P_pert_final_trash_2_final}, and \eqref{equ:P_pert_final_trash_3_and_4_final} are lower bounded above zero, since: i) $\ell$, $\theta$ and $\|C\|$ are all strictly above zero (see \eqref{equ:l_greater_0}), so are all the numerators of the bounds  in \eqref{equ:P_pert_final_trash_1_final}, \eqref{equ:P_pert_final_trash_2_final}, and \eqref{equ:P_pert_final_trash_3_and_4_final}; ii) the denominators of the bounds are all finite  and bounded above, due to the boundedness of $L$, i.e., the boundedness of $\Omega$,  and the boundedness of $P_L^*$ from Lemma \ref{lemma:optim_of_KL}. 
In addition, note that all the quantities used in the bounds on $\|\Delta\|$ are norms of matrices composed of $L$ and $P_L^*$, which are both  continuous functions of $L$ (see Lemma \ref{lemma:conti_PLs} on the continuity of $P_L^*$), over the compact set $\Omega$. Hence, there exists some constant $\cB^L_{\Omega}>0$, which is the infimum  of the bounds on $\|\Delta\|$ over $\Omega$. Also, from \eqref{equ:P_L_pert_final_trash}, there exists some 
\$
\cB_{\Omega}^P=\frac{\|C\|}{(\phi+\theta)g}
+\frac{16\|R^vL\|}{\ell}+8p\|C\|,
\$
such that $\|P_{L'}^*-P_L^*\|\leq \cB^P_{\Omega}\cdot\|L'-L\|$, which  completes the proof. 
\end{proof}

We then need to establish the perturbation   of $K(L)$  as in the following lemma. 

\begin{lemma}\label{lemma:K_L_Perturb}
	For any $L,L'\in \Omega$, recalling the definition of $K(L)$  in \eqref{equ:KL_def},  
	there exists some constant $\cB^L_{\Omega}>0$ such that 	if
	\#\label{equ:lemma_L_bnd}
	\|L'-L\|\leq \min\bigg\{\cB^L_{\Omega},\frac{\|B\|\cdot(\cB^P_{\Omega}\|\tilde A_{L}-BK(L)\|+\|P_{L}^*\|\|C\|)}{\cB^P_{\Omega}\|B\|\|C\|}\bigg\},
	\#
	 it follows that
		\$
	&\|K(L')-K(L)\|\leq \frac{2\|B\|\cdot(\cB^P_{\Omega}\|\tilde A_{L}-BK(L)\|+\|P_{L}^*\|\|C\|)}{\sigma_{\min}(R^u)}\cdot\|L'-L\|,
	\$
	where $\cB^L_{\Omega},\cB^P_{\Omega}$ are as defined in the proof of Proposition \ref{prop:P_L_Perturb}.
\end{lemma}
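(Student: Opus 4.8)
The plan is to obtain an exact closed form for $K(L')-K(L)$ and then bound its terms using the inner-Riccati perturbation estimate of Proposition \ref{prop:P_L_Perturb}. The first step is the algebraic identity
\[
K(L)=(R^u)^{-1}B^\top P_{L}^*\bigl(\tilde A_L-BK(L)\bigr),
\]
which follows from \eqref{equ:KL_def}: multiplying the definition of $K(L)$ through by $R^u+B^\top P_L^*B$ gives $(R^u+B^\top P_L^*B)K(L)=B^\top P_L^*\tilde A_L$, and subtracting $B^\top P_L^*BK(L)$ from both sides yields $R^uK(L)=B^\top P_L^*(\tilde A_L-BK(L))$.

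Next I would write down this identity for both $L$ and $L'$, abbreviate $D(L):=\tilde A_L-BK(L)$, and subtract. Using $\tilde A_{L'}-\tilde A_L=-C(L'-L)$ we get $D(L')-D(L)=-C(L'-L)-B\bigl(K(L')-K(L)\bigr)$; substituting this and moving the term $B^\top P_{L'}^*B\bigl(K(L')-K(L)\bigr)$ to the left-hand side collects the unknown difference with coefficient $R^u+B^\top P_{L'}^*B$, giving
\[
K(L')-K(L)=\bigl(R^u+B^\top P_{L'}^*B\bigr)^{-1}B^\top\Bigl[\bigl(P_{L'}^*-P_{L}^*\bigr)D(L)-P_{L'}^*C(L'-L)\Bigr].
\]
Here $R^u+B^\top P_{L'}^*B\ge R^u>0$ is invertible because $P_{L'}^*\ge 0$ exists (as $L'\in\Omega\subset\underline{\Omega}$, by Lemma \ref{lemma:optim_of_KL}), so $\bigl\|(R^u+B^\top P_{L'}^*B)^{-1}\bigr\|\le 1/\sigma_{\min}(R^u)$.

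Taking norms and invoking Proposition \ref{prop:P_L_Perturb} — which applies once $\|L'-L\|\le\cB^L_\Omega$ — gives $\|P_{L'}^*-P_L^*\|\le\cB^P_\Omega\|L'-L\|$ and hence $\|P_{L'}^*\|\le\|P_L^*\|+\cB^P_\Omega\|L'-L\|$, so
\[
\|K(L')-K(L)\|\le\frac{\|B\|}{\sigma_{\min}(R^u)}\Bigl[\cB^P_\Omega\|D(L)\|+\bigl(\|P_L^*\|+\cB^P_\Omega\|L'-L\|\bigr)\|C\|\Bigr]\|L'-L\|.
\]
Finally, the second half of the hypothesis \eqref{equ:lemma_L_bnd}, which unwinds to $\cB^P_\Omega\|C\|\,\|L'-L\|\le\cB^P_\Omega\|D(L)\|+\|P_L^*\|\|C\|$, lets me bound the leftover quadratic contribution $\cB^P_\Omega\|C\|\|L'-L\|^2$ by $\bigl(\cB^P_\Omega\|D(L)\|+\|P_L^*\|\|C\|\bigr)\|L'-L\|$; the bracket then collapses to $2\bigl(\cB^P_\Omega\|\tilde A_L-BK(L)\|+\|P_L^*\|\|C\|\bigr)$, which is exactly the claimed estimate.

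The real content — the perturbation analysis of the inner algebraic Riccati equation — is already packaged in Proposition \ref{prop:P_L_Perturb}, so what remains is essentially bookkeeping. The only point that needs care is the implicit appearance of $K(L')-K(L)$ on both sides after differencing the identity, which is handled cleanly by grouping it with coefficient $R^u+B^\top P_{L'}^*B$; beyond that, the main thing to watch is keeping straight which argument ($L$ versus $L'$) each of $P^*$, $\tilde A$, $K$, and $D$ carries. I do not anticipate a genuine obstacle.
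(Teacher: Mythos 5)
Your proposal is correct and follows essentially the same route as the paper: you difference the stationarity identity for $K(L)$ at $L$ and $L'$, solve for $K(L')-K(L)$ with coefficient $R^u+B^\top P_{L'}^*B$ (your closed form $(R^u+B^\top P_{L'}^*B)^{-1}B^\top[(P_{L'}^*-P_L^*)(\tilde A_L-BK(L))-P_{L'}^*C(L'-L)]$ is algebraically identical to the paper's expression), bound the inverse by $1/\sigma_{\min}(R^u)$, invoke Proposition \ref{prop:P_L_Perturb}, and absorb the quadratic term via the second entry of the $\min$ in \eqref{equ:lemma_L_bnd} to get the factor of $2$. No gaps.
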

\begin{proof}
	By definition, it holds that
	\$
	(R^u+B^\top P_{\tilde L}^*B)K(\tilde L)=B^\top P_{\tilde L}^*\tilde A_{\tilde L}
	\$
	for both $\tilde L=L$ and $\tilde L=L'$. Subtracting both equations yields
	\small
	\$
	B^\top(P_{L'}^*-P_L^*)BK(L) +(R^u+B^\top P_{L'}B)[K(L')-K(L)]=B^\top (P_{L'}^*-P_L^*)\tilde A_{L'}+B^\top P_L^*C(L-L'),
	\$
	\normalsize
	which further gives
	\$
	&\|K(L')-K(L)\|=\|(R^u+B^\top P_{L'}B)^{-1}B^\top (P_{L'}^*-P_L^*)[\tilde A_{L}-BK(L)+C(L-L')]\\ 
	&\qquad\qquad\qquad\qquad\qquad+(R^u+B^\top P_{L'}B)^{-1}B^\top P_{L}^*C(L-L')\|\\\notag
	&\leq \|(R^u+B^\top P_{L'}B)^{-1}\|\|B\|\big[\|P_{L'}^*-P_{L}^*\|\big(\|\tilde A_{L}-BK(L)\|+\|C\|\|L'-L\|\big)+\|P_{L}^*\|\|C\|\|L'-L\|\big]\\\notag
	&\leq \frac{\|B\|}{\sigma_{\min}(R^u)}\|P_{L'}^*-P_{L}^*\|\big(\|\tilde A_{L}-BK(L)\|+\|C\|\|L'-L\|\big)+\frac{\|B\|}{\sigma_{\min}(R^u)}\|P_{L}^*\|\|C\|\|L'-L\|.
	\$
	Combined with the bound on $\|P_{L'}^*-P_{L}^*\|$ in Proposition \ref{prop:P_L_Perturb}, we obtain that
	\$
	&\|K(L')-K(L)\|\leq \frac{\|B\|\cdot(\cB^P_{\Omega}\|\tilde A_{L}-BK(L)\|+\|P_{L}^*\|\|C\|)}{\sigma_{\min}(R^u)}\|L'-L\|+\frac{\cB^P_{\Omega}\|B\|\|C\|}{\sigma_{\min}(R^u)}\|L'-L\|^2,
	\$
	which combined with the bound on \eqref{equ:lemma_L_bnd} gives the  desired result. 
\end{proof}

Now we are ready to establish the perturbation of $\Sigma_L^*$. 
We start 
by defining a linear  operator on symmetric matrices $\cT_L^*(\cdot)$:
	\$
	\cT_L^*(X):=\sum_{t=0}^\infty [A-BK(L)-CL]^t X [A-BK(L)-CL]^{t^\top},
	\$
	and its induced norm as 
	\$
	\|\cT_L^*\|:=\sup_{X}\frac{\cT_L^*(X)}{\|X\|},
	\$
	where $\sup$ is taken over all non-zero symmetric matrices.
Also, we let $\Sigma_0=\EE(x_0x_0^\top)$. 
	Then we can show that the induced norm $\|\cT_L^*\|$ is bounded as follows.

\begin{lemma}\label{lemma:T_L_norm_bnd}
For any $L\in\Omega$, 
	the induced norm pf $\|\cT_L^*\|$ is bounded as
	\$
	\|\cT_L^*\|\leq \frac{\cC(K(L),L)}{\mu\cdot\zeta}.
	\$
\end{lemma}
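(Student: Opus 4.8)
The plan is to reduce the computation of the induced norm $\|\cT_L^*\|$ to evaluating $\cT_L^*$ at the identity matrix, and then to relate $\cT_L^*(\Ib)$ to the state correlation matrix $\Sigma_L^*$, which is already controlled by Lemma \ref{lemma:bound_P_L_Sigma_L}.

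First I would record the structural facts. Since $L\in\Omega\subseteq\underline{\Omega}$, Lemma \ref{lemma:optim_of_KL} guarantees that $P_L^*>0$ exists and that $(K(L),L)$ is stabilizing, i.e.\ $A-BK(L)-CL$ is a stable matrix; hence the infinite sum defining $\cT_L^*(X)$ converges for every symmetric $X$, and $\cT_L^*$ is a \emph{monotone} linear operator on symmetric matrices (it maps the positive semidefinite cone into itself). Moreover, unrolling the closed-loop dynamics $x_{t+1}=(A-BK(L)-CL)x_t$ with $x_0\sim\cD$ shows that $\Sigma_L^*=\cT_L^*(\Sigma_0)$, where $\Sigma_0:=\EE_{x_0\sim\cD}x_0x_0^\top$ and $\sigma_{\min}(\Sigma_0)=\mu$.

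The key step is the passage to the identity. Since $\Sigma_0\geq\mu\cdot\Ib$, monotonicity of $\cT_L^*$ gives $\Sigma_L^*=\cT_L^*(\Sigma_0)\geq\mu\cdot\cT_L^*(\Ib)$, hence $\cT_L^*(\Ib)\leq\mu^{-1}\Sigma_L^*$ and in particular $\|\cT_L^*(\Ib)\|\leq\mu^{-1}\|\Sigma_L^*\|$. Then, for any nonzero symmetric $X$ we have $-\|X\|\cdot\Ib\leq X\leq\|X\|\cdot\Ib$, and applying the monotone operator $\cT_L^*$ yields $-\|X\|\cdot\cT_L^*(\Ib)\leq\cT_L^*(X)\leq\|X\|\cdot\cT_L^*(\Ib)$; since a symmetric matrix sandwiched between $\pm A$ with $A\geq0$ satisfies $\|\cdot\|\leq\|A\|$ (because $\lambda_{\max}(\cT_L^*(X))\leq\|X\|\,\|\cT_L^*(\Ib)\|$ and likewise $\lambda_{\max}(-\cT_L^*(X))\leq\|X\|\,\|\cT_L^*(\Ib)\|$), we obtain $\|\cT_L^*(X)\|\leq\|X\|\cdot\|\cT_L^*(\Ib)\|$. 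Taking the supremum over $X$ gives $\|\cT_L^*\|\leq\|\cT_L^*(\Ib)\|\leq\mu^{-1}\|\Sigma_L^*\|$.

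Finally I would invoke Lemma \ref{lemma:bound_P_L_Sigma_L} with $K=K(L)$: since $L\in\Omega$ and $(K(L),L)$ is stabilizing, $\|\Sigma_L^*\|=\|\Sigma_{K(L),L}\|\leq\cC(K(L),L)/\zeta$. Combining this with the previous display yields $\|\cT_L^*\|\leq\cC(K(L),L)/(\mu\zeta)$, as claimed. There is no genuine obstacle in this argument; the only point that deserves a sentence of care is the elementary matrix inequality used above — that $-A\leq B\leq A$ with $A\geq0$ forces $\|B\|\leq\|A\|$ — which is precisely what lets positivity of the operator reduce the induced-norm bound to evaluation at $\Ib$.
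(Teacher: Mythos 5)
Your proof is correct and follows essentially the same route as the paper, which simply defers to Lemma 17 of \cite{fazel2018global} (positivity/monotonicity of the operator, evaluation at $\Ib$ via $\Sigma_0\geq\mu\cdot\Ib$ and $\Sigma_L^*=\cT_L^*(\Sigma_0)$) with the closed-loop matrix replaced by $A-BK(L)-CL$ and the bound $\|\Sigma_L^*\|\leq \cC(K(L),L)/\zeta$ from Lemma \ref{lemma:bound_P_L_Sigma_L}. You have just written out explicitly the sandwich argument that the cited lemma uses, so there is nothing to correct.
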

\begin{proof}
The proof mostly follows the proof of Lemma $17$ in \cite{fazel2018global}, except replacing $(A-BK)$ there by $A-BK(L)-CL$, and the upper bound of $\|\Sigma_L^*\|$ by $\cC(K(L),L)/\zeta$ due to Lemma \ref{lemma:bound_P_L_Sigma_L}. 
\end{proof}

We can also define another operator $\cF_L^*(X)$ as
\$
\cF_L^*(X)=[A-BK(L)-CL]X[A-BK(L)-CL]^\top, 
\$	
which, by the same argument as Lemma $18$ in \cite{fazel2018global}, gives that
\#\label{equ:rela_F_Ls_T_Ls}
\cT_L^*=(\Ib-\cF_L^*)^{-1},
\#
where $\Ib$ is the identity operator. 
Hence, the following proof is to find the bound of
\$
\|\Sigma_{L'}^*-\Sigma_{L}^*\|=\|(\cT_{L'}^*-\cT_L^*)(\Sigma_0)\|=\|[(\Ib-\cF_{L'}^*)^{-1}-(\Ib-\cF_{L}^*)^{-1}](\Sigma_0)\|.
\$
To this end, we first have the following bound on $\|\cF_L^*-\cF_{L'}^*\|$. 

\begin{lemma}\label{lemma:pert_F_L}
For any $L,L'\in\Omega$, it follows  that
	\$
&\|\cF_{L'}^*-\cF_{L}^*\|\leq  2\|A-BK(L)-CL\|\big(\|B\|\|K(L')-K(L)\|+\|C\|\|\Delta\|\big)\\
&\qquad\qquad\quad\quad+\|B\|^2\|K(L')-K(L)\|^2+\|C\|^2\|\Delta\|^2+2\|B\|\|C\|\|K(L')-K(L)\|\|\Delta\|.
	\$
\end{lemma}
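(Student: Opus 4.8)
The plan is to expand $\cF_{L'}^*-\cF_{L}^*$ as a difference of two congruence transformations and isolate the linear and quadratic terms in the perturbation of the closed-loop matrix. Write $M_L = A - BK(L) - CL$ and $M_{L'} = A - BK(L') - CL'$, so that $\cF_L^*(X) = M_L X M_L^\top$ and $\cF_{L'}^*(X) = M_{L'} X M_{L'}^\top$ for any symmetric $X$. Setting $E := M_{L'} - M_L = -B\bigl(K(L')-K(L)\bigr) - C(L'-L)$, I would substitute $M_{L'} = M_L + E$ and expand:
\[
\cF_{L'}^*(X) - \cF_L^*(X) = M_L X E^\top + E X M_L^\top + E X E^\top .
\]

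Next I would take spectral norms on both sides, using submultiplicativity and the triangle inequality to obtain
\[
\bigl\| \cF_{L'}^*(X) - \cF_L^*(X) \bigr\| \le \bigl( 2\|M_L\|\,\|E\| + \|E\|^2 \bigr)\|X\| ,
\]
and then bound $\|E\| \le \|B\|\,\|K(L')-K(L)\| + \|C\|\,\|L'-L\|$. Squaring this inequality and expanding the binomial gives $\|E\|^2 \le \|B\|^2\|K(L')-K(L)\|^2 + \|C\|^2\|\Delta\|^2 + 2\|B\|\|C\|\|K(L')-K(L)\|\|\Delta\|$, while $2\|M_L\|\|E\| \le 2\|M_L\|\bigl(\|B\|\|K(L')-K(L)\| + \|C\|\|\Delta\|\bigr)$. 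Dividing through by $\|X\|$ and taking the supremum over all nonzero symmetric $X$ in the definition of the induced norm $\|\cF_{L'}^*-\cF_L^*\|$ yields exactly the claimed bound, after substituting back $M_L = A - BK(L) - CL$ and $\Delta = L'-L$.

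There is essentially no obstacle here: this is a direct algebraic manipulation, analogous to the corresponding perturbation lemma for the closed-loop operator in \cite{fazel2018global}. The only points warranting care are that the cross term $E X E^\top$ contributes $\|E\|^2\|X\|$ (not $2\|E\|^2\|X\|$), and that working with symmetric test matrices $X$ is consistent with the definition of $\|\cF_L^*\|$ given immediately above the lemma. Note that this particular step does not invoke any $\Omega$-specific boundedness; the compactness of $\Omega$ and the bounds on $P_L^*$, $K(L)$, and $\|A-BK(L)-CL\|$ (via Lemmas \ref{lemma:optim_of_KL}, \ref{lemma:bound_P_L_Sigma_L}, and \ref{lemma:K_L_Perturb}) enter only when this estimate is subsequently combined with \eqref{equ:rela_F_Ls_T_Ls} and Lemma \ref{lemma:T_L_norm_bnd} to conclude the Lipschitz-type perturbation bound on $\Sigma_L^*$ in Lemma \ref{lemma:perturb_Sigma_L}.
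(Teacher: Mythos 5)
Your proof is correct and is essentially identical to the paper's: both write the closed-loop matrix perturbation $E=-B\bigl(K(L')-K(L)\bigr)-C\Delta$, expand $\cF_{L'}^*(X)-\cF_L^*(X)$ into the two cross terms plus $EXE^\top$, and conclude by submultiplicativity and the triangle inequality. No meaningful difference in approach.
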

\begin{proof}
Let $\Delta=L'-L$, then for any symmetric matrix $X$,
\$
(\cF_{L'}^*-\cF_{L}^*)(X)=&-[A-BK(L)-CL]X\big\{B[K(L')-K(L)]+C\Delta\big\}^\top\\
&\quad -\big\{B[K(L')-K(L)]+C\Delta\big\}X[A-BK(L)-CL]^\top\\
&\quad +\big\{B[K(L')-K(L)]+C\Delta\big\}X\big\{B[K(L')-K(L)]+C\Delta\big\}^\top,
\$
which leads to the desired norm bound by using $\|AX\|\leq \|A\|\|X\|$ for any operator  $A$.
\end{proof}

Moreover, we have the following argument similar to Lemma $20$ in \cite{fazel2018global}. 

\begin{lemma}\label{lemma:lemma_20_ge_rong}
	If $\|\cT_L^*\|\|\cF_{L'}^*-\cF_{L}^*\|\leq 1/2$, and both $(K(L'),L')$ and $(K(L),L)$ are stabilizing.  Then
	\$
	\|(\cT_{L'}^*-\cT_{L}^*)(\Sigma)\|\leq 2\|\cT_{L}^*\|\|\cF_{L'}^*-\cF_{L}^*\|\|\cT_{L}^*(\Sigma)\|\leq 2\|\cT_{L}^*\|^2\|\cF_{L'}^*-\cF_{L}^*\|\|\Sigma\|.
	\$
\end{lemma}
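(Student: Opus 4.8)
The plan is to treat the statement as a standard resolvent (Neumann-series) perturbation bound for the linear operators $\cT_L^*$ and $\cT_{L'}^*$ acting on the space of symmetric $d\times d$ matrices, making crucial use of the identity $\cT_L^*=(\Ib-\cF_L^*)^{-1}$ recorded in \eqref{equ:rela_F_Ls_T_Ls} (and likewise $\cT_{L'}^*=(\Ib-\cF_{L'}^*)^{-1}$). Both inverses exist because $(K(L),L)$ and $(K(L'),L')$ are stabilizing, so $\cF_L^*$ and $\cF_{L'}^*$ have spectral radius strictly below one and $\cT_L^*,\cT_{L'}^*$ are well-defined bounded operators; this is the only place the stabilizing hypothesis actually enters.

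First I would fix an arbitrary symmetric matrix $\Sigma$, set $X=\cT_{L'}^*(\Sigma)$ and $D=\cF_{L'}^*-\cF_L^*$, and write $\Ib-\cF_{L'}^*=(\Ib-\cF_L^*)-D$. Applying this operator to $X$ gives $(\Ib-\cF_L^*)(X)=\Sigma+D(X)$, and then composing with $\cT_L^*$ yields the fixed-point identity $X=\cT_L^*(\Sigma)+(\cT_L^*\circ D)(X)$, equivalently $(\cT_{L'}^*-\cT_L^*)(\Sigma)=X-\cT_L^*(\Sigma)=(\cT_L^*\circ D)(X)$. Next I would use the hypothesis $\|\cT_L^*\|\,\|\cF_{L'}^*-\cF_L^*\|\le 1/2$ together with submultiplicativity of the induced operator norm to obtain $\|\cT_L^*\circ D\|\le\|\cT_L^*\|\,\|D\|\le 1/2$; feeding this into the fixed-point identity and summing the resulting Neumann series gives $\|X\|\le(1-1/2)^{-1}\|\cT_L^*(\Sigma)\|=2\,\|\cT_L^*(\Sigma)\|$. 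Finally, combining $(\cT_{L'}^*-\cT_L^*)(\Sigma)=(\cT_L^*\circ D)(X)$ with $\|(\cT_L^*\circ D)(X)\|\le\|\cT_L^*\|\,\|D\|\,\|X\|$ and $\|X\|\le 2\|\cT_L^*(\Sigma)\|$ yields the first asserted inequality $\|(\cT_{L'}^*-\cT_L^*)(\Sigma)\|\le 2\,\|\cT_L^*\|\,\|\cF_{L'}^*-\cF_L^*\|\,\|\cT_L^*(\Sigma)\|$, and the second follows at once from $\|\cT_L^*(\Sigma)\|\le\|\cT_L^*\|\,\|\Sigma\|$.

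I do not expect a genuine obstacle here: the only care needed is bookkeeping, namely (i) confirming that $X=\cT_{L'}^*(\Sigma)$ and $D(X)$ remain in the space of symmetric matrices so that all applications of $\|\cT_L^*\|$ and $\|D\|$ are legitimate, and (ii) justifying that the operator norm on this space is submultiplicative, so that $\|\cT_L^*\circ D\|\le\|\cT_L^*\|\,\|D\|$ and the Neumann-series estimate $\|(\Ib-\cT_L^*\circ D)^{-1}\|\le 2$ are valid. This argument mirrors Lemma~$20$ in \cite{fazel2018global}, with $A-BK$ there replaced by $A-BK(L)-CL$.
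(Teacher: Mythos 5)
Your argument is correct: the fixed-point identity $X=\cT_L^*(\Sigma)+(\cT_L^*\circ D)(X)$ with $X=\cT_{L'}^*(\Sigma)$, the bound $\|X\|\le 2\|\cT_L^*(\Sigma)\|$ from $\|\cT_L^*\|\|D\|\le 1/2$, and submultiplicativity give exactly the claimed inequalities. This is essentially the same resolvent-perturbation argument the paper relies on — it omits the proof and defers to Lemma 20 of \cite{fazel2018global}, whose proof is precisely this Neumann-series/fixed-point estimate with $A-BK$ replaced here by $A-BK(L)-CL$.
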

\begin{proof}
The proof follows directly from that of Lemma $20$ in \cite{fazel2018global}, which is omitted here for brevity.   
\end{proof}

We are now ready to prove the   perturbation of $\Sigma_L^*$. 
To simplify the notation, let
\$
\cB_{\Omega}^K=\frac{2\|B\|\cdot\big(\cB^P_{\Omega}\|\tilde A_{L}-BK(L)\|+\|P_{L}^*\|\|C\|\big)}{\sigma_{\min}(R^u)},
\$
then $\cB_{\Omega}^K>0$. By Lemmas \ref{lemma:K_L_Perturb} and  \ref{lemma:pert_F_L}, for any $L,L'\in\Omega$, letting $\Delta=L'-L$,  if
\small
\$
\|\Delta\|\leq \min\Bigg\{\cB^L_{\Omega},\frac{\|B\|\big[\cB^P_{\Omega}\|\tilde A_{L}-BK(L)\|+\|P_{L}^*\|\|C\|\big]}{\cB^P_{\Omega}\|B\|\|C\|},\frac{2\big(\|\tilde A_L-BK(L)\|+1\big)\big(\cB_{\Omega}^K\|B\|+\|C\|\big)}{\big(\cB_{\Omega}^K\big)^2\|B\|^2+\|C\|^2+2\cB_{\Omega}^K\|B\|\|C\|}\Bigg\},
\$
\normalsize
then 
\$
&\|\cF_{L'}^*-\cF_{L}^*\|\leq  2\|\tilde A_L-BK(L)\|\big(\cB_{\Omega}^K\|B\|\|\Delta\|+\|C\|\|\Delta\|\big)\\
&\qquad\qquad\qquad\qquad+\big(\cB_{\Omega}^K\big)^2\|B\|^2\|\Delta\|^2+\|C\|^2\|\Delta\|^2+2\|B\|\|C\|\cB_{\Omega}^K\|\Delta\|^2\\
&\quad\leq  2\big(\|\tilde A_L-BK(L)\|+1\big)\big(\cB_{\Omega}^K\|B\|\|\Delta\|+\|C\|\|\Delta\|\big)\\
&\qquad\qquad\qquad\qquad+\big(\cB_{\Omega}^K\big)^2\|B\|^2\|\Delta\|^2+\|C\|^2\|\Delta\|^2+2\|B\|\|C\|\cB_{\Omega}^K\|\Delta\|^2\\
&\quad\leq  4\big(\|\tilde A_L-BK(L)\|+1\big)\big(\cB_{\Omega}^K\|B\|+\|C\|\big)\cdot\|\Delta\|,
\$
where the first inequality uses Lemma \ref{lemma:K_L_Perturb}, and the second inequality is due to the  third term in the $\min$ of the upper bound on $\|\Delta\|$. This completes the proof of Lemma \ref{lemma:perturb_Sigma_L}.
\end{proof}

%
%
 
%

\begin{lemma}\label{lemma:dist_compact}
	For any disjoint sets $\cA,\cB\subseteq\RR^{m\times n}$, if $\cA$ is compact, and if $\cB$ is closed, then there exists some $\omega>0$, such that for any $A\in \cA$ and $B\in\cB$, $\|A-B\|\geq \omega$.
\end{lemma}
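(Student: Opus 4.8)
The statement to prove is Lemma~\ref{lemma:dist_compact}: for disjoint sets $\cA,\cB\subseteq\RR^{m\times n}$ with $\cA$ compact and $\cB$ closed, there exists $\omega>0$ such that $\|A-B\|\geq\omega$ for all $A\in\cA$, $B\in\cB$.

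\textbf{Approach.} The plan is to use a standard compactness argument on the distance function. First I would define $d(A,\cB) := \inf_{B\in\cB}\|A-B\|$ for each $A\in\cA$, and observe that the map $A\mapsto d(A,\cB)$ is continuous on $\cA$ (indeed $1$-Lipschitz, by the triangle inequality: $|d(A_1,\cB)-d(A_2,\cB)|\leq\|A_1-A_2\|$, which is immediate from $\|A_1-B\|\leq\|A_1-A_2\|+\|A_2-B\|$ and taking infima). Since $\cA$ is compact and $d(\cdot,\cB)$ is continuous and real-valued, it attains its infimum at some point $A^*\in\cA$, so $\omega := \inf_{A\in\cA} d(A,\cB) = d(A^*,\cB)$.

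\textbf{Key step: $\omega>0$.} It remains to show $d(A^*,\cB)>0$. Suppose for contradiction that $d(A^*,\cB)=0$. Then there is a sequence $\{B_k\}\subseteq\cB$ with $\|A^*-B_k\|\to 0$, i.e.\ $B_k\to A^*$. Since $\cB$ is closed, the limit $A^*$ lies in $\cB$. But $A^*\in\cA$, contradicting the disjointness $\cA\cap\cB=\emptyset$. (If $\cB=\emptyset$ the statement is vacuous, or one can take $\omega=1$; I would mention this trivial case only if needed, since $\cB$ is nonempty in all applications.) Hence $\omega=d(A^*,\cB)>0$, and by definition of $\omega$ as the infimum over $\cA$, we have $\|A-B\|\geq d(A,\cB)\geq\omega$ for every $A\in\cA$ and $B\in\cB$, completing the proof.

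\textbf{Main obstacle.} There is essentially no obstacle here; this is a textbook fact about the distance between a compact set and a closed set in a finite-dimensional normed space (the matrix space $\RR^{m\times n}$ with any norm is such a space, and $\|\cdot\|$ denotes the induced $2$-norm as fixed in the Notation paragraph). The only points requiring a line of care are (i) justifying continuity of $d(\cdot,\cB)$ via the Lipschitz estimate, and (ii) using sequential compactness/closedness correctly to derive the contradiction. Both are routine, so the ``hard part'' is merely writing it cleanly; no new ideas beyond standard real analysis are needed.
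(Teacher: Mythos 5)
Your proof is correct and rests on the same ingredients as the paper's: compactness of $\cA$, closedness of $\cB$, and disjointness, combined in a contradiction via a sequence in $\cB$ converging to a point of $\cA$. The only cosmetic difference is packaging: you minimize the $1$-Lipschitz distance function $d(\cdot,\cB)$ over $\cA$ and derive the contradiction at the minimizer, whereas the paper directly takes sequences $A_n\in\cA$, $B_n\in\cB$ with $\|A_n-B_n\|\to 0$ and extracts a convergent subsequence of $\{A_n\}$; both are complete and routine.
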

\begin{proof}
	Assume that the conclusion does not hold. Let $A_n\in\cA$ and $B_n\in\cB$ be chosen such that $\|A_n-B_n\|\to 0$ as $n\to \infty$. Since $\cA$ is compact, there exists a convergent subsequence of  $\{A_n\}_{n\geq 0}$, denoted by $\{A_{n_m}\}_{m\geq 0}$, that converges to some $A\in\cA$. Hence, we have
	\$
	\|A-B_{n_m}\|\leq \|A-A_{n_m}\|+\|A_{n_m}-B_{n_m}\|\to 0,
	\$ 
	as  $m\to\infty$. This implies that $A$ is a limit point of $\cB$. Since $\cB$ is closed, we have $A\in\cB$, which leads to a contradiction and thus completes the proof. 
\end{proof}

\newpage
\clearpage


\section{Simulation Details}\label{sec:append_simu}

\vspace{3pt}
{\noindent \bf Alternating-Gradient (AG) Methods.}

AG  methods follows the idea in \cite{nouiehed2019solving}, which are based on our nested-gradient methods, but at each outer-loop iteration, the  inner-loop updates only perform a finite number of iterations, instead of converging to the exact solution $K(L_t)$ as nested-gradient methods. The updates are given in Algorithm \ref{alg:AG}, whose performance is showcased in Figures  \ref{fig:case_1_AG} and \ref{fig:case_2_AG}, showing that AG methods converge to the NE in both settings.

\begin{algorithm}[!th]
	\caption{\textbf{Alternating-Gradient (AG) Methods}} 
	\label{alg:AG}
	\begin{algorithmic}[1]
		\STATE Input: $(K_0,L_0)$ that is  stabilizing
		\FOR{$t = 0, \cdots, T-1$}
		\FOR{$\tau  = 0, \cdots \cT-1$}
		\STATE \begin{flalign*}
{\rm \textbf{Policy Gradient:}}\qquad ~~  	 K_{\tau+1}&=K_{\tau}-	\eta \nabla_K{\cC}(K_{\tau},L_t),
\end{flalign*} 
        \STATE Or
        \begin{flalign*}
{\rm \textbf{Natural Policy Gradient:}}\qquad ~~  	 K_{\tau+1}&=K_\tau-	\eta \nabla_K{\cC}(K_\tau,L_t)\Sigma^{-1}_{K_\tau,L_t},
\end{flalign*} 
        \STATE Or
        \begin{flalign*}
{\rm \textbf{Gauss-Newton:}}\qquad ~~  	 K_{\tau+1}&=K_\tau-	\eta (R^u+B^\top P_{K_\tau,L_t}B)^{-1}\nabla_K{\cC}(K_\tau,L_t)\Sigma^{-1}_{K_\tau,L_t},
\end{flalign*}
		\ENDFOR	
		\STATE \begin{flalign*}
{\rm \textbf{Policy Gradient:}}\qquad ~~  	 
L_{t+1}&=L_t+	\eta \nabla_L{\cC}(K_{\cT},L_t),
\end{flalign*} 
        \STATE Or
        \begin{flalign*}
{\rm \textbf{Natural Policy Gradient:}}\qquad ~~ 
L_{t+1}&=L_t+	\eta \nabla_L{\cC}(K_{\cT},L_t)\Sigma^{-1}_{K_{\cT},L_t},
\end{flalign*} 
        \STATE Or
        \begin{flalign*}
{\rm \textbf{Gauss-Newton:}}\qquad ~~  	 
L_{t+1}&=L_t+	\eta(R^v-C^\top P_{K_{\cT},L_t}C)^{-1} \nabla_L{\cC}(K_{\cT},L_t)\Sigma^{-1}_{K_{\cT},L_t},
\end{flalign*}	
		\ENDFOR
		\STATE Return the iterate $(K_{\cT},L_T)$.	
		\end{algorithmic}
\end{algorithm}

\begin{algorithm}[!t]
	\caption{\textbf{Gradient-Descent-Ascent (GDA) Methods}} 
	\label{alg:GDA}
	\begin{algorithmic}[1]
		\STATE Input: $(K_0,L_0)$ that is  stabilizing
		\FOR{$t = 0, \cdots, T-1$}
		\STATE \begin{flalign*}
{\rm \textbf{Policy Gradient:}}\qquad ~~  	 K_{t+1}&=K_t-	\eta \nabla_K{\cC}(K_t,L_t)\\
L_{t+1}&=L_t+	\eta \nabla_L{\cC}(K_t,L_t),
\end{flalign*} 
        \STATE Or
        \begin{flalign*}
{\rm \textbf{Natural Policy Gradient:}}\qquad ~~  	 K_{t+1}&=K_t-	\eta \nabla_K{\cC}(K_t,L_t)\Sigma^{-1}_{K_t,L_t}\\
L_{t+1}&=L_t+	\eta \nabla_L{\cC}(K_t,L_t)\Sigma^{-1}_{K_t,L_t},
\end{flalign*} 
        \STATE Or
        \begin{flalign*}
{\rm \textbf{Gauss-Newton:}}\qquad ~~  	 K_{t+1}&=K_t-	\eta (R^u+B^\top P_{K_t,L_t}B)^{-1}\nabla_K{\cC}(K_t,L_t)\Sigma^{-1}_{K_t,L_t}\\
L_{t+1}&=L_t+	\eta(R^v-C^\top P_{K_t,L_t}C)^{-1} \nabla_L{\cC}(K_t,L_t)\Sigma^{-1}_{K_t,L_t},
\end{flalign*}
		\ENDFOR
		\STATE Return the iterate $(K_T,L_T)$.	
		\end{algorithmic}
\end{algorithm}

\begin{figure*}[!t]
	\centering
	\begin{tabular}{ccc}
		\hskip-6pt\includegraphics[width=0.323\textwidth]{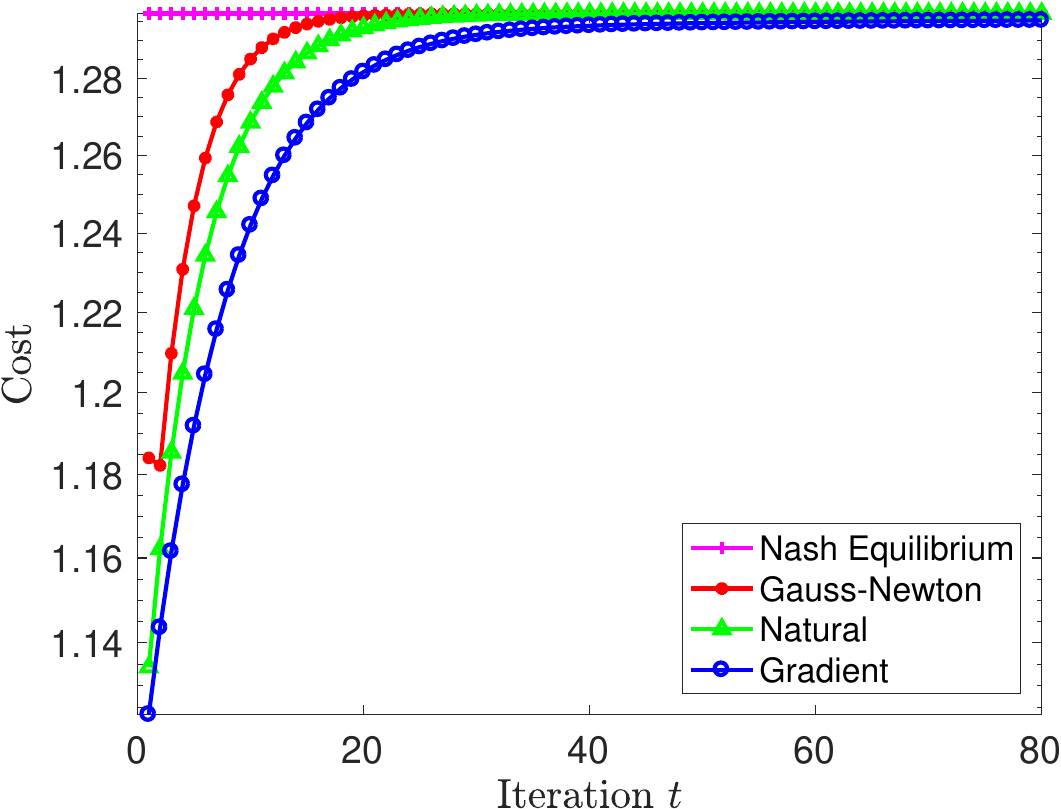}
		&
		\hskip-6pt\includegraphics[width=0.323\textwidth]{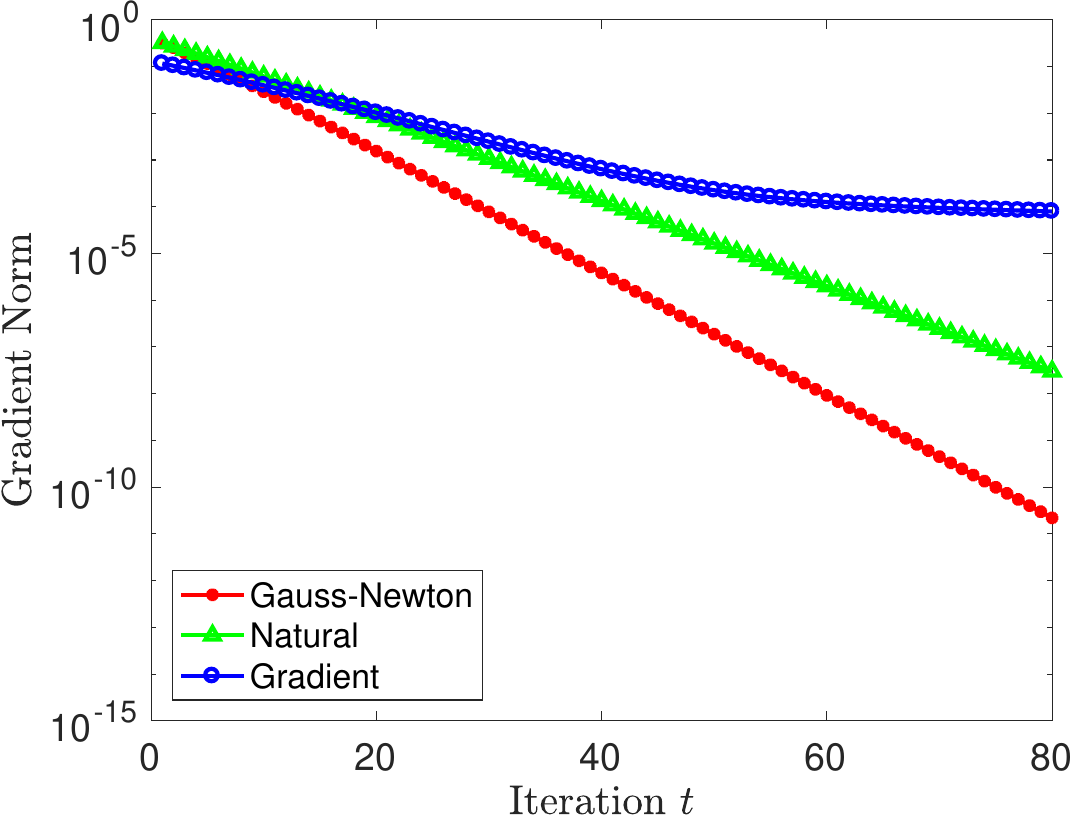}
		& 
		\hskip-6pt\includegraphics[width=0.323\textwidth]{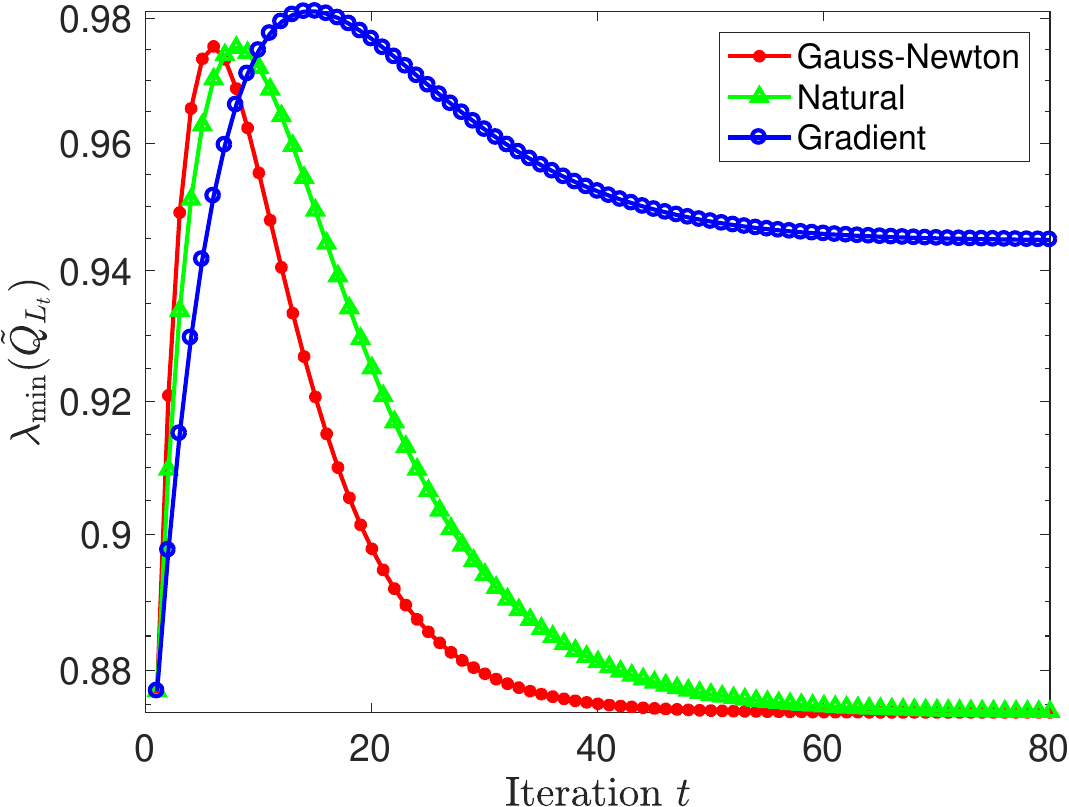}\\
		\hskip-8pt(a) $\cC(K(L),L)$ & \hskip 0pt(b) Grad. Mapp.   Norm Square  & \hskip2pt (c) $\lambda_{\min}(\tilde Q_L)$ 
	\end{tabular}
	\caption{Performance of the three AG methods  for {\bf Case $1$} where Assumption \ref{assum:invertibility} ii) is satisfied. (a) shows the monotone convergence of the expected cost $\cC(K(L),L)$ to the NE cost $\cC(K^*,L^*)$; (b) shows the convergence of the gradient mapping norm square; (c) shows the change of the smallest eigenvalue of $\tilde Q_L=Q-L^\top R^vL$.
}
	\label{fig:case_1_AG}
\end{figure*}

\begin{figure*}[!t]
	\centering
	\begin{tabular}{ccc}
		\hskip-6pt\includegraphics[width=0.323\textwidth]{figs/alterna_grad_cost_2.pdf}
		&
		\hskip-6pt\includegraphics[width=0.323\textwidth]{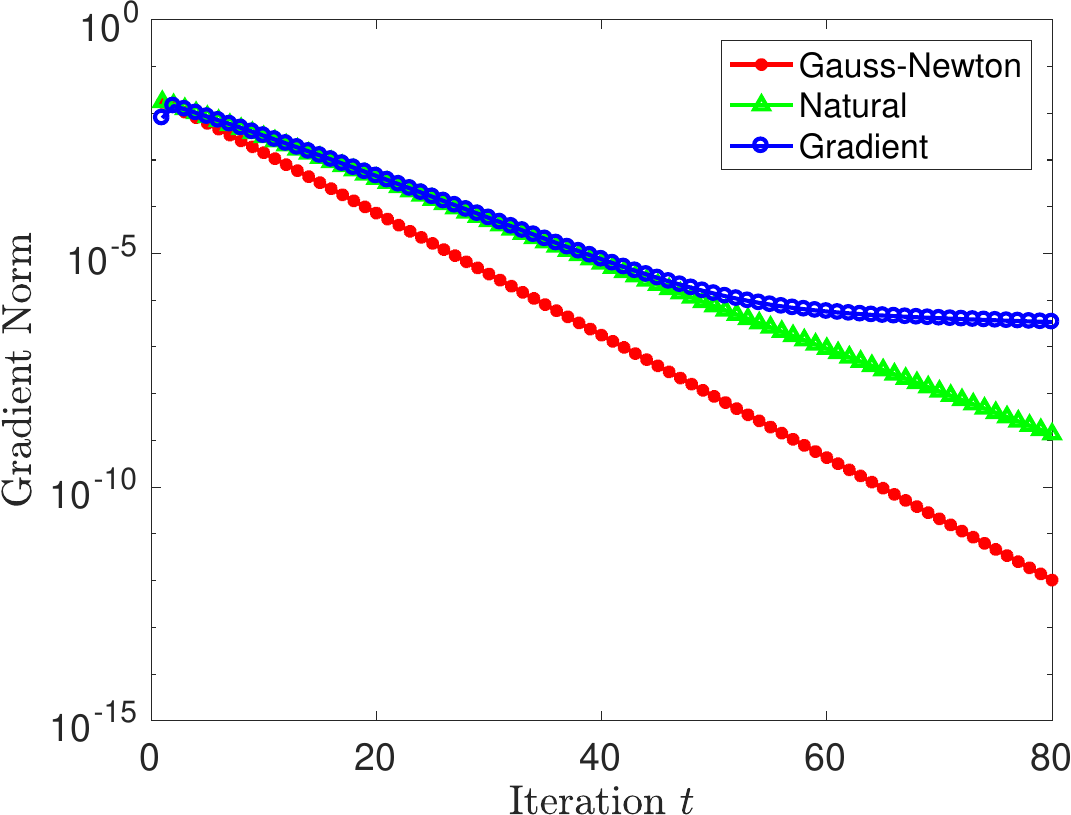}
		& 
		\hskip-6pt\includegraphics[width=0.323\textwidth]{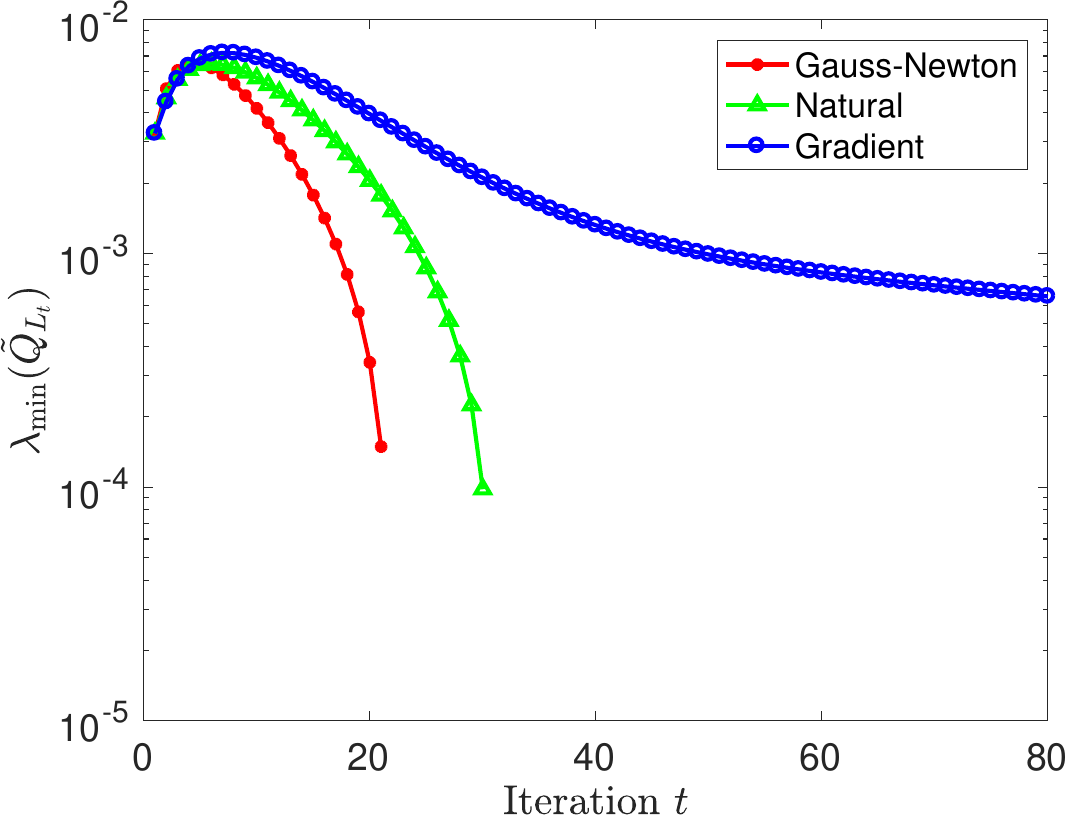}\\
		\hskip-8pt(a) $\cC(K(L),L)$ & \hskip 0pt(b) Grad. Mapp.   Norm Square  & \hskip2pt (c) $\lambda_{\min}(\tilde Q_L)$ 
	\end{tabular}
	\caption{Performance of the three AG  methods  for {\bf Case $2$} where Assumption \ref{assum:invertibility} ii) is not satisfied. (a) shows the monotone convergence of the expected cost $\cC(K(L),L)$ to the NE cost $\cC(K^*,L^*)$; (b) shows the convergence of the gradient mapping norm square; (c) shows the change of the smallest eigenvalue of $\tilde Q_L=Q-L^\top R^vL$.
}
	\label{fig:case_2_AG}
\end{figure*}

\vspace{3pt}
{\noindent \bf Gradient-Descent-Ascent (GDA) Methods.} 
  
  Note that GDA and its variants with simultaneous updates have   drawn increasing attention recently for solving saddle-point problems  \citep{cherukuri2017saddle,daskalakis2018limit,mazumdar2019finding,jin2019minmax}, mainly due to their popularity in training GANs. The algorithms perform policy gradient descent for the minimizer and ascent for the maximizer. The updates are given in Algorithm \ref{alg:GDA}, whose performance is  showcased in Figures  \ref{fig:case_1_GDA} and \ref{fig:case_2_GDA}, showing their  convergence to the NE in both settings.

\begin{figure*}[!t]
	\centering
	\begin{tabular}{ccc}
		\hskip-6pt\includegraphics[width=0.323\textwidth]{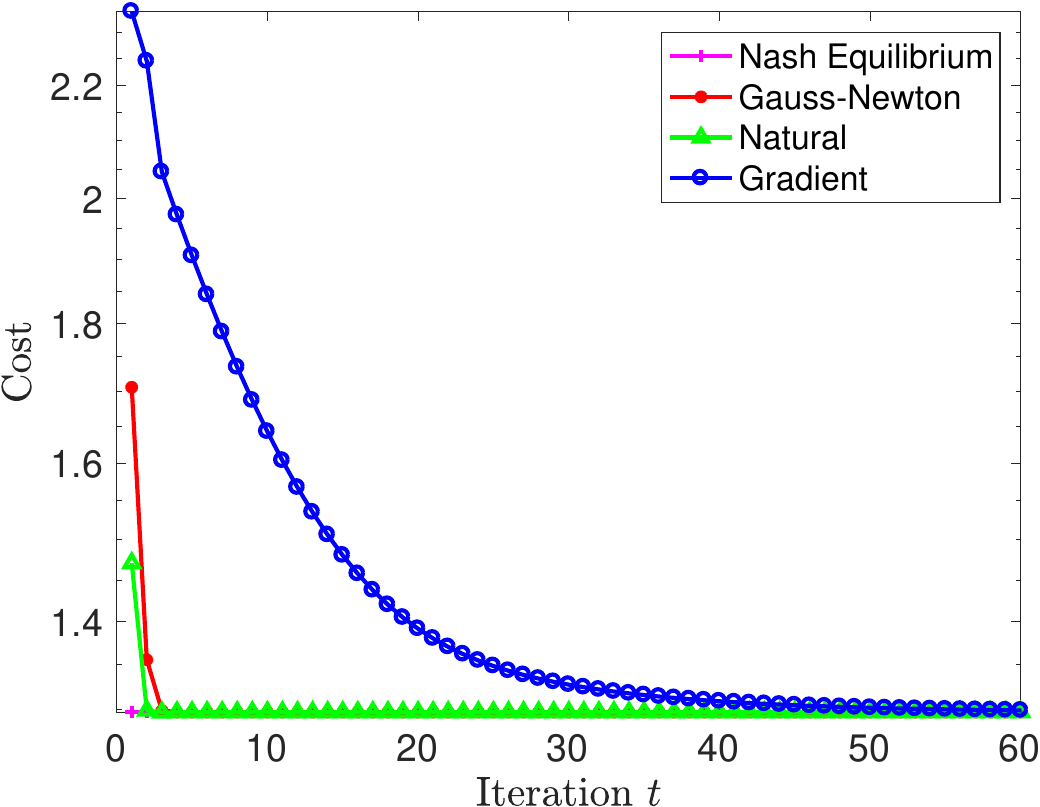}
		&
		\hskip-6pt\includegraphics[width=0.323\textwidth]{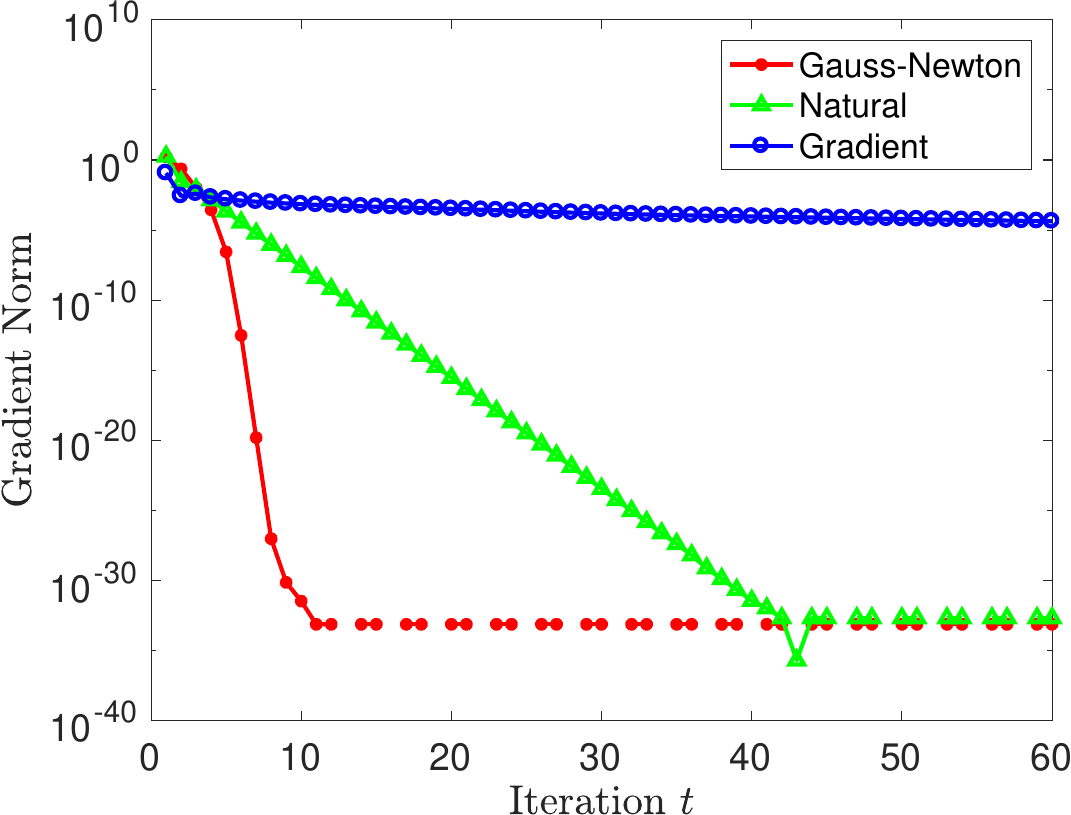}
		& 
		\hskip-6pt\includegraphics[width=0.323\textwidth]{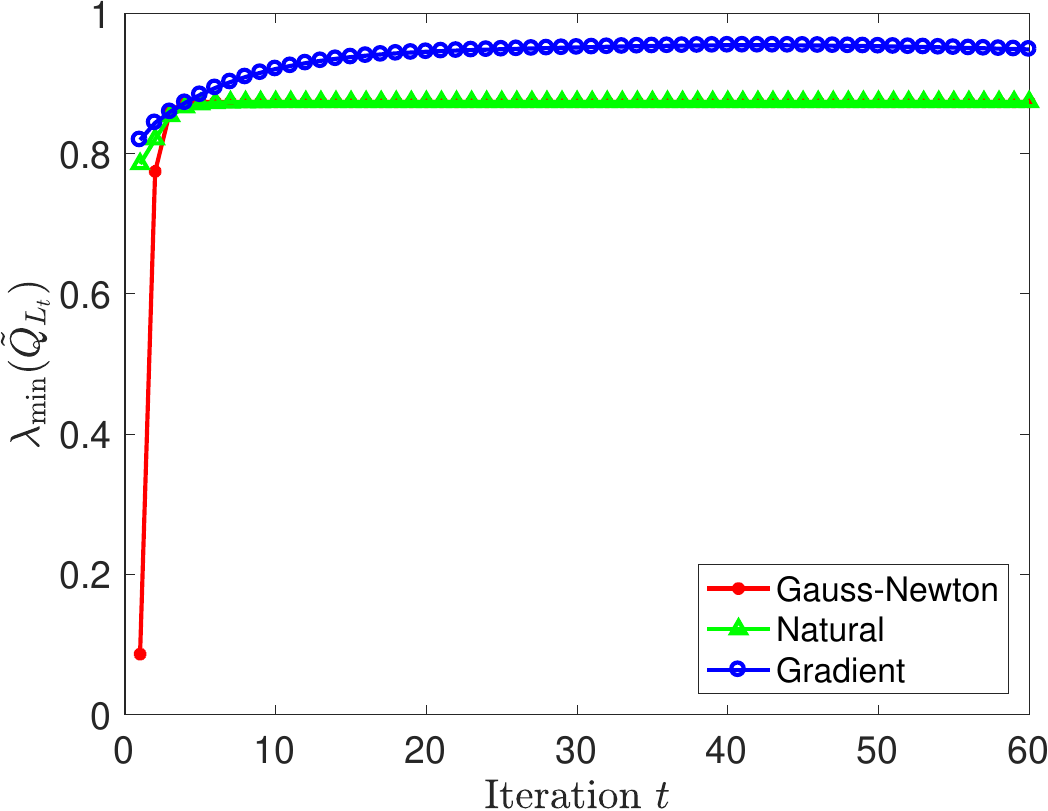}\\
		\hskip-8pt(a) $\cC(K(L),L)$ & \hskip 0pt(b) Grad. Mapp.   Norm Square  & \hskip2pt (c) $\lambda_{\min}(\tilde Q_L)$ 
	\end{tabular}
	\caption{Performance of the three GDA methods  for {\bf Case $1$} where Assumption \ref{assum:invertibility} ii) is satisfied. (a) shows the monotone convergence of the expected cost $\cC(K(L),L)$ to the NE cost $\cC(K^*,L^*)$; (b) shows the convergence of the gradient mapping norm square; (c) shows the change of the smallest eigenvalue of $\tilde Q_L=Q-L^\top R^vL$.
}
	\label{fig:case_1_GDA}
\end{figure*}

\begin{figure*}[!t] 
	\centering
	\begin{tabular}{ccc}
		\hskip-6pt\includegraphics[width=0.323\textwidth]{figs/simultan_grad_cost_2.pdf}
		&
		\hskip-6pt\includegraphics[width=0.323\textwidth]{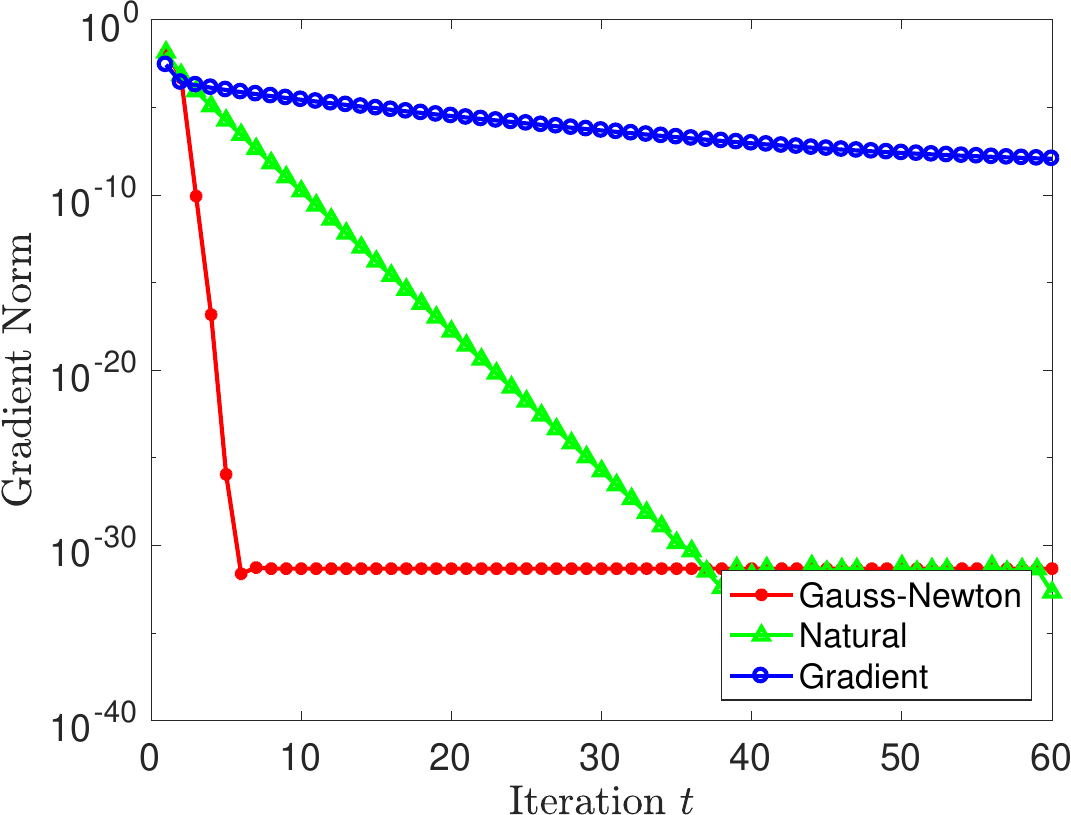}
		& 
		\hskip-6pt\includegraphics[width=0.323\textwidth]{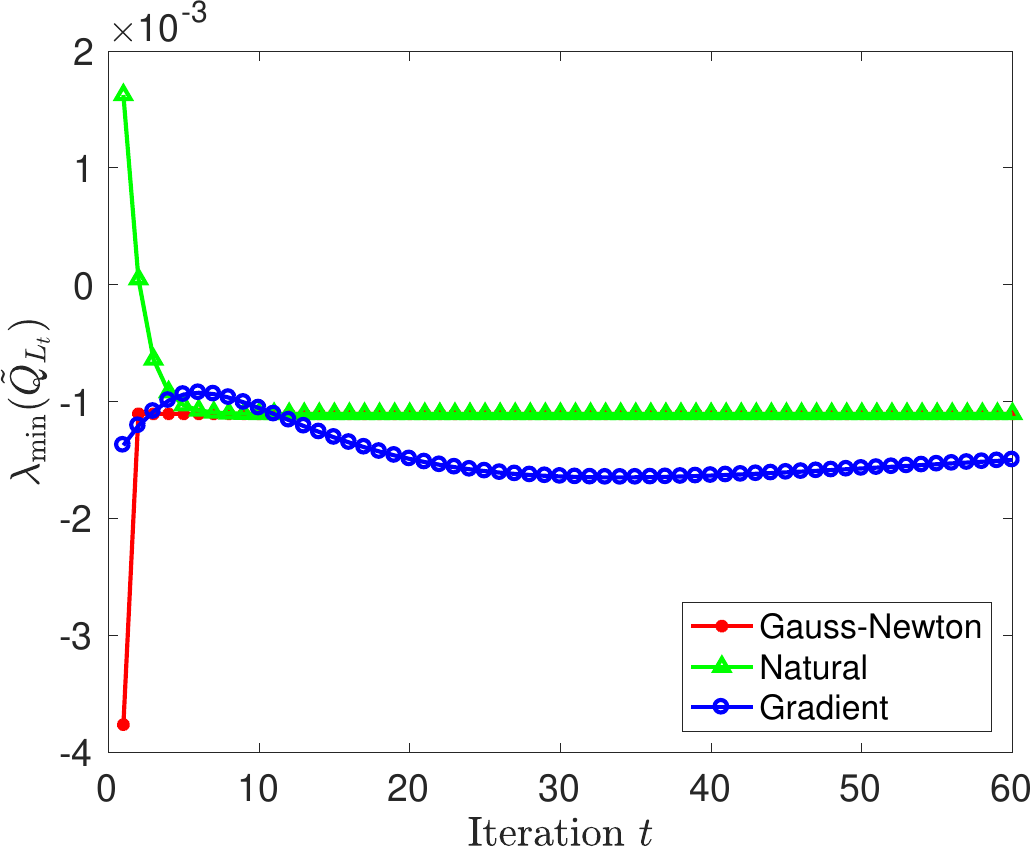}\\
		\hskip-8pt(a) $\cC(K(L),L)$ & \hskip 0pt(b) Grad. Mapp.   Norm Square  & \hskip2pt (c) $\lambda_{\min}(\tilde Q_L)$ 
	\end{tabular}
	\caption{Performance of the three GDA methods  for {\bf Case $2$} where Assumption \ref{assum:invertibility} ii) is not satisfied. (a) shows the monotone convergence of the expected cost $\cC(K(L),L)$ to the NE cost $\cC(K^*,L^*)$; (b) shows the convergence of the gradient mapping norm square; (c) shows the change of the smallest eigenvalue of $\tilde Q_L=Q-L^\top R^vL$.
}
	\label{fig:case_2_GDA}
\end{figure*}

\end{document}